\documentclass[11pt]{article}
\marginparwidth 0pt
\oddsidemargin  0pt
\evensidemargin  0pt
\marginparsep 0pt
\topmargin -2.0truecm
\textheight 24.5 truecm
\textwidth 16.0truecm

\usepackage{xcolor}
\usepackage{longtable}
\usepackage{supertabular}
\usepackage{pspicture}
\usepackage{filecontents}
\usepackage{pst-bar}
\usepackage{bm,amsfonts,amssymb,amsmath,float}
\usepackage{sectsty}
\usepackage{colortbl}
\usepackage{multirow}
\sectionfont{\large}\subsectionfont{\normalsize}
\usepackage[sort&compress,square,numbers]{natbib}
\usepackage{latexsym}
\usepackage{epsfig}
\usepackage{epstopdf}
\usepackage{threeparttable}
\usepackage{caption}
\usepackage{enumerate}
\usepackage{diagbox}
\usepackage{ulem}
\usepackage{bm}
\usepackage{mathtools}
\usepackage{enumitem}
\usepackage{algorithm}
\usepackage{algpseudocode}
\usepackage{afterpage}
\usepackage{bbm}
\usepackage{algpseudocodex}
\usepackage{placeins}
\usepackage{makecell}
\usepackage{appendix}

\newenvironment{proof}{\noindent{\bf Proof:}}{\hfill\fbox{}\vspace*{1mm}}
\captionsetup{margin=10pt,font=footnotesize,labelfont=bf,labelsep=period}
\captionsetup[figure]{name=Fig.}

\newtheorem{theorem}{Theorem}[section]
\newtheorem{example}[theorem]{Example}
\newtheorem{lemma}[theorem]{Lemma}

\newtheorem{proposition}[theorem]{Proposition}
\newtheorem{corollary}[theorem]{Corollary}

\DeclareMathOperator*{\argmax}{argmax}
\DeclareMathOperator*{\argmin}{argmin}

\makeatletter
\newcommand\nextitem[1]{%
  \setcounter{\@enumctr}{#1}%
  \addtocounter{\@enumctr}{-1}%
}
\makeatother

%%%% add by CWW for editing %%%
\usepackage{ulem} %\xout
\usepackage[mathlines, displaymath]{lineno} % add number lines

%% end %%

\begin{document}
% customizing labels of enumerate lists.
\renewcommand{\labelenumi}{\arabic{enumi}.}
\renewcommand{\labelenumii}{\arabic{enumi}.\arabic{enumii}}
\renewcommand{\labelenumiii}{\arabic{enumi}.\arabic{enumii}.\arabic{enumiii}}
\renewcommand{\labelenumiv}{\arabic{enumi}.\arabic{enumii}.\arabic{enumiii}.\arabic{enumiv}}

\title{\bf A Discrete Perspective Towards the Construction of Sparse Probabilistic Boolean Networks}
\author{Christopher H. Fok 
\thanks{Advanced Modeling and Applied Computing Laboratory,
Department of Mathematics, 
The University of Hong Kong, Pokfulam Road, Hong Kong. 
E-mail: christopherfok2015@outlook.com
}
\and Chi-Wing Wong
\thanks{Advanced Modeling and Applied Computing Laboratory,
Department of Mathematics, The University of Hong Kong, Pokfulam Road, Hong Kong. 
E-mail: cwwongab@hku.hk. }
\and Wai-Ki Ching
\thanks{Advanced Modeling and Applied Computing Laboratory,
Department of Mathematics, The University of Hong Kong, Pokfulam Road, Hong Kong. 
E-mail: wching@hku.hk. Research supported in part by
the Hong Kong Research Grants Council under Grant Numbers 17301519 and 17309522.}
}
\date{12\textsuperscript{th} July 2024}
\maketitle

%\tableofcontents

\begin{abstract}
\noindent
Boolean Network (BN) and its extension Probabilistic Boolean Network (PBN) are popular mathematical models for studying genetic regulatory networks. 
BNs and PBNs are also applied to model manufacturing systems, financial risk and healthcare service systems.
In this paper, we propose a novel Greedy Entry Removal (GER) algorithm
for constructing sparse PBNs.
We derive theoretical upper bounds for both existing algorithms and the GER algorithm.
Furthermore, we are the first to study the lower bound problem of the construction of sparse PBNs, and to derive a series of related theoretical results.
In our numerical experiments based on both synthetic and practical data, GER gives the best performance among state-of-the-art sparse PBN construction algorithms and outputs sparsest possible decompositions on most of the transition probability matrices being tested.
\end{abstract}

\noindent
{\bf Keywords:}
Boolean Networks, Probabilistic Boolean Networks, ill-posed inverse problems, sparse approximation, greedy algorithms

\section{Introduction}\label{section:intro}

\subsection{Boolean Networks}\label{subsection:BNs}
Boolean Network (BN) is a mathematical model that has numerous applications. 
It was first proposed by Kauffman in 1969 for studying genetic regulatory networks \cite{metabolic} (see also \cite{homeostasis, origins_of_order}). 
Since then, BN models have also been applied to study biological systems and mechanisms such as apoptosis, the yeast cell-cycle network, T cell receptor signaling (see, for instance \cite{bornholdt_cellular_regulation, yeast, 
T_cell, apoptosis}).

Concerning the formal mathematical details of BNs, 
as noted in \cite{shmulevich_PBN, from_BN_to_PBN}, a BN $(V, F)$ consists of a set of nodes
$V = \{v_{1} , v_{2} , \ldots , v_{n}\}$ and a list of Boolean functions $F = (f_{1}, f_{2}, \ldots , f_{n} )$, where each $f_{i} : \{0, 1\}^{n} \to \{0, 1\}$. 
At any time $t \in \mathbb{Z}_{\geq 0}$, each node $v_{i}$ is in either the ``on'' state (denoted by $1$) or the ``off'' state (denoted by $0$). We denote the state of $v_{i}$ at time $t$ by $v_{i}(t) \in \{ 0, 1\}$. The global state of the BN at time $t \in \mathbb{Z}_{\geq 0}$ is defined to be $\mathbf{v}(t) \coloneqq \left( v_{1}(t), v_{2}(t), \ldots, v_{n}(t) \right) \in \{0, 1\}^{n}$. We remark that there are $2^{n}$ possible global states, and that $\mathbf{v}(0)$ is the initial global state of the BN.
In a BN, at any time $t \in \mathbb{Z}_{\geq 0}$, the subsequent state of each node $v_{i}$ (i.e., $v_{i}(t + 1)$) is determined by the current global state $\mathbf{v}(t)$ of the BN according to the following rule: 
$v_{i} (t + 1) = f_{i}(\mathbf{v}(t))$. 
It is easy to see that if we fix $\mathbf{v}(0)$, then $\mathbf{v}(1), \mathbf{v}(2), \ldots$ will all be determined.

Each $n$-node BN is associated with a special $2^{n} \times 2^{n}$ matrix $A$ (called a \textit{BN matrix}). Label the $2^{n}$ possible global states of the BN as $1, 2, 3, \ldots, 2^{n}$. Then, the BN matrix $A$ is defined as follows: for all $1 \leq i, j \leq 2^{n}$, $A(i, j) = \Pr\left( \mathbf{v}(t+1) = \textup{state}\;i\;|\; \mathbf{v}(t) = \textup{state}\;j \right)$. We remark that each column of $A$ has exactly one non-zero entry and this non-zero entry equals $1$.
For any positive integer $x$, let $[x] \coloneqq \{ 1, 2, \ldots, x \}$. Also, let $B_{n}$ be the set of all $2^{n}\times 2^{n}$ BN matrices $\left\{ [\vec{e}_{k_{1}},  \vec{e}_{k_{2}}, \vec{e}_{k_{3}}, \ldots, \vec{e}_{k_{2^{n}}}] \in \mathbb{R}^{2^{n} \times 2^{n}} : k_{1}, k_{2}, k_{3}, \ldots, k_{2^n}  \in [2^{n}] \right\}$, where $\vec{e}_{m} \in \mathbb R^{2^{n}}$ is the $2^{n}$-dimensional standard unit column vector whose $m$-th entry equals $1$. We remark that there is a one-to-one correspondence between the set of all $n$-node BNs and $B_{n}$.

\begin{table}[h]
\begin{center}
\begin{tabular}{|c|cc|c|c|}
\hline BN Global States & $v_1 (t)$     & $v_2(t)$   & $f_{1}$ & $f_{2}$  \\
\hline
1 & 0 & 0 & 0 & 1 \\
2 & 0 & 1 & 1 & 0 \\
3 & 1 & 0 & 0 & 0 \\
4 & 1 & 1 & 1 & 1\\
\hline
\end{tabular}
\caption{The truth table for the BN introduced in Example \ref{example:BN}.}
\label{table:example_BN}
\end{center}
\end{table}

\FloatBarrier

\begin{example}\label{example:BN}
We give an example of a 2-node BN whose truth table is shown in Table \ref{table:example_BN}. From the table, there are four possible global states, namely $(0,0), (0,1), (1,0), (1,1)$. We label them as states $1,2,3$ and $4$ respectively.

If the current global state $\mathbf{v}(t)$ of the BN is state $1$,
then the subsequent global state $\mathbf{v}(t+1)$ will be state $2$.
If $\mathbf{v}(t)$ is state $2$, then $\mathbf{v}(t+1)$ will be state $3$.
Similarly, state $3$ moves to state $1$ while state $4$ remains unchanged.

The BN matrix associated with this BN is thus
\begin{equation}\label{eq:example_BN_matrix}
A =\left(
\begin{array}{cccc}
0 & 0 & 1 & 0 \\
1 & 0 & 0 & 0 \\
0 & 1 & 0 & 0 \\
0 & 0 & 0 & 1 \\
\end{array}
\right).
\end{equation}
\end{example}

\subsection{Probabilistic Boolean Networks}

Probabilistic Boolean Network (PBN) is a stochastic extension of the BN model. PBN was first introduced by Shmulevich et al. in 2002 for studying genetic regulatory networks \cite{from_BN_to_PBN}. 
PBN models have also been applied to a wide range of other areas including biomedicine \cite{biomedical}, credit default modeling \cite{credit_defaults}, manufacturing process modeling \cite{manufacturing_systems} and the study of brain connectivity \cite{brain_connectivity}.

Concerning the formal mathematical details of PBNs, an $n$-node PBN $\mathcal{P}$ is an $(n+2)$-tuple $(V, F_{1}, F_{2}, \ldots, F_{n}, \mathcal{D})$, where $V = \{ v_{1}, v_{2}, \ldots, v_{n}\}$ is the set of nodes of $\mathcal{P}$, $F_{i} = \left( f^{(i)}_{1}, f^{(i)}_{2}, \ldots, f^{(i)}_{l(i)} \right)$ is a non-empty list of $\ell(i)$ distinct Boolean functions mapping from $\{ 0, 1 \}^{n}$ to $\{ 0, 1 \}$, and $\mathcal{D}$ is a probability distribution on the Cartesian product of sets $\prod^{n}_{i = 1} [l(i)]$.
At any time $t \in \mathbb{Z}_{\geq 0}$, the global state of $\mathcal{P}$ at time $t \in \mathbb{Z}_{\geq 0}$ is defined to be $\mathbf{v}(t) \coloneqq \left( v_{1}(t), v_{2}(t), \ldots, v_{n}(t) \right) \in \{0, 1\}^{n}$. 
In order to generate $\mathbf v(t+1)$, a tuple $(j_{1}, j_{2}, \ldots, j_{n})$ is randomly sampled from the set $\prod^{n}_{i = 1} [l(i)] $ according to $\mathcal{D}$ and is used to define a list of Boolean functions $F(t) \coloneqq \left( f^{(1)}_{j_{1}}, f^{(2)}_{j_{2}}, \ldots, f^{(n)}_{j_{n}} \right)$. Then, the BN model $(V, F(t))$ is used to update the states of the nodes as if $\mathbf v(t)$ is in $(V,F(t))$: for each $i = 1, 2, \ldots, n$, $v_{i}(t + 1) = f^{(i)}_{j_{i}} (\mathbf{v}(t))$. We remark that the random samplings from $\prod^{n}_{i = 1} [l(i)] $ at all times $t \in \mathbb{Z}_{\geq 0}$ are mutually independent.

We label the $2^{n}$ global states of $\mathcal{P}$ in $\{ 0, 1 \}^{n}$ with integers from $1$ to $2^{n}$. Let $\mathbf{f}$ be an arbitrary element of the set $\mathcal{F} \coloneqq \prod^{n}_{i=1} \left\{ f^{(i)}_{1}, f^{(i)}_{2}, \ldots, f^{(i)}_{l(i)} \right\}$. Let $x_{\mathbf{f}}$ be the selection probability of the element $\mathbf{f}$, which is constant across all times $t \in \mathbb{Z}_{\geq 0}$. 
Then, at any time $t$, for any $i, j \in [2^{n}]$, the transition probability from the global state $j$ to the global state $i$ of the PBN $\mathcal{P}$ is given by
\begin{eqnarray}
P_{ij} & \coloneqq &
\Pr\left( \mathbf{v}(t+1) = \textrm{state}\:i\:| \: \mathbf{v}(t) = \textrm{state} \: j \right)
\nonumber \\
& = &
\sum_{\mathbf{f} \in \mathcal{F}} \Pr \left( \mathbf{v}(t+1) = \textrm{state} \: i \: | \: \mathbf{v}(t) = \textrm{state} \: j, F(t) = \mathbf{f} \right) \cdot x_{\mathbf{f}}.
\end{eqnarray}
Define the $2^{n} \times 2^{n}$ matrices $P \coloneqq (P_{ij})$ and $A_{\mathbf{f}}=\left(\Pr \left( \mathbf{v}(t+1) = \textrm{state} \: i \: | \: \mathbf{v}(t) = \textrm{state} \: j, F(t) = \mathbf{f} \right)\right)$. Note that $A_{\mathbf{f}}$ is the BN matrix of the BN $(V, \mathbf{f})$ for all $\mathbf{f} \in \mathcal{F}$ and
\begin{equation}
P = \sum_{\mathbf{f} \in \mathcal{F}} x_{\mathbf{f}} A_{\mathbf{f}}.
\end{equation}
It is important to note that all entries of $P$ are non-negative, and that in each column of $P$, the entries sum up to $1$. We call $P$ \textit{the PBN matrix} of the underlying PBN $\mathcal{P}$.

\begin{table}[h]
\begin{center}
\begin{tabular}{|c|cc|cc|cc|}
\hline PBN Global States & $v_1 (t)$     & $v_2(t)$   & 
            $f^{(1)}_{1}$ & $f^{(1)}_{2}$ & $f^{(2)}_{1}$ & $f^{(2)}_{2}$  \\
\hline
1 & 0 & 0 & 0 & 0 & 1 & 1 \\
2 & 0 & 1 & 1 & 0 & 0 & 1 \\
3 & 1 & 0 & 0 & 1 & 0 & 1 \\
4 & 1 & 1 & 1 & 1 & 0 & 0 \\
\hline
$c^{(i)}_{j}$ &  &  & 0.1 & 0.9 & 0.3 & 0.7 \\
\hline
\end{tabular}
\caption{The truth table for the PBN introduced in Example \ref{example:PBN}, together with the selection probabilities of the Boolean functions in this PBN.}
\label{table:example_PBN}
\end{center}
\end{table}

\FloatBarrier

\begin{example}\label{example:PBN}
We give an example of a $2$-node PBN, the truth table of which is shown in Table \ref{table:example_PBN}.

The PBN has four global states $(0, 0)$, $(0, 1)$, $(1, 0)$, $(1, 1)$ (labelled as 1, 2, 3, 4 respectively). 
Node $v_{1}$ is associated with the list of Boolean functions $F_{1} = ( f^{(1)}_{1}, f^{(1)}_{2} )$, whereas node $v_{2}$ is associated with the list of Boolean functions $F_{2} = ( f^{(2)}_{1}, f^{(2)}_{2} )$.
Let $\mathbf{f}_{1} \coloneqq (f^{(1)}_{1}, f^{(2)}_{1})$, $\mathbf{f}_{2} \coloneqq (f^{(1)}_{1}, f^{(2)}_{2})$, $\mathbf{f}_{3} \coloneqq (f^{(1)}_{2}, f^{(2)}_{1})$ and $\mathbf{f}_{4} \coloneqq (f^{(1)}_{2}, f^{(2)}_{2})$. 
In this PBN, at any time $t \in \mathbb{Z}_{\geq 0}$, the random selection of a Boolean function from $F_{1}$ is independent of the random selection of a Boolean function from $F_{2}$, and the probability that each $f^{(i)}_{j}$ is selected is $c^{(i)}_{j}$. 
Therefore, at any time $t \in \mathbb{Z}_{\geq 0}$, the probabilities of the selection of $\mathbf{f}_{1}$, $\mathbf{f}_{2}$, $\mathbf{f}_{3}$ and $\mathbf{f}_{4}$ are $x_{\mathbf{f}_{1}} = 0.1 \cdot 0.3 = 0.03$, $x_{\mathbf{f}_{2}} = 0.1 \cdot 0.7 = 0.07$, $x_{\mathbf{f}_{3}} = 0.9 \cdot 0.3 = 0.27$ and $x_{\mathbf{f}_{4}} = 0.9 \cdot 0.7 = 0.63$, respectively. 
In addition, the BN matrices of the BNs $(V, \mathbf{f}_{1})$, $(V, \mathbf{f}_{2})$, $(V, \mathbf{f}_{3})$ and $(V, \mathbf{f}_{4})$ are respectively given by
\begin{equation*}\label{eq:examplePBN_BN_matrices}
A_{\mathbf{f}_{1}} =\left(
\begin{array}{cccc}
0 & 0 & 1 & 0 \\
1 & 0 & 0 & 0 \\
0 & 1 & 0 & 1 \\
0 & 0 & 0 & 0 \\
\end{array}
\right),
A_{\mathbf{f}_{2}} =\left(
\begin{array}{cccc}
0 & 0 & 0 & 0 \\
1 & 0 & 1 & 0 \\
0 & 0 & 0 & 1 \\
0 & 1 & 0 & 0 \\
\end{array}
\right),
A_{\mathbf{f}_{3}} =\left(
\begin{array}{cccc}
0 & 1 & 0 & 0 \\
1 & 0 & 0 & 0 \\
0 & 0 & 1 & 1 \\
0 & 0 & 0 & 0 \\
\end{array}
\right),
A_{\mathbf{f}_{4}} =\left(
\begin{array}{cccc}
0 & 0 & 0 & 0 \\
1 & 1 & 0 & 0 \\
0 & 0 & 0 & 1 \\
0 & 0 & 1 & 0 \\
\end{array}
\right).
\end{equation*}
%%%%
Therefore, the PBN matrix of the $2$-node PBN is given by
\begin{equation}\label{eq:example_PBN_matrix}
P = \sum^{4}_{i = 1} x_{\mathbf{f}_{i}} A_{\mathbf{f}_{i}} =\left(
\begin{array}{cccc}
0 & 0.27 & 0.03 & 0 \\
1 & 0.63 & 0.07 & 0 \\
0 & 0.03 & 0.27 & 1 \\
0 & 0.07 & 0.63 & 0 \\
\end{array}
\right).
\end{equation}

\end{example}

\subsection{Problem Formulation}\label{subsection:problem_formulation}
Before we describe the problem being studied in this paper, we first introduce some mathematical notations. 
Let $P$ be a $2^{n} \times 2^{n}$ non-negative matrix such that each column of $P$ sums to $1$. From now on, we will call such matrices $2^{n} \times 2^{n}$ \textit{transition probability matrices} (TPMs).
For each $j \in [2^{n}]$, let $D_{j}(P) \coloneqq \{ k \in [2^{n}] : P(k, j) > 0 \}$ and define $B_{n}(P) \coloneqq \left\{ [\vec{e}_{k_{1}},  \vec{e}_{k_{2}}, \vec{e}_{k_{3}}, \ldots, \vec{e}_{k_{2^{n}}}] \in \mathbb{R}^{2^{n} \times 2^{n}} : \forall j \in [2^{n}], k_{j} \in D_{j}(P) \right\}$, which is a subset of $B_{n}$.

Let $P$ be a $2^{n} \times 2^{n}$ TPM. We say that positive real numbers $x_{1}, x_{2}, \ldots, x_{K}$ and distinct BN matrices $A_{1}, A_{2}, \ldots, A_{K} \in B_{n}(P)$ define a decomposition of $P$ of length $K$ if $\sum^{K}_{i = 1} x_{i} A_{i} = P$ and $\sum^{K}_{i = 1} x_{i} = 1$. Further, we say that a decomposition of $P$ is sparser than another one if its length is strictly smaller.

The problem being studied in this paper is as follows: given a $2^{n} \times 2^{n}$ TPM $P$, we would like to find a decomposition of $P$ which is as sparse as possible. We call it \textit{the construction problem of sparse Probabilistic Boolean Networks}.

\subsection{Previous Work}\label{subsection:prev_work}
Many algorithms for tackling the construction problem of sparse PBNs have been proposed. A dominant modified algorithm was proposed in \cite{cui_PBN_construction}. A maximum entropy rate approach was proposed in \cite{max_entropy_rate}. A modified maximum entropy rate approach was proposed in \cite{chen_jiang_ching_PBN_construction}, which considered an objective function based on the entropy rate with an additional term of $L_{\alpha}$-norm. A projection-based gradient descent method was proposed in \cite{sparse_solution_of_nonnegative}. An alternating direction method of multipliers was proposed in \cite{alternating}. A partial proximal-type operator splitting method was proposed in \cite{partial_proximal}. Some deterministic and probabilistic algorithms that involve removing entries from $P$ sequentially were proposed in \cite{SERs_paper}. A modified orthogonal matching pursuit algorithm was proposed in \cite{MOMP_paper}.

\subsection{Our Contributions}\label{subsection:our_contributions}
The major contributions of this work are threefold.

Firstly, we propose the Greedy Entry Removal algorithm (GER), which takes as input any $2^{n} \times 2^{n}$ TPM $P$ and outputs a decomposition of $P$. Our numerical experiments demonstrate that GER gives the best performance among state-of-the-art sparse PBN construction algorithms: for a wide variety of TPMs, GER outputs decompositions with the smallest lengths among the sparse PBN construction algorithms tested. We also prove that the decompositions output by GER for most of the TPMs tested have the smallest possible length.

Secondly, we derive theoretical results related to the upper bounds of $K$. We consider three sparse PBN construction algorithms, namely Simple Entry Removal Algorithm 1 (SER 1) \cite{SERs_paper}, Simple Entry Removal Algorithm 2 (SER 2) \cite{SERs_paper} and GER. For each of these algorithms, we derive two upper bounds for $K$.

Thirdly, we are the first to study the lower bound problem of $K$, and to derive a series of theoretical results related to this problem. These results have been applied to prove that the decompositions output by GER for most of the TPMs tested in our numerical experiments have the smallest possible length.

\subsection{Organization}\label{subsection:organization}
The rest of this paper is organized as follows. 
In Section \ref{section:existing_algos_and_upper_bounds},
we present three existing sparse PBN construction algorithms and derive four new upper bounds for two of them. 
In Section \ref{section:GER_and_upper_bounds},
we propose the Greedy Entry Removal Algorithm (GER), numerically demonstrate it and derive two upper bound theorems for it. 
In Section \ref{section:lower_bound},
we derive a series of results for the lower bound problem related to the construction of sparse PBNs. 
In Section \ref{section:numerical_experiments},
we describe the numerical experiments conducted, the results of which verify the effectiveness and advantages of GER. 
In Section \ref{section:conclusion},
we give some concluding remarks and discuss some possible future directions of research.

\subsection{Some Notations and Terminologies}\label{subsection:notations_and_terms}

Recall that for any positive integer $x$, the notation $[x]$ means the set $\{ 1, 2, \ldots, x \}$. Define $\mathcal{M}_{L} \coloneqq \left\{ \vec{x} \in \mathbb{R}^{L} : \sum^{L}_{i = 1} x_{i} = 1, x_{i} \geq 0 \; \forall i \in [L] \right\}$ which is known as the probability simplex in $\mathbb{R}^{L}$.
For any vector $\vec{y}$, define $\textrm{supp}(\vec{y}) \coloneqq \left\{ j : y_{j} \neq 0 \right\}$.
For any real-valued matrix or vector $C$, we define $\mathcal{N}^{+}(C)$ to be the number of positive entries of $C$.
We use the term \textit{rational TPMs} to refer to TPMs whose entries are all rational numbers.
Recall that we have used the notation $\vec{e}_{i}$ to refer to the standard unit column vector in a Euclidean space such that the $i$-th entry of $\vec{e}_{i}$ equals $1$.
We will specify the dimension of $\vec{e}_{i}$ each time we use the notation.
We define $\vec{0}_{L}$ to be the zero column vector in $\mathbb{R}^{L}$,
and $\vec{1}_{L}$ to be the column vector in $\mathbb{R}^{L}$ consisting of ones.
For any $\vec{v} = (v_{1}, v_{2}, \ldots, v_{L})^{\top}$, $\vec{w} = (w_{1}, w_{2}, \ldots, w_{L})^{\top} \in \mathbb R^L$, we define
$\max(\vec{v}, \vec{w}) \coloneqq (\max(v_{1}, w_{1}), \max(v_{2}, w_{2}), \ldots, \max(v_{L}, w_{L}))^{\top}$.
For all positive integers $s$ and $t$, we define $\mathbf{O}_{s \times t}$ to be the $s \times t$ zero matrix.
Let $x_{1}, x_{2}, x_{3}, \ldots, x_{2^{n}}$ be arbitrary elements of $[2^{n}]$. The $2^{n} \times 2^{n}$ BN matrix $[\vec{e}_{x_{1}}, \vec{e}_{x_{2}}, \vec{e}_{x_{3}}, \ldots, \vec{e}_{x_{2^{n}}}]$ is also represented as $\langle x_{1}, x_{2}, x_{3}, \ldots, x_{2^{n}} \rangle$. For example, $\langle 4, 1, 3, 2\rangle$ represents the $4 \times 4$ BN matrix
$
\begin{pmatrix}
0	& 1	& 0	& 0	\\
0	& 0	& 0	& 1	\\
0	& 0	& 1	& 0	\\
1	& 0	& 0	& 0
\end{pmatrix}
$.
For any $m \times n$ matrix $[\vec{v}_{1}, \vec{v}_{2}, \ldots, \vec{v}_{n}]$, we define 
$\textrm{vec}([\vec{v}_{1}, \vec{v}_{2}, \ldots, \vec{v}_{n}]) \coloneqq 
\begin{pmatrix}
\vec{v}_{1} \\
\vec{v}_{2} \\
\vdots \\
\vec{v}_{n}
\end{pmatrix}
$.
We shall use MATLAB-style notation to refer to entries and sub-matrices of any matrix $C$. 
For instance, $C(i, j)$ is the $(i, j)$-entry of $C$, 
$C(:,2)$ is the 2\textsuperscript{nd} column of $C$, and 
$C([2, 4], :)$ is the sub-matrix of $C$ consisting of the 2\textsuperscript{nd} and the 4\textsuperscript{th} rows of $C$.

\section{Existing Sparse PBN Construction Algorithms and their Upper Bound Theorems}\label{section:existing_algos_and_upper_bounds}

In this section, we present three existing sparse PBN construction algorithms, namely Simple Entry Removal Algorithm 1 (SER 1) \cite{SERs_paper}, Simple Entry Removal Algorithm 2 (SER 2) \cite{SERs_paper} and the modified orthogonal matching pursuit algorithm (MOMP) \cite{MOMP_paper}. For SER 1 and SER 2, we propose three upper bound theorems (Theorems \ref{thm:entry_removal_upper_bound_SER*}, \ref{thm:rational_upper_bound_SER_1} and \ref{thm:rational_upper_bound_SER_2}). Before we continue, we need to give a frequently used proposition:

\begin{proposition}\label{prop:sum_of_xi_is_1_automatic}
Let $P = rQ$ where $r > 0$ and $Q$ is a $2^{n} \times 2^{n}$ TPM.
Suppose that $x_{1}, x_{2}, \ldots, x_{K}$ are real numbers and $A_{1}, A_{2}, \ldots, A_{K}$ are BN matrices such that 
$P = \sum^{K}_{i = 1} x_{i} A_{i}$. Then $\sum^{K}_{i = 1} x_{i} = r$.
\end{proposition}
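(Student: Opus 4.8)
The plan is to exploit the single structural fact about BN matrices that matters here: every column of a BN matrix is a standard unit column vector, hence sums to $1$. Combined with the hypothesis $P = rQ$, which forces every column of $P$ to sum to $r$ (since each column of the TPM $Q$ sums to $1$), comparing column sums on both sides of $P = \sum_{i=1}^{K} x_i A_i$ will immediately yield $\sum_{i=1}^{K} x_i = r$.

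Concretely, I would argue as follows. For any BN matrix $A = [\vec{e}_{k_1}, \vec{e}_{k_2}, \ldots, \vec{e}_{k_{2^n}}] \in B_n$, each $\vec{e}_{k_j} \in \mathbb{R}^{2^n}$ satisfies $\vec{1}_{2^n}^{\top} \vec{e}_{k_j} = 1$, so $\vec{1}_{2^n}^{\top} A = \vec{1}_{2^n}^{\top}$. Now left-multiply the identity $P = \sum_{i=1}^{K} x_i A_i$ by the row vector $\vec{1}_{2^n}^{\top}$. The right-hand side becomes $\sum_{i=1}^{K} x_i \, \vec{1}_{2^n}^{\top} A_i = \left( \sum_{i=1}^{K} x_i \right) \vec{1}_{2^n}^{\top}$, while the left-hand side is $\vec{1}_{2^n}^{\top} P = \vec{1}_{2^n}^{\top}(rQ) = r\left( \vec{1}_{2^n}^{\top} Q \right) = r \, \vec{1}_{2^n}^{\top}$, where the last equality uses that each column of the TPM $Q$ sums to $1$. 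Equating these two row vectors in $\mathbb{R}^{2^n}$ and reading off any one coordinate (e.g.\ the first) gives $\sum_{i=1}^{K} x_i = r$, as required.

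There is no real obstacle in this argument; it is essentially a one-line computation once the observation $\vec{1}_{2^n}^{\top} A_i = \vec{1}_{2^n}^{\top}$ is made. The only point worth flagging is what is \emph{not} needed: we never use that the $A_i$ belong to $B_n(P)$, nor that the $x_i$ are positive, nor that they are distinct — the conclusion holds for any real coefficients and any BN matrices, purely because left-multiplication by $\vec{1}_{2^n}^{\top}$ is insensitive to these refinements. (Equivalently, one can phrase the whole proof as: the sum of all $2^n \cdot 2^n$ entries of column $j$ of $P$ equals $\sum_i x_i$ times $1$ on one side and $r$ on the other.)
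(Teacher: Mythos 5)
Your proof is correct and is essentially the same argument as the paper's: left-multiplying $P=\sum_{i=1}^{K}x_iA_i$ by $\vec{1}^{\top}_{2^{n}}$ and using $\vec{1}^{\top}_{2^{n}}A_i=\vec{1}^{\top}_{2^{n}}$ together with $\vec{1}^{\top}_{2^{n}}P=r\vec{1}^{\top}_{2^{n}}$. Nothing further is needed.
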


\begin{proof}
Note that
\begin{equation}\label{eq:multiply_both_side_with_ones}
r \vec{1}^{\top}_{2^{n}}
= \vec{1}^{\top}_{2^{n}}P 
= \vec{1}^{\top}_{2^{n}} \left( \sum^{K}_{i = 1} x_{i} A_{i} \right)
= \sum^{K}_{i = 1} x_{i} \vec{1}^{\top}_{2^{n}} A_{i} 
= \sum^{K}_{i = 1} x_{i} \vec{1}^{\top}_{2^{n}} 
= \left( \sum^{K}_{i = 1} x_{i} \right) \vec{1}^{\top}_{2^{n}}.
\end{equation}
Therefore, $\sum^{K}_{i = 1} x_{i} = r$.
\end{proof}

\newpage  % [2023-9-29] Perhaps I should remove this \newpage command later (when I complete writing up the existing algos section and the GER section.)

\subsection{On SER 1 and SER 2}\label{subsection:SER_1_and_SER_2}

\subsubsection{SER 1}\label{subsubsection:review_and_demo_SER_1}

\begin{algorithm}
\caption{Simple Entry Removal Algorithm 1 (SER 1) \cite{SERs_paper}}\label{alg:SER_1}

\textbf{Input}: A $2^{n} \times 2^{n}$ matrix $P$ such that $P = r_{0} Q$ for some $r_{0} > 0$ and $2^{n} \times 2^{n}$ TPM $Q$. \\
\textbf{Output}: Positive real numbers $x_{1}, x_{2}, \ldots, x_{K}$ and $2^{n} \times 2^{n}$ BN matrices $A_{1}, A_{2}, \ldots, A_{K}$ such that $ P = \sum^{K}_{i = 1} x_{i} A_{i}$ and $\sum^{K}_{i = 1} x_{i} = r_{0}$. 

%\vspace{3mm}
\begin{enumerate}
\item Set $R_{1} \gets P$ and $k \gets 0$.

\item Set $k \gets k + 1$.

\item Choose the smallest positive entry $x_{k}$ from $R_{k}$. 
If there are more than one positive entry in $R_{k}$ that has the smallest positive value, choose any one of them. 
For each of the other columns, choose the largest positive entry in that column. 
If in any one of these columns, there are more than one positive entry that has the largest value, choose any one of them.
Suppose that the chosen entries are $R_{k}(k_{1}, 1)$, $R_{k}(k_{2}, 2)$, $R_{k}(k_{3}, 3)$, \ldots, $R_{k}(k_{2^{n}}, 2^{n})$. Define the $2^{n} \times 2^{n}$ BN matrix $A_{k} \coloneqq \langle k_{1}, k_{2}, k_{3}, \ldots, k_{2^{n}} \rangle$.

We remark that for each $i \in [2^{n}]$, $R_{k}(k_{i}, i) \geq x_{k}$.

\item Compute $R_{k + 1} \coloneqq R_{k} - x_{k}A_{k}$.

\item If $R_{k+1}$ is the zero matrix, go to Step 6; otherwise, go to Step 2.

\item Set $K \gets k$. Output the positive real numbers $x_{1}, x_{2}, \ldots, x_{K}$ and BN matrices $A_{1}, A_{2}, \ldots, A_{K}$.

Note that $P = \sum^{K}_{i = 1} x_{i}A_{i}$. By Proposition \ref{prop:sum_of_xi_is_1_automatic}, $\sum^{K}_{i = 1} x_{i} = r_{0}$.
\end{enumerate}
\end{algorithm}

We now demonstrate SER 1 by presenting its step-by-step execution on a simple input matrix $P$ with $r_{0} = 10$. We also present how the outputs of this execution can be used to construct a PBN with PBN matrix equal to $\frac{1}{r_{0}}P$.

Let $P$ be the following matrix:
\begin{equation}\label{eq:4-by-4_example_PBN_matrix}
P \coloneqq \left(
\begin{array}{cccc}
1 & 5 & 6 & 0   \\
4 & 0 & 2 & 0   \\
5 & 2 & 0 & 10 \\
0 & 3 & 2 & 0   \\
\end{array}
\right) \eqqcolon R_{1}.
\end{equation}

In SER 1, in the 1\textsuperscript{st} iteration of steps 2 to 5, note that the smallest positive entry in $R_{1}$ is one, which is in the 1\textsuperscript{st} column (marked by a square bracket below). Then, we choose the largest positive entry in each of the other columns (marked by underscores below). Afterwards, we subtract one from these four chosen entries and get
\begin{eqnarray*}
R_{1} = 
\left(
\begin{array}{cccc}
[1] 	& \underline{5}	& \underline{6} 	& 0 \\
4 	& 0 			& 2 			& 0 \\
5 	& 2 			& 0 			& \underline{10} \\
0 	& 3 			& 2 			& 0 \\
\end{array}
\right) & = &
1
 \left(
\begin{array}{cccc}
1 & 1 & 1 & 0 \\
0 & 0 & 0 & 0 \\
0 & 0 & 0 & 1 \\
0 & 0 & 0 & 0 \\
\end{array}
\right)
+
\left(
\begin{array}{cccc}
0 & 4 & 5 & 0 \\
4 & 0 & 2 & 0 \\
5 & 2 & 0 & 9 \\
0 & 3 & 2 & 0 \\
\end{array}
\right) \\
& \eqqcolon & x_{1} A_{1} + R_{2}.
\end{eqnarray*}

In the 2\textsuperscript{nd} iteration of steps 2 to 5, note that the smallest positive entry of $R_2$ is two. Because in $R_{2}$, the values of three entries are equal to this minimum value, we arbitrarily choose one of them, say the one in the 2\textsuperscript{nd} column (marked by a square bracket below). Then, we choose the largest positive entry in each of the other columns (marked by underscores below). Afterwards, we subtract two from these four chosen entries and get
\begin{eqnarray*}
R_{2} = 
\left(
\begin{array}{cccc}
0 			& 4 	& \underline{5}	& 0 \\
4 			& 0 	& 2 			& 0 \\
\underline{5}	& [2] 	& 0 			& \underline{9} \\
0 			& 3 	& 2 			& 0 \\
\end{array}
\right) & = &
2
 \left(
\begin{array}{cccc}
0 & 0 & 1 & 0 \\
0 & 0 & 0 & 0 \\
1 & 1 & 0 & 1 \\
0 & 0 & 0 & 0 \\
\end{array}
\right)
+
\left(
\begin{array}{cccc}
0 & 4 & 3 & 0 \\
4 & 0 & 2 & 0 \\
3 & 0 & 0 & 7 \\
0 & 3 & 2 & 0 \\
\end{array}
\right) \\
& \eqqcolon & x_{2} A_{2} + R_{3}.
\end{eqnarray*}

Executing steps 2 to 5 of SER 1 repeatedly, we have
\begin{eqnarray*}
R_{3} = 
\left(
\begin{array}{cccc}
0 			& \underline{4}	& 3 	& 0 \\
\underline{4}	& 0 			& [2] 	& 0 \\
3 			& 0 			& 0 	& \underline{7} \\
0 			& 3 			& 2 	& 0 \\
\end{array}
\right) & = &
2
 \left(
\begin{array}{cccc}
0 & 1 & 0 & 0 \\
1 & 0 & 1 & 0 \\
0 & 0 & 0 & 1 \\
0 & 0 & 0 & 0 \\
\end{array}
\right)
+
\left(
\begin{array}{cccc}
0 & 2 & 3 & 0 \\
2 & 0 & 0 & 0 \\
3 & 0 & 0 & 5 \\
0 & 3 & 2 & 0 \\
\end{array}
\right) \\
& \eqqcolon & x_{3} A_{3} + R_{4},
\end{eqnarray*}
\begin{eqnarray*}
R_{4} = 
\left(
\begin{array}{cccc}
0 			& 2 			& \underline{3}	& 0 \\
\left[ 2 \right] 	& 0 			& 0 			& 0 \\
3 			& 0 			& 0 			& \underline{5} \\
0 			& \underline{3}	& 2 			& 0 \\
\end{array}
\right) & = &
2
 \left(
\begin{array}{cccc}
0 & 0 & 1 & 0 \\
1 & 0 & 0 & 0 \\
0 & 0 & 0 & 1 \\
0 & 1 & 0 & 0 \\
\end{array}
\right)
+
\left(
\begin{array}{cccc}
0 & 2 & 1 & 0 \\
0 & 0 & 0 & 0 \\
3 & 0 & 0 & 3 \\
0 & 1 & 2 & 0 \\
\end{array}
\right) \\
& \eqqcolon & x_{4} A_{4} + R_{5},
\end{eqnarray*}
\begin{eqnarray*}
R_{5} = 
\left(
\begin{array}{cccc}
0 			& 2 	& 1 			& 0 \\
0 			& 0 	& 0 			& 0 \\
\underline{3}	& 0 	& 0 			& \underline{3} \\
0 			& [1]	& \underline{2}	& 0 \\
\end{array}
\right) & = &
1
 \left(
\begin{array}{cccc}
0 & 0 & 0 & 0 \\
0 & 0 & 0 & 0 \\
1 & 0 & 0 & 1 \\
0 & 1 & 1 & 0 \\
\end{array}
\right)
+
\left(
\begin{array}{cccc}
0 & 2 & 1 & 0 \\
0 & 0 & 0 & 0 \\
2 & 0 & 0 & 2 \\
0 & 0 & 1 & 0 \\
\end{array}
\right) \\
& \eqqcolon & x_{5} A_{5} + R_{6},
\end{eqnarray*}
\begin{eqnarray*}
R_{6} = 
\left(
\begin{array}{cccc}
0 			& \underline{2}	& [1]	& 0 \\
0 			& 0 			& 0 	& 0 \\
\underline{2}	& 0 			& 0 	& \underline{2} \\
0 			& 0 			& 1 	& 0 \\
\end{array}
\right) & = &
1
 \left(
\begin{array}{cccc}
0 & 1 & 1 & 0 \\
0 & 0 & 0 & 0 \\
1 & 0 & 0 & 1 \\
0 & 0 & 0 & 0 \\
\end{array}
\right)
+
\left(
\begin{array}{cccc}
0 & 1 & 0 & 0 \\
0 & 0 & 0 & 0 \\
1 & 0 & 0 & 1 \\
0 & 0 & 1 & 0 \\
\end{array}
\right) \\
& \eqqcolon & x_{6} A_{6} + R_{7},
\end{eqnarray*}
\begin{eqnarray*}
R_{7} = 
\left(
\begin{array}{cccc}
0 			& \underline{1}	& 0 			& 0 \\
0 			& 0 			& 0 			& 0 \\
\left[ 1 \right]	& 0 			& 0 			& \underline{1} \\
0 			& 0 			& \underline{1}	& 0 \\
\end{array}
\right) & = &
1
 \left(
\begin{array}{cccc}
0 & 1 & 0 & 0 \\
0 & 0 & 0 & 0 \\
1 & 0 & 0 & 1 \\
0 & 0 & 1 & 0 \\
\end{array}
\right)
+
\left(
\begin{array}{cccc}
   0 &    0 &    0 & 0 \\
   0 &    0 &    0 & 0 \\
   0 &    0 &    0 & 0 \\
   0 &    0 &    0 & 0 \\
\end{array}
\right) \\
& \eqqcolon & x_{7} A_{7} + R_{8}.
\end{eqnarray*}

Because $R_{8}$ is the zero matrix, SER 1 outputs the positive real numbers $x_{1}, x_{2}, \ldots, x_{7}$ and BN matrices $A_{1}, A_{2}, \ldots, A_{7}$. It can be easily checked that each $x_{i} > 0$, $\sum^{7}_{i = 1} x_{i} = 10 = r_{0}$ and $\sum^{7}_{i = 1} x_{i} A_{i} = P$ as desired.

From the outputs $x_{1}, x_{2}, \ldots, x_{7}$ and $A_{1}, A_{2}, \ldots, A_{7}$, we can construct a $2$-node PBN $\mathcal{P} = (V, F_{1}, F_{2}, \mathcal{D})$ whose PBN matrix equals $\frac{1}{r_{0}}P$. Let $V=\{v_1,v_2\}$. The BNs represented by the BN matrices $A_{1}, A_{2}, \ldots, A_{7}$ are given in Tables~\ref{table:BN_represented_by_A1_SER_1}--\ref{table:BN_represented_by_A7_SER_1}:

\begin{table}[h]
\begin{center}
\begin{tabular}{|c|cc|c|c|}
\hline BN Global States & $v_1 (t)$     & $v_2(t)$   & $f^{(1)}_{1}$ & $f^{(2)}_{1}$  \\
\hline
1 & 0 & 0 & 0 & 0 \\
2 & 0 & 1 & 0 & 0 \\
3 & 1 & 0 & 0 & 0 \\
4 & 1 & 1 & 1 & 0 \\
\hline
\end{tabular}
\caption{The truth table for the BN represented by the BN matrix $A_{1}$.}
\label{table:BN_represented_by_A1_SER_1}
\end{center}
\end{table}

\FloatBarrier

\begin{table}[h]
\begin{center}
\begin{tabular}{|c|cc|c|c|}
\hline BN Global States & $v_1 (t)$     & $v_2(t)$   & $f^{(1)}_{2}$ & $f^{(2)}_{1}$  \\
\hline
1 & 0 & 0 & 1 & 0 \\
2 & 0 & 1 & 1 & 0 \\
3 & 1 & 0 & 0 & 0 \\
4 & 1 & 1 & 1 & 0 \\
\hline
\end{tabular}
\caption{The truth table for the BN represented by the BN matrix $A_{2}$.}
\label{table:BN_represented_by_A2_SER_1}
\end{center}
\end{table}

\FloatBarrier

\begin{table}[h!]
\begin{center}
\begin{tabular}{|c|cc|c|c|}
\hline BN Global States & $v_1 (t)$     & $v_2(t)$   & $f^{(1)}_{1}$ & $f^{(2)}_{2}$  \\
\hline
1 & 0 & 0 & 0 & 1 \\
2 & 0 & 1 & 0 & 0 \\
3 & 1 & 0 & 0 & 1 \\
4 & 1 & 1 & 1 & 0 \\
\hline
\end{tabular}
\caption{The truth table for the BN represented by the BN matrix $A_{3}$.}
\label{table:BN_represented_by_A3_SER_1}
\end{center}
\end{table}

\FloatBarrier

\begin{table}[h!]
\begin{center}
\begin{tabular}{|c|cc|c|c|}
\hline BN Global States & $v_1 (t)$     & $v_2(t)$   & $f^{(1)}_{3}$ & $f^{(2)}_{3}$  \\
\hline
1 & 0 & 0 & 0 & 1 \\
2 & 0 & 1 & 1 & 1 \\
3 & 1 & 0 & 0 & 0 \\
4 & 1 & 1 & 1 & 0 \\
\hline
\end{tabular}
\caption{The truth table for the BN represented by the BN matrix $A_{4}$.}
\label{table:BN_represented_by_A4_SER_1}
\end{center}
\end{table}

\FloatBarrier

\begin{table}[h!]
\begin{center}
\begin{tabular}{|c|cc|c|c|}
\hline BN Global States & $v_1 (t)$     & $v_2(t)$   & $f^{(1)}_{4}$ & $f^{(2)}_{4}$  \\
\hline
1 & 0 & 0 & 1 & 0 \\
2 & 0 & 1 & 1 & 1 \\
3 & 1 & 0 & 1 & 1 \\
4 & 1 & 1 & 1 & 0 \\
\hline
\end{tabular}
\caption{The truth table for the BN represented by the BN matrix $A_{5}$.}
\label{table:BN_represented_by_A5_SER_1}
\end{center}
\end{table}

\FloatBarrier

\begin{table}[h]
\begin{center}
\begin{tabular}{|c|cc|c|c|}
\hline BN Global States & $v_1 (t)$     & $v_2(t)$   & $f^{(1)}_{5}$ & $f^{(2)}_{1}$  \\
\hline
1 & 0 & 0 & 1 & 0 \\
2 & 0 & 1 & 0 & 0 \\
3 & 1 & 0 & 0 & 0 \\
4 & 1 & 1 & 1 & 0 \\
\hline
\end{tabular}
\caption{The truth table for the BN represented by the BN matrix $A_{6}$.}
\label{table:BN_represented_by_A6_SER_1}
\end{center}
\end{table}

\FloatBarrier

\begin{table}[h!]
\begin{center}
\begin{tabular}{|c|cc|c|c|}
\hline BN Global States & $v_1 (t)$     & $v_2(t)$   & $f^{(1)}_{6}$ & $f^{(2)}_{5}$  \\
\hline
1 & 0 & 0 & 1 & 0 \\
2 & 0 & 1 & 0 & 0 \\
3 & 1 & 0 & 1 & 1 \\
4 & 1 & 1 & 1 & 0 \\
\hline
\end{tabular}
\caption{The truth table for the BN represented by the BN matrix $A_{7}$.}
\label{table:BN_represented_by_A7_SER_1}
\end{center}
\end{table}

\FloatBarrier

The lists of Boolean functions associated with the nodes $v_1$ and $v_2$ are respectively $F_{1} \coloneqq \left( f^{(1)}_{1}, f^{(1)}_{2}, f^{(1)}_{3}, f^{(1)}_{4}, f^{(1)}_{5}, f^{(1)}_{6} \right)$ and $F_{2} \coloneqq \left( f^{(2)}_{1}, f^{(2)}_{2}, f^{(2)}_{3}, f^{(2)}_{4}, f^{(2)}_{5} \right)$. In addition, the probability distribution $\mathcal{D}$ on $[6] \times [5]$ is given by the probability mass function
\begin{eqnarray*}
&& \frac{x_1}{r_0} \mathbbm 1_{\{(1,1)\}} + \frac{x_2}{r_0} \mathbbm 1_{\{(2,1)\}} + \frac{x_3}{r_0} \mathbbm 1_{\{(1,2)\}} + \frac{x_4}{r_0} \mathbbm 1_{\{(3,3)\}} + \frac{x_5}{r_0} \mathbbm 1_{\{(4,4)\}} + \frac{x_6}{r_0} \mathbbm 1_{\{(5,1)\}} + \frac{x_7}{r_0} \mathbbm 1_{\{(6,5)\}}\\
%&=& 0.1 \mathbbm 1_{\{(1,1)\}} + 0.2 \mathbbm 1_{\{(2,1)\}} + 0.2 \mathbbm 1_{\{(1,2)\}} + 0.2 \mathbbm 1_{\{(3,3)\}} + 0.1 \mathbbm 1_{\{(4,4)\}} + 0.1 \mathbbm 1_{\{(5,1)\}} + 0.1 \mathbbm 1_{\{(6,5)\}}\\
&=& 0.1 \mathbbm 1_{\{(1,1),(4,4),(5,1),(6,5)\}} + 0.2 \mathbbm 1_{\{(2,1),(1,2),(3,3)\}}
\end{eqnarray*}
where $\mathbbm 1_A$ is the indicator function of the subset $A$. Consequently, $\frac{1}{r_{0}}P$ is the PBN matrix of the PBN $\mathcal{P} = (V, F_{1}, F_{2}, \mathcal{D})$.

\newpage

\subsubsection{SER 2}\label{subsubsection:review_and_demo_SER_2}

\begin{algorithm}
\caption{Simple Entry Removal Algorithm 2 (SER 2) \cite{SERs_paper}}\label{alg:SER_2}
\textbf{Input}: A $2^{n} \times 2^{n}$ matrix $P$ such that $P = r_{0} Q$ for some $r_{0} > 0$ and $2^{n} \times 2^{n}$ TPM $Q$. \\
\textbf{Output}: Positive real numbers $x_{1}, x_{2}, \ldots, x_{K}$ and $2^{n} \times 2^{n}$ BN matrices $A_{1}, A_{2}, \ldots, A_{K}$ such that $ P = \sum^{K}_{i = 1} x_{i} A_{i}$ and $\sum^{K}_{i = 1} x_{i} = r_{0}$.

\begin{enumerate}
\item Set $R_{1} \gets P$ and $k \gets 0$.

\item Set $k \gets k + 1$.

\item For each column of $R_{k}$, choose the largest positive entry in that column. 
If in any one of these columns, there are more than one positive entry that has the largest value, choose any one of them.
Suppose that the chosen entries are $R_{k}(k_{1}, 1)$, $R_{k}(k_{2}, 2)$, $R_{k}(k_{3}, 3)$, \ldots, $R_{k}(k_{2^{n}}, 2^{n})$. 
Define $x_{k} \coloneqq \min \left( R_{k}(k_{1}, 1), R_{k}(k_{2}, 2), R_{k}(k_{3}, 3), \ldots, R_{k}(k_{2^{n}}, 2^{n}) \right)$.
Define the $2^{n} \times 2^{n}$ BN matrix $A_{k} \coloneqq \langle k_{1}, k_{2}, k_{3}, \ldots, k_{2^{n}} \rangle$.

\item Compute $R_{k + 1} \coloneqq R_{k} - x_{k}A_{k}$.

\item If $R_{k+1}$ is the zero matrix, go to Step 6; otherwise, go to Step 2.

\item Set $K \gets k$. Output the positive real numbers $x_{1}, x_{2}, \ldots, x_{K}$ and BN matrices $A_{1}, A_{2}, \ldots, A_{K}$.

Note that $P = \sum^{K}_{i = 1} x_{i}A_{i}$. By Proposition \ref{prop:sum_of_xi_is_1_automatic}, $\sum^{K}_{i = 1} x_{i} = r_{0}$.
\end{enumerate}
\end{algorithm}

We now demonstrate SER 2 by presenting its step-by-step execution on the simple input matrix $P$ defined in Eq.\ (\ref{eq:4-by-4_example_PBN_matrix}). We also present how the outputs of this execution can be used to construct a PBN with PBN matrix equal to $\frac{1}{r_{0}} P$.

Let $R_{1} \coloneqq P$. In SER 2, in the 1\textsuperscript{st} iteration of steps 2 to 5, we choose the largest positive entry in each column of $R_{1}$ (marked by underscores below). Because the minimum of the four chosen entries is $5$, we subtract $5$ from these four entries and get
\begin{eqnarray*}
R_{1} = 
\left(
\begin{array}{cccc}
1 			& \underline{5}	& \underline{6}	& 0 \\
4 			& 0 			& 2 			& 0 \\
\underline{5} 	& 2 			& 0 			& \underline{10} \\
0 			& 3 			& 2 			& 0 \\
\end{array}
\right) & = &
5
\left(
\begin{array}{cccc}
0 & 1 & 1 & 0 \\
0 & 0 & 0 & 0 \\
1 & 0 & 0 & 1 \\
0 & 0 & 0 & 0 \\
\end{array}
\right)
+
\left(
\begin{array}{cccc}
1 & 0 & 1 & 0 \\
4 & 0 & 2 & 0 \\
0 & 2 & 0 & 5 \\
0 & 3 & 2 & 0 \\
\end{array}
\right) \\
& \eqqcolon & x_{1} A_{1} + R_{2}.
\end{eqnarray*}

In the 2\textsuperscript{nd} iteration of steps 2 to 5, we choose the largest positive entry in each column of $R_{2}$. Because the largest positive value in column $3$ occurs twice ($R_{2}(2, 3) = R_{2}(4, 3) = 2$), we arbitrarily choose one of them, say $R_{2}(2, 3)$. Because the minimum of the four chosen entries (marked by underscores below) is $2$, we subtract $2$ from these four entries and get
\begin{eqnarray*}
R_{2} = 
\left(
\begin{array}{cccc}
1 			& 0 			& 1 			& 0 \\
\underline{4} 	& 0 			& \underline{2} 	& 0 \\
0 			& 2 			& 0 			& \underline{5} \\
0 			& \underline{3} 	& 2 			& 0 \\
\end{array}
\right) & = &
2
\left(
\begin{array}{cccc}
0 & 0 & 0 & 0 \\
1 & 0 & 1 & 0 \\
0 & 0 & 0 & 1 \\
0 & 1 & 0 & 0 \\
\end{array}
\right)
+
\left(
\begin{array}{cccc}
1 & 0 & 1 & 0 \\
2 & 0 & 0 & 0 \\
0 & 2 & 0 & 3 \\
0 & 1 & 2 & 0 \\
\end{array}
\right) \\
& \eqqcolon & x_{2} A_{2} + R_{3}.
\end{eqnarray*}

Executing steps 2 to 5 of SER 2 repeatedly, we have
\begin{eqnarray*}
R_{3} = 
\left(
\begin{array}{cccc}
1 			& 0 			& 1 			& 0 \\
\underline{2} 	& 0 			& 0 			& 0 \\
0 			& \underline{2} 	& 0 			& \underline{3} \\
0 			& 1 			& \underline{2} 	& 0 \\
\end{array}
\right) & = &
2
\left(
\begin{array}{cccc}
0 & 0 & 0 & 0 \\
1 & 0 & 0 & 0 \\
0 & 1 & 0 & 1 \\
0 & 0 & 1 & 0 \\
\end{array}
\right)
+
\left(
\begin{array}{cccc}
1 & 0 & 1 & 0 \\
0 & 0 & 0 & 0 \\
0 & 0 & 0 & 1 \\
0 & 1 & 0 & 0 \\
\end{array}
\right) \\
& \eqqcolon & x_{3} A_{3} + R_{4},
\end{eqnarray*}
\begin{eqnarray*}
R_{4} = 
\left(
\begin{array}{cccc}
\underline{1}	& 0 			& \underline{1}	& 0 \\
0 			& 0 			& 0 			& 0 \\
0 			& 0 			& 0 			& \underline{1} \\
0 			& \underline{1}	& 0 			& 0 \\
\end{array}
\right) & = &
1
\left(
\begin{array}{cccc}
1 & 0 & 1 & 0 \\
0 & 0 & 0 & 0 \\
0 & 0 & 0 & 1 \\
0 & 1 & 0 & 0 \\
\end{array}
\right)
+
\left(
\begin{array}{cccc}
0 & 0 & 0 & 0 \\
0 & 0 & 0 & 0 \\
0 & 0 & 0 & 0 \\
0 & 0 & 0 & 0 \\
\end{array}
\right) \\
& \eqqcolon & x_{4} A_{4} + R_{5}.
\end{eqnarray*}

Because $R_{5}$ is the zero matrix, SER 2 outputs the positive real numbers $x_{1}, x_{2}, x_{3}, x_{4}$ and BN matrices $A_{1}, A_{2}, A_{3}, A_{4}$. It can be easily checked that each $x_{i} > 0$, $\sum^{4}_{i = 1} x_{i} = 10 = r_{0}$ and $\sum^{4}_{i = 1} x_{i} A_{i} = P$ as expected.

From the outputs $x_{1}, x_{2}, x_{3}, x_{4}$ and $A_{1}, A_{2}, A_{3}, A_{4}$, we can construct a $2$-node PBN $\mathcal{P} = (V, F_{1}, F_{2}, \mathcal{D})$ whose PBN matrix equals $\frac{1}{r_{0}} P$. Let $V=\{v_{1}, v_{2}\}$. The BNs represented by the BN matrices $A_{1}, A_{2}, A_{3}, A_{4}$ are given in Tables~\ref{table:BN_represented_by_A1_SER_2}--\ref{table:BN_represented_by_A4_SER_2}:

\begin{table}[h!]
\begin{center}
\begin{tabular}{|c|cc|c|c|}
\hline BN Global States & $v_1 (t)$     & $v_2(t)$   & $f^{(1)}_{1}$ & $f^{(2)}_{1}$  \\
\hline
1 & 0 & 0 & 1 & 0 \\
2 & 0 & 1 & 0 & 0 \\
3 & 1 & 0 & 0 & 0 \\
4 & 1 & 1 & 1 & 0 \\
\hline
\end{tabular}
\caption{The truth table for the BN represented by the BN matrix $A_{1}$.}
\label{table:BN_represented_by_A1_SER_2}
\end{center}
\end{table}

\FloatBarrier

\begin{table}[h!]
\begin{center}
\begin{tabular}{|c|cc|c|c|}
\hline BN Global States & $v_1 (t)$     & $v_2(t)$   & $f^{(1)}_{2}$ & $f^{(2)}_{2}$  \\
\hline
1 & 0 & 0 & 0 & 1 \\
2 & 0 & 1 & 1 & 1 \\
3 & 1 & 0 & 0 & 1 \\
4 & 1 & 1 & 1 & 0 \\
\hline
\end{tabular}
\caption{The truth table for the BN represented by the BN matrix $A_{2}$.}
\label{table:BN_represented_by_A2_SER_2}
\end{center}
\end{table}

\FloatBarrier

\begin{table}[h!]
\begin{center}
\begin{tabular}{|c|cc|c|c|}
\hline BN Global States & $v_1 (t)$     & $v_2(t)$   & $f^{(1)}_{3}$ & $f^{(2)}_{3}$  \\
\hline
1 & 0 & 0 & 0 & 1 \\
2 & 0 & 1 & 1 & 0 \\
3 & 1 & 0 & 1 & 1 \\
4 & 1 & 1 & 1 & 0 \\
\hline
\end{tabular}
\caption{The truth table for the BN represented by the BN matrix $A_{3}$.}
\label{table:BN_represented_by_A3_SER_2}
\end{center}
\end{table}

\FloatBarrier

\begin{table}[h!]
\begin{center}
\begin{tabular}{|c|cc|c|c|}
\hline BN Global States & $v_1 (t)$     & $v_2(t)$   & $f^{(1)}_{2}$ & $f^{(2)}_{4}$  \\
\hline
1 & 0 & 0 & 0 & 0 \\
2 & 0 & 1 & 1 & 1 \\
3 & 1 & 0 & 0 & 0 \\
4 & 1 & 1 & 1 & 0 \\
\hline
\end{tabular}
\caption{The truth table for the BN represented by the BN matrix $A_{4}$.}
\label{table:BN_represented_by_A4_SER_2}
\end{center}
\end{table}

\FloatBarrier

The lists of Boolean functions associated with the nodes $v_1$ and $v_2$ are respectively $F_{1} \coloneqq \left( f^{(1)}_{1}, f^{(1)}_{2}, f^{(1)}_{3} \right)$ and $F_{2} \coloneqq \left( f^{(2)}_{1}, f^{(2)}_{2}, f^{(2)}_{3}, f^{(2)}_{4} \right)$. In addition, the probability distribution $\mathcal{D}$ on $[3] \times [4]$ is given by the probability mass function
\begin{eqnarray*}
&& \frac{x_1}{r_0} \mathbbm 1_{\{(1,1)\}} + \frac{x_2}{r_0} \mathbbm 1_{\{(2,2)\}} + \frac{x_3}{r_0} \mathbbm 1_{\{(3,3)\}} + \frac{x_4}{r_0} \mathbbm 1_{\{(2,4)\}} \\
&=& 0.1 \mathbbm 1_{\{(2,4)\}} + 0.2 \mathbbm 1_{\{(2,2),(3,3)\}} + 0.5 \mathbbm 1_{\{(1,1)\}}.
\end{eqnarray*}
Consequently, $\frac{1}{r_{0}}P$ is the PBN matrix of the PBN $\mathcal{P} = (V, F_{1}, F_{2}, \mathcal{D})$.

\subsubsection{Termination of the SER 1 and SER 2 Algorithms}\label{subsubsection:main_tools_main_thms_for_SER_1_and_2}

In this section, we justify that both SER 1 and SER 2 terminate for any input matrix $P=r_0Q$ where $r_0>0$ and $Q$ is a $2^{n} \times 2^{n}$ TPM. We begin by examining steps 3 and 4 of each of the two algorithms.

Concerning SER 1, we can regard steps 3 to 4 as a procedure (called $\lambda_{\textrm{SER1}}$) which takes as input a $2^{n} \times 2^{n}$ matrix $R$, outputs a positive real number $x$ and a $2^{n} \times 2^{n}$ BN matrix $A = \langle k_{1}, k_{2}, k_{3}, \ldots, k_{2^{n}} \rangle$, and computes a new matrix $R - xA$. 
If each entry of $R$ is non-negative and $\vec{1}^{\top}_{2^{n}}R = r \vec{1}^{\top}_{2^{n}}$ for some $r \in (0, r_{0}]$ where $r_{0} > 0$, 
then one of the following situations will occur:

Case 1: there exists a BN matrix $\tilde{A} = \langle p_{1}, p_{2}, p_{3}, \ldots, p_{2^{n}} \rangle$ such that $R = r \tilde{A}$ (i.e., $R$ is a positive multiple of some BN matrix). By looking at step 3 of SER 1, we can easily deduce that all actions of step 3 can be successfully carried out when $\lambda_{\textrm{SER1}}$ is executed with $R$ as input. The outputs $x$ and $A$ will equal $r$ and $\tilde{A}$ respectively. Then $R - x A = \mathbf{O}_{2^{n} \times 2^{n}}$. Moreover, $\mathcal{N}^{+}(R) = 2^{n} > 0 = \mathcal{N}^{+}(R - x A)$. We remark that when $R=R_k$ for some $k$ during the execution of SER 1,
then $R_{k+1} = \mathbf{O}_{2^{n} \times 2^{n}}$. Step 6 of SER 1 will be executed afterwards and the SER 1 algorithm will terminate.

Case 2: $R$ is not a positive multiple of some BN matrix. Because each entry of $R$ is non-negative and there exists $r \in (0, r_{0}]$ such that each column of $R$ adds up to $r$, there is at least one positive entry in each column of $R$. Therefore, in step 3, we can successfully carry out the action of choosing the smallest positive entry from $R$ and the largest positive entry in each of the other columns of $R$ to form $x$ (a positive real number) and $A = \langle k_{1}, k_{2}, k_{3}, \ldots, k_{2^{n}} \rangle$ (a BN matrix). Note that for $i = 1, 2, 3, \ldots, 2^{n}$, $R(k_{i}, i) \geq x$. Then, all entries of $R - x A$ are non-negative, and $\vec{1}^{\top}_{2^{n}}(R - x A) = (r - x) \vec{1}^{\top}_{2^{n}}$. Because $R \neq x A$, $R - x A$ contains positive entries and hence $0 < r - x < r_{0}$. Moreover, the entries of $R$ and $R - x A$ are the same except for the $(k_{1}, 1), (k_{2}, 2), (k_{3}, 3), \ldots, (k_{2^{n}}, 2^{n})$ entries: for $i = 1, 2, 3, \ldots, 2^{n}$, $R(k_{i}, i) > 0$ and 
the $(k_{i}, i)$ entry of $R - xA$ equals $R(k_{i}, i) - x \geq 0$.
In particular, for $i^{*}$ such that $R(k_{i^{*}}, i^{*})$ is the smallest positive entry of $R$ chosen in step 3, 
the $(k_{i^{*}}, i^{*})$ entry of $R - xA$ equals $R(k_{i^{*}}, i^{*}) - x = 0$.
Therefore, $\mathcal{N}^{+}(R) > \mathcal{N}^{+}(R - x A)$.

Similarly, for SER 2, we can regard steps 3 to 4 as a procedure (called $\lambda_{\textrm{SER2}}$) which takes as input a $2^{n} \times 2^{n}$ matrix $R$, outputs a positive real number $x$ and a $2^{n} \times 2^{n}$ BN matrix $A = \langle k_{1}, k_{2}, k_{3}, \ldots, k_{2^{n}} \rangle$, and computes a new matrix $R - xA$. 
If each entry of $R$ is non-negative and $\vec{1}^{\top}_{2^{n}}R = r \vec{1}^{\top}_{2^{n}}$ for some $r \in (0, r_{0}]$ where $r_{0} > 0$, 
then one of the following situations will occur:

Case 1: there exists a BN matrix $\tilde{A} = \langle p_{1}, p_{2}, p_{3}, \ldots, p_{2^{n}} \rangle$ such that $R = r \tilde{A}$ (i.e., $R$ is a positive multiple of some BN matrix). By looking at step 3 of SER 2, we can easily deduce that all actions of step 3 can be successfully carried out when $\lambda_{\textrm{SER2}}$ is executed with $R$ as input. The outputs $x$ and $A$ will equal $r$ and $\tilde{A}$ respectively. Then $R - x A = \mathbf{O}_{2^{n} \times 2^{n}}$. Moreover, $\mathcal{N}^{+}(R) = 2^{n} > 0 = \mathcal{N}^{+}(R - x A)$. We remark that when $R=R_k$ for some $k$ during the execution of SER 2,
then $R_{k+1} = \mathbf{O}_{2^{n} \times 2^{n}}$. Step 6 of SER 2 will be executed afterwards and the SER 2 algorithm will terminate.

Case 2: $R$ is not a positive multiple of some BN matrix. Note that there is at least one positive entry in each column of $R$. Therefore, in step 3, we can successfully carry out the action of choosing the largest positive entry in each column of $R$, and subsequently define $x$ (a positive real number) and $A = \langle k_{1}, k_{2}, k_{3}, \ldots, k_{2^{n}} \rangle$ (a BN matrix). 
Note that for $i = 1, 2, 3, \ldots, 2^{n}$, 
$R(k_{i}, i) \geq x = \min \left( R(k_{1}, 1), R(k_{2}, 2), R(k_{3}, 3), \ldots, R(k_{2^{n}}, 2^{n}) \right)$.
Then, all entries of $R - x A$ are non-negative, and $\vec{1}^{\top}_{2^{n}}(R - x A) = (r - x) \vec{1}^{\top}_{2^{n}}$. Because $R \neq x A$, $R - x A$ contains positive entries and hence $0 < r - x < r_{0}$.
Moreover, the entries of $R$ and $R - x A$ are the same except for the $(k_{1}, 1), (k_{2}, 2), (k_{3}, 3), \ldots, (k_{2^{n}}, 2^{n})$ entries: 
for $i = 1, 2, 3, \ldots, 2^{n}$, $R(k_{i}, i) > 0$ and 
the $(k_{i}, i)$ entry of $R - xA$ equals $R(k_{i}, i) - x \geq 0$.
In particular, there exists $i^{*} \in [2^{n}]$ such that $R(k_{i^{*}}, i^{*}) = \min \left( R(k_{1}, 1), R(k_{2}, 2), R(k_{3}, 3), \ldots, R(k_{2^{n}}, 2^{n}) \right) = x$.  
Hence, 
the $(k_{i^{*}}, i^{*})$ entry of $R - xA$ equals $R(k_{i^{*}}, i^{*}) - x = 0$.
Therefore, $\mathcal{N}^{+}(R) > \mathcal{N}^{+}(R - x A)$.

With the above considerations, we can prove the following important theorem about SER 1 and SER 2.

\begin{theorem}\label{thm:SER*_must_terminate_and_output}
Let $P = r_{0} Q$ where $r_{0} > 0$ and $Q$ is a $2^{n} \times 2^{n}$ TPM.
When SER 1 is executed with $P$ as input, the algorithm will eventually terminate at step 6 and output positive real numbers $x_{1}, x_{2}, \ldots, x_{K}$ and distinct BN matrices $A_{1}, A_{2}, \ldots, A_{K}$ such that $P = \sum^{K}_{i = 1} x_{i} A_{i}$ and $\sum^{K}_{i = 1} x_{i} = r_{0}$.
In other words, 
$Q$ is the PBN matrix of some PBN which has the decomposition
\[ Q = \sum^K_{i=1}\ \frac{x_i}{r_0}\ A_i. \]
This statement is also true for SER 2.
\end{theorem}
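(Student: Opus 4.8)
The plan is to view SER 1 as a loop whose body is precisely the procedure $\lambda_{\textrm{SER1}}$ analyzed just above, and to carry along it a loop invariant, a termination measure, and a telescoping identity.

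First I would prove, by induction on $k$, the loop invariant: each time step 2 is executed for the $k$-th time, the matrix $R_k$ is non-negative, nonzero, and satisfies $\vec{1}^{\top}_{2^{n}} R_k = r_k \vec{1}^{\top}_{2^{n}}$ for some $r_k \in (0, r_0]$. The base case $k = 1$ holds because $R_1 = P = r_0 Q$ with $Q$ a TPM. For the inductive step I invoke the Case 1 / Case 2 dichotomy established for $\lambda_{\textrm{SER1}}$ above: in Case 1 the algorithm proceeds to step 6 and halts with $R_{k+1} = \mathbf{O}_{2^{n} \times 2^{n}}$; in Case 2 that analysis already shows $R_{k+1}$ is non-negative with all column sums equal to $r_k - x_k \in (0, r_0)$ and is nonzero, so step 5 sends the algorithm back to step 2 and the invariant persists with $r_{k+1} = r_k - x_k$. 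The same analysis also records that every $x_k$ is a positive real and every $A_k$ is a bona fide $2^{n} \times 2^{n}$ BN matrix, and --- the key fact for termination --- that $\mathcal{N}^{+}(R_{k+1}) < \mathcal{N}^{+}(R_k)$ whenever Case 2 occurs.

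Termination then follows by infinite descent: were step 6 never reached, Case 2 would occur at every iteration, yielding an infinite strictly decreasing sequence $\mathcal{N}^{+}(R_1) > \mathcal{N}^{+}(R_2) > \cdots$ of non-negative integers, which is impossible. So Case 1 happens at some iteration $K$, step 6 runs, and the algorithm outputs $x_1, \ldots, x_K > 0$ and BN matrices $A_1, \ldots, A_K$ with $R_{K+1} = \mathbf{O}_{2^{n} \times 2^{n}}$. Telescoping $R_{k+1} = R_k - x_k A_k$ gives $P = R_1 = R_{K+1} + \sum_{k=1}^{K} x_k A_k = \sum_{k=1}^{K} x_k A_k$, and Proposition~\ref{prop:sum_of_xi_is_1_automatic} gives $\sum_{k=1}^{K} x_k = r_0$; dividing through by $r_0$ yields $Q = \sum_{k=1}^{K} (x_k/r_0) A_k$ with positive coefficients summing to $1$. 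Since step 4 only decreases entries, $P(p,q) = R_1(p,q) \ge R_k(p,q)$ for all $k$ and all $(p,q)$; evaluated at the entries selected by $A_k$ this shows those entries are positive in $P$, i.e., $A_k \in B_n(P) = B_n(Q)$, so the $A_k$ are admissible building blocks of a decomposition.

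The only ingredient not already supplied by the preparatory discussion --- and the step I expect to require the most care --- is the \emph{distinctness} of $A_1, \ldots, A_K$. For this I would use a ``witness entry'': the Case 1 / Case 2 analysis shows that at each iteration $k$ there is a position $(p_k, q_k)$ selected by $A_k$ (so column $q_k$ of $A_k$ equals $\vec{e}_{p_k}$) with $R_k(p_k, q_k) > 0$ but $R_{k+1}(p_k, q_k) = 0$; since entries never increase, $R_j(p_k, q_k) = 0$ for every $j \ge k + 1$. If $A_j = A_k$ with $j > k$, then at iteration $j$ the entry selected in column $q_k$ is again at row $p_k$, forcing $R_j(p_k, q_k) > 0$ because step 3 only ever selects positive entries --- a contradiction. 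Hence the $A_k$ are pairwise distinct, and combining this with the previous paragraph gives exactly the claimed decomposition of $P$ (equivalently, of $Q$ after scaling), which encodes a PBN just as in the worked example following Algorithm~\ref{alg:SER_1}. Finally, the argument transfers verbatim to SER 2: one uses $\lambda_{\textrm{SER2}}$ and its own Case 1 / Case 2 analysis, noting that SER 2's step 3 likewise selects only positive entries and drives at least one of them to $0$.
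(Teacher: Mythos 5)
Your proposal is correct and follows essentially the same route as the paper's proof: termination via the strictly decreasing sequence $\mathcal{N}^{+}(R_{1}) > \mathcal{N}^{+}(R_{2}) > \cdots$, the telescoping identity combined with Proposition \ref{prop:sum_of_xi_is_1_automatic}, and distinctness via a selected entry that is driven to zero and stays zero while any later selection in that column must be positive. Your explicit loop invariant and the check that each $A_{k} \in B_{n}(P)$ are minor elaborations of what the paper's preceding Case 1 / Case 2 analysis already provides, not a different argument.
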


\begin{proof}
We will present the proof of this theorem for SER 1 only, because the proof for SER 2 follows the same line of reasoning.

Consider the $2^{n} \times 2^{n}$ matrices $R_{1}, R_{2}, \ldots$ generated during the execution of SER 1 with $P$ as input. 
If $P=R_1$ is a positive multiple of a BN matrix, then the theorem obviously holds. So from now on we assume that $P = R_{1}$ is not a positive multiple of a BN matrix. Suppose for a contradiction that every matrix $R_k$ generated by the algorithm were not a positive multiple of a BN matrix. Then
$\mathcal{N}^{+}(R_{1}), \mathcal{N}^{+}(R_{2}), \ldots$ is an infinite strictly decreasing sequence of positive integers, which is absurd.  
Therefore, $R_K$ is a positive multiple of a BN matrix for some positive integer $K$. 
Therefore, in the execution of SER 1 with $P$ as input, steps 2 to 5 of SER 1 is executed for $K$ times and then step 6 is executed to output the positive real numbers $x_{1}, x_{2}, \ldots, x_{K}$ and BN matrices $A_{1}, A_{2}, \ldots, A_{K}$.

By considering step 4 of SER 1, we can see that
\begin{eqnarray*}
\mathbf{O}_{2^{n} \times 2^{n}} = R_{K+1} = R_{K} - x_{K} A_{K}
& = & R_{K-1} - x_{K-1} A_{K-1} - x_{K} A_{K} \\
& = & \ldots 
= R_{1} - \sum^{K}_{i = 1} x_{i} A_{i} 
= P - \sum^{K}_{i = 1} x_{i} A_{i}.
\end{eqnarray*}
Therefore, $P = \sum^{K}_{i = 1} x_{i} A_{i}$.
By Proposition \ref{prop:sum_of_xi_is_1_automatic}, $\sum^{K}_{i = 1} x_{i} = r_{0}$.

Now, we are going to prove that $A_{1}, A_{2}, \ldots, A_{K}$ are distinct. Fix arbitrary $s, t \in [K]$ such that $s < t$. Write $A_{s}$ as $\langle k^{(s)}_{1}, k^{(s)}_{2}, k^{(s)}_{3}, \ldots, k^{(s)}_{2^{n}} \rangle$ and $A_{t}$ as $\langle k^{(t)}_{1}, k^{(t)}_{2}, k^{(t)}_{3}, \ldots, k^{(t)}_{2^{n}} \rangle$.
Consider the sequence $R_{1}, R_{2}, \ldots, R_{K+1}$. Note that for all $i, j \in [2^{n}]$, $R_{1}(i, j) \geq R_{2}(i, j) \geq \ldots \geq R_{K+1}(i, j) = 0$.
Because $R_{s+1} = R_{s} - x_{s} A_{s}$ and $R_{s}(k^{(s)}_{j^{*}}, j^{*}) = x_{s}$ for some $j^{*} \in [2^{n}]$, $R_{s+1}(k^{(s)}_{j^{*}}, j^{*}) = 0$, which implies that $R_{t}(k^{(s)}_{j^{*}}, j^{*}) = 0$.
Because $R_{t}(k^{(t)}_{j^{*}}, j^{*}) \geq x_{t} > 0$, $k^{(s)}_{j^{*}} \neq k^{(t)}_{j^{*}}$. Hence, $A_{s} \neq A_{t}$. Since $s$ and $t$ are arbitrary, $A_{1}, A_{2}, \ldots, A_{K}$ are distinct.

Because $\frac{x_{1}}{r_{0}}, \frac{x_{2}}{r_{0}}, \ldots, \frac{x_{K}}{r_{0}} > 0$, $\sum^{K}_{i = 1} \frac{x_{i}}{r_{0}} = 1$, $Q = \frac{1}{r_{0}} P = \sum^{K}_{i = 1} \frac{x_{i}}{r_{0}} A_{i}$ and $A_{1}, A_{2}, \ldots, A_{K}$ are distinct BN matrices, $\sum^{K}_{i = 1} \frac{x_{i}}{r_{0}} A_{i}$ is a decomposition of $Q$.
From $\frac{x_{1}}{r_{0}}, \frac{x_{2}}{r_{0}}, \ldots, \frac{x_{K}}{r_{0}}$ and $A_{1}, A_{2}, \ldots, A_{K}$, we can construct a PBN whose PBN matrix equals $Q$. The method of construction has been explained in the numerical demonstration of SER 1 in Section \ref{subsubsection:review_and_demo_SER_1}.
\end{proof}

\subsubsection{New Upper Bound Theorems for SER 1 and SER 2}\label{subsubsection:new_upper_bound_thms_for_SER_1_and_2}

Theorem \ref{thm:SER*_must_terminate_and_output} says that for any $2^{n} \times 2^{n}$ TPM $P$, we can obtain a decomposition of $P$ by executing SER 1 or SER 2 with $r_{0} P$ as input, where $r_{0}$ is any positive real number. In this section, we are going to prove three theorems related to the sparsity of the decompositions found using SER 1 and SER 2.

\begin{theorem}\label{thm:entry_removal_upper_bound_SER*}
Let $P$ be a $2^{n} \times 2^{n}$ TPM and let $r_{0}$ be any positive real number.
If $x_{1}, x_{2}, \ldots, x_{K} > 0$ and $A_{1}, A_{2}, \ldots, A_{K}$ (distinct BN matrices) are the outputs when SER 1 is applied to the matrix $r_{0}P$, then $K \leq \mathcal{N}^{+}(P) - 2^{n} + 1$.
This statement is also true for SER 2.
\end{theorem}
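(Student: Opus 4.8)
The plan is to extract the bound directly from the termination analysis of SER 1 already carried out in Section~\ref{subsubsection:main_tools_main_thms_for_SER_1_and_2}: that analysis shows that every non-final iteration strictly decreases the number of positive entries of the current residual, and that the final iteration acts on a residual having exactly $2^{n}$ positive entries. Counting how many entries can disappear before reaching $2^{n}$ gives the claim.

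First I would invoke Theorem~\ref{thm:SER*_must_terminate_and_output} to know that running SER 1 on $r_{0}P$ terminates after exactly $K$ passes through steps 2--5, producing residual matrices $R_{1}, R_{2}, \ldots, R_{K+1}$ with $R_{1} = r_{0}P$, with $R_{K+1} = \mathbf{O}_{2^{n} \times 2^{n}}$, and with $R_{k} \neq \mathbf{O}_{2^{n} \times 2^{n}}$ for every $k \leq K$. Since $r_{0} > 0$, scaling changes no entry's sign, so $\mathcal{N}^{+}(R_{1}) = \mathcal{N}^{+}(P)$. Next I would recall the dichotomy established for the procedure $\lambda_{\textrm{SER1}}$: each $R_{k}$ has non-negative entries and constant column sums lying in $(0, r_{0}]$, so it falls into Case~1 (it is a positive multiple of a BN matrix) or Case~2. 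Because $R_{K+1} = \mathbf{O}_{2^{n} \times 2^{n}}$ while $R_{k} \neq \mathbf{O}_{2^{n} \times 2^{n}}$ for $k \leq K$, the matrices $R_{1}, \ldots, R_{K-1}$ all fall into Case~2 and $R_{K}$ falls into Case~1. Case~2 gives $\mathcal{N}^{+}(R_{k+1}) \leq \mathcal{N}^{+}(R_{k}) - 1$ for $k = 1, \ldots, K-1$, while Case~1 gives $\mathcal{N}^{+}(R_{K}) = 2^{n}$.

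Finally I would chain these inequalities: $2^{n} = \mathcal{N}^{+}(R_{K}) \leq \mathcal{N}^{+}(R_{K-1}) - 1 \leq \cdots \leq \mathcal{N}^{+}(R_{1}) - (K-1) = \mathcal{N}^{+}(P) - K + 1$, which rearranges to $K \leq \mathcal{N}^{+}(P) - 2^{n} + 1$. The identical argument with $\lambda_{\textrm{SER2}}$ in place of $\lambda_{\textrm{SER1}}$ handles SER 2. I do not anticipate a genuine obstacle, since all the real work — the per-iteration drop in $\mathcal{N}^{+}$ and the value $2^{n}$ at the last step — is already in place; the only points requiring a little care are the off-by-one bookkeeping in the chain and the degenerate case $K = 1$, where $r_{0}P$ is itself a positive multiple of a BN matrix, the chain of Case~2 inequalities is empty, and the bound collapses to the true identity $\mathcal{N}^{+}(P) = 2^{n}$.
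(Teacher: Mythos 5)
Your proposal is correct and follows essentially the same route as the paper: both arguments use the residual sequence $R_{1},\ldots,R_{K+1}$, the fact that each of the first $K-1$ iterations drops $\mathcal{N}^{+}$ by at least $1$, and the observation that $R_{K}$ is a positive multiple of a BN matrix so that $\mathcal{N}^{+}(R_{K}) = 2^{n}$. The paper merely packages the bookkeeping as a telescoping sum instead of your chained inequalities, which is an inessential difference.
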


\begin{proof}
We will prove this theorem for SER 1 only, because the proof for SER 2 follows the same line of reasoning.

Consider the sequence of $2^{n} \times 2^{n}$ matrices $R_{1}, R_{2}, \ldots, R_{K+1}$ generated by SER 1 when the algorithm is executed with $r_{0} P$ as input. Note that $R_{1} = r_{0} P$ and $R_{K+1} = \mathbf{O}_{2^{n} \times 2^{n}}$. Moreover, $\mathcal{N}^{+}(P) = \mathcal{N}^{+}(r_{0}P) = \mathcal{N}^{+}(R_{1}) > \mathcal{N}^{+}(R_{2}) > \ldots > \mathcal{N}^{+}(R_{K}) > \mathcal{N}^{+}(R_{K+1}) = 0$. Therefore, $\mathcal{N}^{+}(R_{k}) - \mathcal{N}^{+}(R_{k+1}) \geq 1$ for $k = 1, 2, \ldots, K$.
By step 4 of SER 1, $R_{K} - x_{K} A_{K} = R_{K+1} = \mathbf{O}_{2^{n} \times 2^{n}}$, which implies that $R_{K}$ is a positive multiple of a BN matrix. Hence, $\mathcal{N}^{+}(R_{K}) = 2^{n}$. Therefore,
\begin{eqnarray}
\mathcal{N}^{+}(P)
& = &		\sum^{K}_{k = 1} \left[ \mathcal{N}^{+}(R_{k}) - \mathcal{N}^{+}(R_{k+1}) \right] \nonumber \\
& = &		\sum^{K-1}_{k = 1} \left[ \mathcal{N}^{+}(R_{k}) - \mathcal{N}^{+}(R_{k+1}) \right]
		+ \mathcal{N}^{+}(R_{K}) - \mathcal{N}^{+}(R_{K+1}) \nonumber \\
& = &		\sum^{K-1}_{k = 1} \left[ \mathcal{N}^{+}(R_{k}) - \mathcal{N}^{+}(R_{k+1}) \right]
		+ 2^{n} \nonumber \\
& \geq &	K - 1 + 2^{n}. \label{eq:SER_1_X_geq_K-1+2_to_n}
\end{eqnarray}
Rearranging, we get $K \leq \mathcal{N}^{+}(P) - 2^{n} + 1$.
\end{proof}

\begin{theorem}\label{thm:rational_upper_bound_SER_1}
Let $P$ be a $2^{n} \times 2^{n}$ rational TPM.
Let $p$ be the smallest positive integer such that all entries of $pP$ are integers. Let $a^{*}$ be the smallest positive entry of $P$. 
Let $r_{0}$ be any positive real number.
If $x_{1}, x_{2}, \ldots, x_{K} > 0$ and $A_{1}, A_{2}, \ldots, A_{K}$ (distinct BN matrices) are the outputs when SER 1 is applied to the matrix $r_{0}P$, then
$K \leq (1 - a^{*})p + 1$.
\end{theorem}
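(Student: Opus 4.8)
The plan is to exploit the discreteness of the residual matrices produced by SER 1 together with the fact that the very first coefficient $x_1$ is forced to be comparatively large.

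First I would reduce the analysis to an integer computation. Writing $d \coloneqq r_{0}/p$, the input matrix is $r_{0}P = d\,(pP)$, where $pP$ is a non-negative integer matrix all of whose columns sum to $p$. I would prove by induction on $k$ that every residual matrix $R_{k}$ generated by SER 1 on input $r_{0}P$ has the form $R_{k} = d\,S_{k}$ for some non-negative integer matrix $S_{k}$. The base case is $R_{1} = d\,(pP)$. For the inductive step, recall that in step 3 the chosen entries satisfy $R_{k}(k_{i},i) \ge x_{k}$; since $x_{k}$ is the smallest positive entry of $R_{k} = d\,S_{k}$, we have $x_{k} = d\,b_{k}$ where $b_{k}$ is the smallest positive entry of the integer matrix $S_{k}$, so $b_{k}$ is a positive integer and $b_{k} \ge 1$; then $R_{k+1} = R_{k} - x_{k}A_{k} = d\,(S_{k} - b_{k}A_{k})$, and $S_{k} - b_{k}A_{k}$ is a non-negative integer matrix because $S_{k}(k_{i},i) \ge b_{k}$ for every $i$ and $A_{k}$ is a $0$--$1$ matrix. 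This establishes the invariant and, as a by-product, shows $x_{k} \ge d = r_{0}/p$ for every $k \in [K]$.

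Next I would pin down $x_{1}$. Since the positive entries of $P$ have minimum $a^{*}$, the positive entries of $r_{0}P$ have minimum $r_{0}a^{*}$, so the first execution of step 3 yields $x_{1} = r_{0}a^{*}$ exactly. (Equivalently $b_{1} = p a^{*}$ is a positive integer, which incidentally re-proves $a^{*} \ge 1/p$.)

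Finally I would combine these facts with Proposition~\ref{prop:sum_of_xi_is_1_automatic}. Applying that proposition to the input $r_{0}P = r_{0}\cdot P$ (with $Q = P$) and the SER 1 outputs gives $\sum_{i=1}^{K} x_{i} = r_{0}$. Splitting off the first term and using $x_{i} \ge r_{0}/p$ for $i = 2, \dots, K$,
\[
r_{0} \;=\; \sum_{i=1}^{K} x_{i} \;\ge\; r_{0}a^{*} + (K-1)\,\frac{r_{0}}{p}.
\]
Dividing by $r_{0}$ and rearranging yields $K - 1 \le (1 - a^{*})p$, i.e. $K \le (1-a^{*})p + 1$, as required. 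The only real content is the integrality invariant of the second paragraph; once it is in place the inequality is immediate, so I expect the bookkeeping there---in particular verifying $S_{k} - b_{k}A_{k} \ge 0$ from the inequalities guaranteed by step 3---to be the main, though modest, obstacle.
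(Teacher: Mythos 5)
Your proof is correct and follows essentially the same route as the paper's: both rest on the integrality invariant that every residual $R_{k}$ has entries of the form $(r_{0}/p)\cdot(\text{non-negative integer})$, so each $x_{k}\ge r_{0}/p$, with $x_{1}=r_{0}a^{*}$ pinned down by step 3 of SER 1, and both conclude from $\sum_{k}x_{k}=r_{0}$ that $K\le(1-a^{*})p+1$. Your write-up just makes the induction (and the non-negativity check $S_{k}-b_{k}A_{k}\ge 0$) explicit where the paper says ``using the same argument.''
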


\begin{proof}
Consider the sequence of $2^{n} \times 2^{n}$ matrices $R_{1}, R_{2}, \ldots, R_{K+1}$ generated during the execution of SER 1 with $r_{0} P$ as input.
Note that each entry of $R_{1} = r_{0} P$ is of the form $\frac{r_{0}s}{p}$ for some non-negative integer $s$.
Because $x_{1}$ is the smallest positive entry of $R_{1}$ (see step 3 of SER 1), 
$x_{1} = r_{0} a^{*} = \frac{r_{0}s_{1}}{p}$ where $s_{1}=pa^{*}$ is a positive integer.
Hence, each entry of $R_{2} = R_{1} - x_{1} A_{1}$ is also of the form $\frac{r_{0}s}{p}$ for some non-negative integer $s$.
Using the same argument, we deduce that for each $k \in [K]$, $x_{k} = \frac{r_{0}s_{k}}{p}$ for some positive integer $s_{k}$.
By Theorem \ref{thm:SER*_must_terminate_and_output},
$\sum^{K}_{k = 1} x_{k} = r_{0}$. This implies
\begin{equation*}
1 
= \sum^{K}_{k = 1} \frac{s_{k}}{p}
= a^{*} + \sum^{K}_{k = 2} \frac{s_{k}}{p}
\geq a^{*} + \sum^{K}_{k = 2} \frac{1}{p}.
\end{equation*}
Rearranging, we get $K \leq (1 - a^{*}) p + 1$.
\end{proof}

\begin{theorem}\label{thm:rational_upper_bound_SER_2}
Let $P$ be a $2^{n} \times 2^{n}$ rational TPM.
Let $p$ be the smallest positive integer such that all entries of $pP$ are integers.
Let $a_{1}, a_{2}, a_{3}, \ldots, a_{2^{n}}$ be the largest entries in columns $1$, $2$, $3$, \ldots, $2^{n}$ of $P$ respectively. Let $\hat{a} \coloneqq \min(a_{1}, a_{2}, a_{3}, \ldots, a_{2^{n}})$.
Let $r_{0}$ be any positive real number.
If $x_{1}, x_{2}, \ldots, x_{K} > 0$ and $A_{1}, A_{2}, \ldots, A_{K}$ (distinct BN matrices) are the outputs when SER 2 is applied to the matrix $r_{0}P$, then
$K \leq (1 - \hat{a})p + 1$.
\end{theorem}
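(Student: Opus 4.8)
The plan is to mirror the proof of Theorem~\ref{thm:rational_upper_bound_SER_1}, the only structural change being the way the first extracted coefficient $x_1$ is pinned down in terms of the data of $P$. First I would look at the sequence of $2^n \times 2^n$ matrices $R_1, R_2, \ldots, R_{K+1}$ produced when SER 2 is run on $r_0 P$, with $R_1 = r_0 P$ and $R_{K+1} = \mathbf{O}_{2^n \times 2^n}$; existence and termination are guaranteed by Theorem~\ref{thm:SER*_must_terminate_and_output}. Since $pP$ is integral by the choice of $p$, every entry of $R_1$ has the form $r_0 s / p$ for some non-negative integer $s$.

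Next I would identify $x_1$. In step 3 of SER 2, the entry selected from column $i$ of $R_1 = r_0 P$ is the largest positive entry of that column, namely $r_0 a_i$ (it is positive because each column of a TPM has non-negative entries summing to $1$, hence a positive maximum). Therefore $x_1 = \min_i r_0 a_i = r_0 \hat{a}$, and writing $s_1 \coloneqq p\hat{a}$ we note that $s_1$ is a positive integer, being $\min_i (p a_i)$, a minimum of positive integers. Consequently $R_2 = R_1 - x_1 A_1$ again has all entries of the form $r_0 s / p$ with $s$ a non-negative integer. Iterating: at each step $x_k$ equals one of the entries of $R_k$ (the minimum of the chosen per-column maxima), so $x_k = r_0 s_k / p$ for some positive integer $s_k$ (positive because $x_k > 0$), and this holds for every $k \in [K]$.

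Finally I would invoke $\sum_{k=1}^K x_k = r_0$ from Theorem~\ref{thm:SER*_must_terminate_and_output}, divide through by $r_0$ and multiply by $p$ to obtain $\sum_{k=1}^K s_k = p$. Since $s_1 = p\hat{a}$ and $s_k \geq 1$ for $k = 2, \ldots, K$, this gives $p \geq p\hat{a} + (K-1)$, which rearranges to $K \leq (1 - \hat{a})p + 1$.

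I do not expect a genuine obstacle: this is a routine adaptation of Theorem~\ref{thm:rational_upper_bound_SER_1}. The one step requiring care is the identification $x_1 = r_0\hat{a}$, which depends on reading step 3 of SER 2 correctly (per-column maxima, then their minimum) and on the observation that each $a_i > 0$ so that $p\hat{a} \geq 1$; without the strict positivity the closing inequality could be vacuous. A secondary point worth a sentence is the degenerate case where $R_1$ is already a positive multiple of a BN matrix (then $K = 1$, $\hat{a} = 1$, and the bound reads $K \leq 1$), which the same argument handles without modification.
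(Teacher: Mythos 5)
Your proposal is correct and follows essentially the same route as the paper's own proof: track the residues $R_k$, note every entry stays of the form $r_0 s/p$, identify $x_1 = r_0\hat{a}$ from step 3 of SER 2, deduce each $x_k = r_0 s_k/p$ with $s_k$ a positive integer, and conclude from $\sum_{k=1}^{K} x_k = r_0$. The only differences are cosmetic (you clear denominators to get $\sum_k s_k = p$, and you add remarks on positivity of the $a_i$ and the degenerate case), so no further changes are needed.
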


\begin{proof}
Consider the sequence of $2^{n} \times 2^{n}$ matrices $R_{1}, R_{2}, \ldots, R_{K+1}$ generated during the execution of SER 2 with $r_{0} P$ as input.
Note that each entry of $R_{1} = r_{0} P$ is of the form $\frac{r_{0}s}{p}$ for some non-negative integer $s$.
Moreover, note that $r_{0} a_{1}, r_{0} a_{2}, r_{0} a_{3}, \ldots, r_{0} a_{2^{n}}$ are the largest entries in columns $1, 2, 3, \ldots, 2^{n}$ of $R_{1} = r_{0} P$ respectively.
Therefore, by step 3 of SER 2, $x_{1} = \min(r_{0}a_{1}, r_{0}a_{2}, r_{0}a_{3}, \ldots, r_{0}a_{2^{n}}) = r_{0} \hat{a} = \frac{r_{0}s_{1}}{p}$, where $s_{1} = p \hat{a}$ is a positive integer.
Hence, each entry of $R_{2} = R_{1} - x_{1} A_{1}$ is also of the form $\frac{r_{0}s}{p}$ for some non-negative integer $s$.
Using the same argument, we deduce that for each $k \in [K]$, $x_{k} = \frac{r_{0}s_{k}}{p}$ for some positive integer $s_{k}$.
By Theorem \ref{thm:SER*_must_terminate_and_output},
$\sum^{K}_{k = 1} x_{k} = r_{0}$. This implies
\begin{equation*}
1 
= \sum^{K}_{k = 1} \frac{s_{k}}{p}
= \hat{a} + \sum^{K}_{k = 2} \frac{s_{k}}{p}
\geq \hat{a} + \sum^{K}_{k = 2} \frac{1}{p}.
\end{equation*}
Rearranging, we get $K \leq (1 - \hat{a}) p + 1$.
\end{proof}

\newpage % [2023-10-2] Perhaps I should remove this \newpage command later (when I complete writing up the existing algos section and the GER section.)

\subsection{Modified Orthogonal Matching Pursuit Algorithm}\label{subsection:MOMP}

The pseudocode of MOMP \cite{MOMP_paper} is shown in Algorithm \ref{alg:MOMP}. 
Before presenting MOMP, we define some notations. 

Let $P$ be any $2^{n} \times 2^{n}$ TPM. Suppose that $B_{n}(P) = \{ A_{1}, A_{2}, \ldots, A_{N} \}$ where $N=\prod^{2^{n}}_{j = 1} |D_{j}(P)|$.
Define
$A \coloneqq \left[ \textrm{vec}(A_{1}), \textrm{vec}(A_{2}), \ldots, \textrm{vec}(A_{N}) \right] \in \mathbb{R}^{2^{2n} \times N}$. Consider an arbitrary subset $U = \{u_{1}, u_{2}, \ldots, u_{m}\}$ of $[N]$, where the elements are labeled so that $u_{1} < u_{2} < \ldots < u_{m}$. 
Define $A_{U} \coloneqq \left[ \textrm{vec}(A_{u_{1}}), \textrm{vec}(A_{u_{2}}), \ldots, \textrm{vec}(A_{u_{m}}) \right] \in \mathbb{R}^{2^{2n} \times m}$.

\begin{algorithm}
\caption{Modified Orthogonal Matching Pursuit Algorithm (MOMP) \cite{MOMP_paper}}\label{alg:MOMP}
\textbf{Input}: $\vec{b} \coloneqq \textrm{vec}(P) \in \mathbb{R}^{2^{2n}}$, where $P$ is a $2^{n} \times 2^{n}$ TPM. \\
\textbf{Parameter}: $\varepsilon_{\textrm{tolerance}}$ (parameter of error tolerance) \\
\textbf{Output}: $\vec{x}^{k} \in \mathcal{M}_{N}$ (the vector of coefficients for BN matrices in $B_{n}(P)$)

\begin{enumerate}
\item Choose an initial guess $\vec{x}^{0} \in \mathcal{M}_{N}$. Set $S^0 \gets \varnothing$, $k \gets 0$ and $e^{0} \gets +\infty$.

\item While $e^{k} > \varepsilon_{\textrm{tolerance}}$:

\begin{enumerate}
\item Find $j_{k + 1} \in [N]$ such that
$j_{k+1} \in \argmax\limits_{j \in [N]} \vec{e}^\top_{j} A^\top (\vec{b} - A\vec{x}^{k})$,
where $\vec{e}_{j} \in \mathbb{R}^{N}$.

\item Set $S^{k+1} \gets S^{k} \cup \{ j_{k+1} \}$.

\item Find $\vec{x}^{k+1} \in \mathcal{M}_{N}$ such that
$\vec{x}^{k+1} \in \argmin\limits_{\vec{x} \in \mathcal{M}_{N}, \textrm{supp}(\vec{x}) \subseteq S^{k+1} } \frac{1}{2} \| \vec{b} - A\vec{x} \|^{2}_{2}$.

\item Let $\vec{r}^{k+1} \coloneqq \vec{b} - A \vec{x}^{k+1}$. Set 
\begin{eqnarray*}
\lefteqn{
e^{k+1} \gets 
\left\| 
A^{\top}_{\textrm{supp}(\vec{x}^{k+1})}\vec{r}^{k+1} - 
[(\vec{x}^{k+1})^{\top} A^{\top} \vec{r}^{k+1}] \vec{1}_{\left| \textrm{supp}(\vec{x}^{k+1}) \right|}
\right\|_{2}
+
} \\
& & \left\| \max
\left( 
A^{\top}_{[N] \setminus \textrm{supp}(\vec{x}^{k+1})}\vec{r}^{k+1} - 
[(\vec{x}^{k+1})^{\top} A^{\top} \vec{r}^{k+1}] \vec{1}_{\left| [N] \setminus \textrm{supp}(\vec{x}^{k+1}) \right|}, 
\vec{0}_{\left| [N] \setminus \textrm{supp}(\vec{x}^{k+1}) \right|} 
\right) \right\|_{2}.
\end{eqnarray*} 

\item Set $k \gets k+1$.

\end{enumerate} % end middle layer enumerate

\item Return $\vec{x}^{k}$.
\end{enumerate} % end outer-layer enumerate

\end{algorithm}

\newpage

\section{The Greedy Entry Removal Algorithm and its Upper Bound Theorems}\label{section:GER_and_upper_bounds}

\subsection{The Greedy Entry Removal Algorithm}\label{subsection:GER_pseudocode}

In this section, we propose the Greedy Entry Removal algorithm (GER) for the construction of sparse PBNs. Before we present the pseudocode of GER, we need to define some notations and function. Consider any real-valued $n_{r} \times n_{c}$ matrix $C$.

First, for all $b \in \mathbb{R}$, define
\begin{equation*}
\textrm{Col\_indices}(b, C) \coloneqq 
\left\{ j \in [n_{c}] : \textrm{the $j$-th column of $C$ contains $b$} \right\}.
\end{equation*}
We call $\textrm{Col\_indices}(b, C)$ \textit{the set of column indices of $b$ in $C$}.
Note that $| \textrm{Col\_indices}(b, C) |$ is the number of columns of $C$ which contain $b$. We call $| \textrm{Col\_indices}(b, C) |$ \textit{the column frequency of $b$ in $C$}.

Second, for any column vector $\vec{v} = (v_{1}, v_{2}, \ldots, v_{n})^{\top} \in \mathbb{R}^{n}$ and $b \in \mathbb{R}$, define
\begin{eqnarray*}
\textrm{Occurrences}(b, \vec{v}) 
& \coloneqq &
\left\{ i \in [n] : v_{i} = b \right\}, \\
\textrm{Larger}(b, \vec{v})
& \coloneqq &
\left\{ i \in [n] : v_{i} > b \right\}.
\end{eqnarray*}
We call $\textrm{Occurrences}(b, \vec{v})$ and $\textrm{Larger}(b, \vec{v})$
\textit{the set of occurrences of $b$ in $\vec{v}$} and \textit{the set of larger occurrences of $b$ in $\vec{v}$} respectively.

We remark that in the GER algorithm, certain matrices $C$ will be updated at certain steps. After an update, $\textrm{Col\_indices}(b, C)$, $|\textrm{Col\_indices}(b, C)|$, $\textrm{Occurrences}(b, C(:, j))$ and $\textrm{Larger}(b, C(:, j))$ will change accordingly, where $b$ is any real number and $j \in [n_{c}]$.

For example, if
\[ C = \begin{pmatrix}
1 & 0 & 0 & 3\\
2 & 5 & 3 & 0\\
3 & 0 & 3 & 0\\
0 & 1 & 0 & 3
\end{pmatrix}, \]
then
\begin{itemize}
\item $\textrm{Col\_indices}(1, C) = \{ 1, 2 \}$,  
$\textrm{Col\_indices}(2, C) = \{ 1 \}$,
$\textrm{Col\_indices}(3, C) = \{ 1, 3, 4 \}$, \\
$\textrm{Col\_indices}(4, C) = \varnothing$,
$\textrm{Col\_indices}(5, C) = \{ 2 \}$.

\item Hence, the column frequencies of $1$, $2$, $3$, $4$, $5$ in $C$ are $2$, $1$, $3$, $0$, $1$ respectively. 

\item 
$\textrm{Occurrences}(3, C(:, 1)) = \{ 3 \}$,
$\textrm{Occurrences}(3, C(:, 3)) = \{ 2, 3 \}$,
$\textrm{Occurrences}(3, C(:, 4)) = \{ 1, 4 \}$.

\item 
$\textrm{Larger}(2, C(:, 1)) = \{ 3 \}$,
$\textrm{Larger}(2, C(:, 2)) = \{ 2 \}$,
$\textrm{Larger}(2, C(:, 3)) = \{ 2, 3 \}$.
\end{itemize}
Suppose that $C$ is updated in the following way:
\begin{equation*}
C \gets 
C - 
\begin{pmatrix}
0 & 0 & 0 & 1 \\
0 & 1 & 0 & 0 \\
1 & 0 & 1 & 0 \\
0 & 0 & 0 & 0
\end{pmatrix}.
\end{equation*}
After the update,
\begin{equation*}
C = 
\begin{pmatrix}
1 & 0 & 0 & 2 \\
2 & 4 & 3 & 0 \\
2 & 0 & 2 & 0 \\
0 & 1 & 0 & 3
\end{pmatrix}.
\end{equation*}
Then, the following changes (highlighted in bold) are resulted:
\begin{itemize}
\item $\textrm{Col\_indices}(1, C) = \{ 1, 2 \}$,  
$\textrm{Col\_indices}(2, C) = \mathbf{\{ 1, 3, 4 \}}$,
$\textrm{Col\_indices}(3, C) = \mathbf{\{ 3, 4 \}}$, \\
$\textrm{Col\_indices}(4, C) = \mathbf{\{ 2 \}}$,
$\textrm{Col\_indices}(5, C) = \mathbf{\varnothing}$.

\item Hence, the column frequencies of $1$, $2$, $3$, $4$, $5$ in $C$ are $2$, $\mathbf{3}$, $\mathbf{2}$, $\mathbf{1}$, $\mathbf{0}$ respectively. 

\item 
$\textrm{Occurrences}(3, C(:, 1)) = \mathbf{\varnothing}$,
$\textrm{Occurrences}(3, C(:, 3)) = \mathbf{\{ 2 \}}$,
$\textrm{Occurrences}(3, C(:, 4)) = \mathbf{\{ 4 \}}$.

\item 
$\textrm{Larger}(2, C(:, 1)) = \mathbf{\varnothing}$,
$\textrm{Larger}(2, C(:, 2)) = \{ 2 \}$,
$\textrm{Larger}(2, C(:, 3)) = \mathbf{\{ 2 \}}$.
\end{itemize}

Third, we need to prove the following lemma which the GER Entry Selection Algorithm (GERESA) relies upon:

\begin{lemma}\label{lemma:larger}
Let $R = r Q$ where $r > 0$ and $Q$ is a $2^{n} \times 2^{n}$ TPM.
Let $v$ be a positive entry of $R$ such that $v \leq \min\limits_{1\le j\le 2^{n}}\ \max\limits_{1\le i\le 2^{n}}\ R(i,j)$.
Then, for all $j^{'} \in [2^{n}] \setminus \textrm{Col\_indices}(v, R)$,
$\max\limits_{1\le i\le 2^{n}}\ R(i, j^{'}) > v$ and hence $\textrm{Larger}(v, R(:, j^{'})) \neq \varnothing$.
\end{lemma}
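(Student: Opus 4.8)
The plan is to reduce the statement to two simple observations about the $j'$-th column of $R$ and then combine them. First I would fix an arbitrary $j' \in [2^{n}] \setminus \textrm{Col\_indices}(v, R)$ and unpack what the hypothesis $j' \notin \textrm{Col\_indices}(v, R)$ means: by the definition of $\textrm{Col\_indices}$, the $j'$-th column of $R$ does not contain the value $v$, that is, $R(i, j') \neq v$ for every $i \in [2^{n}]$. In particular, the entry attaining the column maximum, $\max_{1 \le i \le 2^{n}} R(i, j')$, is different from $v$.

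Next I would establish the weak inequality $\max_{1 \le i \le 2^{n}} R(i, j') \ge v$. This is immediate from the hypothesis on $v$: since $j'$ is one of the indices over which the outer minimum is taken, $\max_{1 \le i \le 2^{n}} R(i, j') \ge \min_{1 \le j \le 2^{n}} \max_{1 \le i \le 2^{n}} R(i, j) \ge v$. This is the only place where the assumption $v \le \min_{j}\max_{i} R(i,j)$ is used; the hypothesis $R = rQ$ is needed only to guarantee that $R$ is a genuine $2^{n}\times 2^{n}$ real matrix, so that these finite maxima and minima are well defined.

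Combining the two observations gives the strict inequality $\max_{1 \le i \le 2^{n}} R(i, j') > v$, since a real number that is $\ge v$ but $\ne v$ must exceed $v$. Finally, choosing $i^{*} \in [2^{n}]$ with $R(i^{*}, j') = \max_{1 \le i \le 2^{n}} R(i, j')$ yields $R(i^{*}, j') > v$, so $i^{*} \in \textrm{Larger}(v, R(:, j'))$ and hence $\textrm{Larger}(v, R(:, j')) \ne \varnothing$; since $j'$ was arbitrary, the lemma follows. There is no genuine obstacle in this argument; the only point requiring care is that the hypothesis on $v$ supplies merely a non-strict inequality, and the strictness must be recovered from the exclusion $j' \notin \textrm{Col\_indices}(v, R)$ — which is precisely why that excluded set of column indices appears in the statement.
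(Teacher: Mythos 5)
Your argument is correct and is essentially the same as the paper's proof: the weak inequality $\max_{1\le i\le 2^{n}} R(i,j') \ge \min_{j}\max_{i} R(i,j) \ge v$ combined with $j' \notin \textrm{Col\_indices}(v,R)$ (so no entry of column $j'$ equals $v$) yields the strict inequality, and an index attaining the column maximum witnesses $\textrm{Larger}(v, R(:,j')) \neq \varnothing$. The only cosmetic difference is that the paper phrases the exclusion as the column not containing $v$ as a positive entry and notes the column maximum is positive, which is automatic here since $v > 0$.
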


\begin{proof}
Note that
$v \leq \min\limits_{1\le j\le 2^{n}}\ \max\limits_{1\le i\le 2^{n}}\ R(i,j) \leq \max\limits_{1\le i\le 2^{n}}\ R(i, j^{'})$.
Moreover, because $j^{'} \in [2^{n}] \setminus \textrm{Col\_indices}(v, R)$, $R(:, j^{'})$ does not contain $v$ as a positive entry.
Because $\max\limits_{1\le i\le 2^{n}}\ R(i, j^{'})$ is a positive entry of $R(:, j^{'})$,
$\max\limits_{1\le i\le 2^{n}}\ R(i, j^{'}) \neq v$ and hence
$\max\limits_{1\le i\le 2^{n}}\ R(i, j^{'}) > v$.
Let $i^{*} \in [2^{n}]$ such that $R(i^{*}, j^{'}) = \max\limits_{1\le i\le 2^{n}}\ R(i, j^{'})$.
Then, $i^{*} \in \textrm{Larger}(v, R(:, j^{'}))$.
\end{proof}

Finally, we need to define a score function $f_{\textrm{score}}$ which is used in the GER algorithm. Let $R^{'} \in \mathbb{R}^{n_{r} \times n_{c}}$. Let $z$ be any real number greater than $1$. Let $E^{+}(R^{'})$ be the set of positive entries of $R^{'}$. Then, we define
\begin{equation*}
f_{\textrm{score}}(R^{'}, z) \coloneqq
\sum_{c \in E^{+}(R^{'})} z^{\textrm{col.\ freq.\ of $c$ in $R^{'}$}}.
\end{equation*}

We remark that if a matrix $R^{'}$ has positive entries with large column frequencies, then $R^{'}$ tends to have a high score. We will illustrate this point with the following examples.
Consider the two matrices
\begin{equation*}
R = 
\begin{pmatrix}
45	& 9	& 85	& 1	\\
81	& 52	& 16	& 36	\\
0	& 65	& 26	& 94	\\
5	& 5	& 4	& 0
\end{pmatrix},\quad
R^{'} = 
\begin{pmatrix}
45	& 9	& 84	& 1	\\
81	& 52	& 6	& 36	\\
0	& 65	& 36	& 94	\\
5	& 5	& 5	& 0
\end{pmatrix}.
\end{equation*}
Consider the positive entries in each of these matrices:
\begin{itemize}
\item The column frequency of $5$ in $R$ is $2$, and the column frequencies of $1$, $4$, $9$, $16$, $26$, $36$, $45$, $52$, $65$, $81$, $85$, $94$ in $R$ all equal $1$.

\item The column frequency of $5$ in $R^{'}$ is $3$, the column frequency of $36$ in $R^{'}$ is $2$, and the column frequencies of $1$, $6$, $9$, $45$, $52$, $65$, $81$, $84$, $94$ in $R^{'}$ all equal 1.
\end{itemize}
If we set the score parameter $z$ to be $10$, then
\begin{eqnarray*}
f_{\textrm{score}}(R, z) & = &
\underbrace{z^{2}}_{\substack{\text{contribution} \\ \text{from $5$}}} + 
\underbrace{12z}_{\substack{\text{contribution from $1$, $4$, $9$, $16$,} \\ \text{$26$, $36$, $45$, $52$, $65$, $81$, $85$, $94$}}} = 220, \\
f_{\textrm{score}}(R^{'}, z) & = &
\underbrace{z^{3}}_{\substack{\text{contribution} \\ \text{from $5$}}} + 
\underbrace{z^{2}}_{\substack{\text{contribution} \\ \text{from $36$}}} + 
\underbrace{9z}_{\substack{\text{contribution from $1$, $6$,} \\ \text{$9$, $45$, $52$, $65$, $81$, $84$, $94$}}} = 1190.
\end{eqnarray*}
We can see that the score for $R^{'}$ is greater than the score for $R$.

We are now ready to present the GER algorithm:

\begin{algorithm}
\caption{The Greedy Entry Removal Algorithm (GER)}
\label{alg:GER}
\begin{algorithmic}[1]
\Require The score parameter $z > 1$, and \\
$P = r_{0} Q$ for some $r_{0} > 0$ and $2^{n} \times 2^{n}$ TPM $Q$.

\Ensure Positive real numbers $x_{1}, x_{2}, \ldots, x_{K}$ and distinct $2^{n} \times 2^{n}$ BN matrices $A_{1}, A_{2}, \ldots, A_{K}$ such that $ P = \sum^{K}_{i = 1} x_{i} A_{i}$ and $\sum^{K}_{i = 1} x_{i} = r_{0}$. 

\Statex

\State $R \gets P$
\Comment{Initialize the residue matrix $R$.}
\State $K \gets 0$
\Comment{A counter for the number of BN matrices added so far.}
\State $\mathbf{x} \gets [\,]$
\Comment{A list for holding weights $x_{1}, x_{2}, \ldots, x_{K}$.}
\State $\mathbf{BN} \gets [\,]$
\Comment{A list for holding BN matrices $A_{1}, A_{2}, \ldots, A_{K}$.}
%\LComment{The $k$-th row of BN consists of the row-indices of the $1$ in each of the column of the $k$-th component BN matrix.}

\While{ $R \ne \mathbf{O}_{2^{n} \times 2^{n}}$ }
\State $K \gets K+1$
\State $B \gets \min\limits_{1\le j\le 2^{n}}\ \max\limits_{1\le i\le 2^{n}}\ R(i,j)$
\State Determine the list $\mathbf v$ of positive entries of $R$ which are not greater than $B$ and attain the maximum column frequency in $R$. 

\State $x \gets 0$ and $\mathtt{score} \gets -\infty$
\Comment{Initialize the weight and its score.}

\ForAll{$v \in \mathbf v$}
\State $\mathtt{temp\_A} \gets \mathtt{GERESA}(R,v)$
\LComment{{\tt GERESA} is a subroutine that outputs a BN matrix based on the inputs $R$ and $v$.}
\State $\mathtt{temp\_score} \gets f_{\textrm{score}}(R - v * \mathtt{temp\_A}, z)$
\LComment{Using the score parameter $z$, compute a score $\mathtt{temp\_score}$ for the pair $(v, \mathtt{temp\_A})$.}

\If{($\mathtt{temp\_score}>\mathtt{score}$)\\ \hphantom{xx} or (($\mathtt{temp\_score}=\mathtt{score}$) and ($v>x$)) }
\State $\mathtt{score} \gets \mathtt{temp\_score}$
\State $x \gets v$
\State $A \gets \mathtt{temp\_A}$
\EndIf
\LComment{After the for loop is fully executed, the pair $(v,\mathtt{temp\_A})$ with the highest score will be chosen as the required component $(x, A)$.  When there are two or more highest-scoring pairs, the highest-scoring pair with the greatest $v$ will be chosen.}
\EndFor

\State $R \gets R - x A$
\State Append $x$ to $\mathbf{x}$ and append $A$ to $\mathbf{BN}$
\EndWhile

\State \Output $\mathbf{x}$ and ${\bf BN}$
\end{algorithmic}
\end{algorithm}

\begin{algorithm}
\caption{The GER Entry Selection Algorithm (GERESA)}
\label{alg:GERESA}
\begin{algorithmic}[1]
\Require 
$R = r Q$ for some $r > 0$ and $2^{n} \times 2^{n}$ TPM $Q$; \\
a positive entry $v$ of $R$ such that 
$v \leq \min\limits_{1\le j\le 2^{n}}\ \max\limits_{1\le i\le 2^{n}}\ R(i,j)$.

\Ensure A $2^{n} \times 2^{n}$ BN matrix $A$.

\Statex
\State Create the variables $p_{1}, p_{2}, p_{3}, \ldots, p_{2^{n}}$
\State $R_{\textrm{copy}} \gets R$

\LComment{Here, passing by value is performed. Hence, any modifications of $R_{\textrm{copy}}$ will not affect $R$, and vice versa. We remark that the variable $R$ will not be modified throughout GERESA, and that $R_{\textrm{copy}}$ will be modified iteratively in lines 6-11 and lines 12-17.}

\State $\mathtt{selected\_columns} \gets [\,]$
\LComment{This list contains the indices of those $p_{j}$'s to which we have assigned values.}

\ForAll{$j \in \textrm{Col\_indices}(v, R)$}
\Comment{$\textrm{Col\_indices}(v, R) \neq \varnothing$ since $v$ is a positive entry of $R$.}
\State Set $p_{j}$ to be an arbitrary element of $\textrm{Occurrences}(v, R(:, j))$
\LComment{Note that $j \in \textrm{Col\_indices}(v, R)$ and hence $R(:, j)$ contains $v$ as a positive entry. Therefore, $\textrm{Occurrences}(v, R(:, j)) \neq \varnothing$.}

\State $R_{\textrm{copy}}(p_{j}, j) \gets 0$
\LComment{Update $R_{\textrm{copy}}(:, j)$ after the determination of the value of $p_{j}$. Note that $R(p_{j}, j) - v = 0$.}

\State Append $j$ to $\mathtt{selected\_columns}$
\EndFor

\ForAll{$j \in [2^{n}] \setminus \textrm{Col\_indices}(v, R)$}
\State Set $p_{j}$ to be an element $i$ of $\textrm{Larger}(v, R(:, j))$ which maximizes the column frequency of $R(i, j) - v$ in $R_{\textrm{copy}}(:, \mathtt{selected\_columns})$
\LComment{By Lemma \ref{lemma:larger}, $\textrm{Larger}(v, R(:, j)) \neq \varnothing$.}

\State $R_{\textrm{copy}}(p_{j}, j) \gets R(p_{j}, j) - v$
\LComment{Update $R_{\textrm{copy}}(:, j)$ after the determination of the value of $p_{j}$. Moreover, note that $R(p_{j}, j) - v > 0$ because $p_{j} \in \textrm{Larger}(v, R(:, j))$.}

\State Append $j$ to $\mathtt{selected\_columns}$
\EndFor

\LComment{After the execution of lines 6-17, $p_{1}, p_{2}, p_{3}, \ldots, p_{2^{n}}$ have all been assigned values. Moreover,
$R_{\textrm{copy}}$ equals $R - v\langle p_{1}, p_{2}, p_{3}, \ldots, p_{2^{n}} \rangle$.}

\State \Output $\langle p_{1}, p_{2}, p_{3}, \ldots, p_{2^{n}} \rangle$
\end{algorithmic}
\end{algorithm}

\newpage % [2023-10-5] Perhaps I should remove this \newpage command later (when I complete writing up the existing algos section and the GER section.)

\subsection{Numerical Demonstration of the Greedy Entry Removal Algorithm}\label{subsection:GER_demo}

In this section, we illustrate GER in a holistic manner by presenting its execution on an example input matrix $Q$ which equals $r_{0} Q^{'}$ for some $r_{0} > 0$ and $8 \times 8$ TPM $Q^{'}$. The detailed step-by-step execution of GERESA (line 11 of GER) will not be presented here and will be deferred to Section \ref{subsection:GERESA_demo}. At the end of this section, we present how the output of GER can be used to construct a PBN with PBN matrix equal to $\frac{1}{r_{0}}Q = Q^{'}$.

Let $Q$ be the following matrix:
\begin{equation}\label{eq:example_integral_PBN_matrix}
Q \coloneqq \left(
\begin{array}{cccccccc}
32	& 0	& 2	& 0	& 0	& 4	& 0	& 0 \\
0  	& 0  	& 0 	& 36	& 4 	& 13  	& 0  	& 0 \\
0  	& 0  	& 9  	& 0  	& 0  	& 0 	& 61  	& 0 \\
0 	& 15  	& 0  	& 0  	& 0  	& 0  	& 0  	& 0 \\
0 	& 15 	& 30  	& 0  	& 0 	& 13  	& 0  	& 0 \\
29 	& 31 	& 20  	& 0 	& 25 	& 29  	& 0  	& 6 \\
0  	& 0  	& 0  	& 0 	& 32  	& 0  	& 0  	& 0 \\
0  	& 0  	& 0 	& 25  	& 0  	& 2  	& 0 	& 55 \\
\end{array}
\right).
\end{equation}
We can see that $r_{0} = 61$. We are going to apply GER to $Q$ when the GER score parameter $z$ is set to $10$.

In the initialization phase of GER (lines 1-4), we set $R$ to be $Q$ and $K$ to be $0$.
We also set $\mathbf{x}$ and $\mathbf{BN}$ to be two empty lists.

Similar to SER 1 and SER 2, until $R$ becomes the zero matrix, GER iteratively finds a positive real number $x$ and a BN matrix $A$ (lines 6-19), subtract $xA$ from $R$ (line 20), and append $x$ to $\mathbf{x}$ and $A$ to $\mathbf{BN}$ (line 21). Suppose that the loop in lines 5-21 has been iterated for $K^{'}$ times before $R$ becomes the zero matrix. Write $\mathbf{x}$ as $[x_{1}, x_{2}, \ldots, x_{K^{'}}]$ and $\mathbf{BN}$ as $[A_{1}, A_{2}, \ldots, A_{K^{'}}]$. GER eventually outputs $\mathbf{x}$ and $\mathbf{BN}$. Note that $Q = \sum^{K^{'}}_{i = 1} x_{i} A_{i}$ and hence $\sum^{K^{'}}_{i = 1} x_{i} = r_{0} = 61$ by Proposition \ref{prop:sum_of_xi_is_1_automatic}. 
For each $k \in [K^{'} + 1]$, let $R_{k} \coloneqq Q - \sum^{k - 1}_{i = 1} x_{i} A_{i}$.
We remark that the sequence of matrices $R_{1}, R_{2}, \ldots, R_{K^{'}+1}$ satisfies $\mathcal{N}^{+}(Q) = \mathcal{N}^{+}(R_{1}) > \mathcal{N}^{+}(R_{2}) > \ldots > \mathcal{N}^{+}(R_{K^{'}+1}) = \mathcal{N}^{+}(\mathbf{O}_{8 \times 8}) = 0$. 
Because $\mathcal{N}^{+}(Q) = \sum^{K^{'}}_{k = 1} \mathcal{N}^{+}(R_{k}) - \mathcal{N}^{+}(R_{k+1})$ is a positive constant, if $\mathcal{N}^{+}(R_{k}) - \mathcal{N}^{+}(R_{k+1})$ is larger for each $k$, $K^{'}$ would be smaller, which means a sparser decomposition as we desire. Therefore, GER tries to maximize $\mathcal{N}^{+}(R_{k}) - \mathcal{N}^{+}(R_{k+1})$ for each $k$, and it does so in a greedy manner. We will now proceed to explain in more detail how GER tries to achieve such maximization when the loop in lines 5-21 of GER is iterated repeatedly.

In the 1\textsuperscript{st} iteration of the while loop in lines 5-21, we try to choose a positive real number $x$ and an $8 \times 8$ BN matrix $A$, and subtract $xA$ from $R$. 
Note that $R_{1} = R = Q$ and the largest entries in the 1\textsuperscript{st} to 8\textsuperscript{th} columns of $R$ are $32$, $31$, $30$, $36$, $32$, $29$, $61$ and $55$ respectively. Therefore, $B$ is set to be $\min(32, 31, 30, 36, 32, 29, 61, 55) = 29$ (line 7). 
We remark that $x$ cannot be greater than $B = 29$; otherwise, $R - x A$ must contain at least one negative entry, which is undesirable. We also remark that $x$ must equal some positive entry value in $R$; otherwise, $R - x A$ will have the same number of positive entries as $R$ (i.e., $\mathcal{N}^{+}(R_{2}) = \mathcal{N}^{+}(R_{1})$), but we want $\mathcal{N}^{+}(R_{2}) < \mathcal{N}^{+}(R_{1})$.

We would like to choose $x$ and $A$ in such a way that:
\begin{description}
\item[(Target 1)] $\mathcal{N}^{+}(R_{1}) - \mathcal{N}^{+}(R_{2})$ is maximized, and that

\item[(Target 2)] $R_{2} = R_{1} - x A$ contains positive entries with high column frequencies. This is indicated by a high score $f_{\textrm{score}}(R_{2}, z)$.
\end{description}
We have already explained why we want to achieve Target 1, and we will explain why we want to achieve Target 2 when we discuss the 2\textsuperscript{nd} iteration of the while loop in lines 5-21 of GER.

To achieve Target 1, we consider the positive entries in $R$ (Eq.\ (\ref{eq:example_integral_PBN_matrix})) that are not greater than $B = 29$ and have the highest column frequencies in $R$. They are:
\begin{enumerate}
\item the value $v_{1} \coloneqq 2 = R(1, 3) = R(8, 6)$,

\item the value $v_{2} \coloneqq 4 = R(2, 5) = R(1, 6)$,

\item the value $v_{3} \coloneqq 25 = R(8, 4) = R(6, 5)$, and

\item the value $v_{4} \coloneqq 29 = R(6, 1) = R(6, 6)$.
\end{enumerate}
Then, we can see that if we choose $x$ and $A$ in one of the following four ways, then $\mathcal{N}^{+}(R_{1}) - \mathcal{N}^{+}(R_{2})$ is maximized and equals $2$ (thus achieving Target 1):
\begin{description}
\item[(Option 1.1)]
Set $x$ to be $v_{1} = 2$, and set $A$ to be some BN matrix $\langle p^{(1)}_{1}, p^{(1)}_{2}, p^{(1)}_{3}, \ldots, p^{(1)}_{8} \rangle$ such that the constraints $p^{(1)}_{3} = 1$ and $p^{(1)}_{6} = 8$ hold (call this collection of constraints $\mathcal{C}^{(1)}_{1}$);

\item[(Option 1.2)]
Set $x$ to be $v_{2} = 4$, and set $A$ to be some BN matrix $\langle p^{(2)}_{1}, p^{(2)}_{2}, p^{(2)}_{3}, \ldots, p^{(2)}_{8} \rangle$ such that the constraints $p^{(2)}_{5} = 2$ and $p^{(2)}_{6} = 1$ hold (call this collection of constraints $\mathcal{C}^{(2)}_{1}$);

\item[(Option 1.3)]
Set $x$ to be $v_{3} = 25$, and set $A$ to be some BN matrix $\langle p^{(3)}_{1}, p^{(3)}_{2}, p^{(3)}_{3}, \ldots, p^{(3)}_{8} \rangle$ such that the constraints $p^{(3)}_{4} = 8$ and $p^{(3)}_{5} = 6$ hold (call this collection of constraints $\mathcal{C}^{(3)}_{1}$);

\item[(Option 1.4)]
Set $x$ to be $v_{4} = 29$, and set $A$ to be some BN matrix $\langle p^{(4)}_{1}, p^{(4)}_{2}, p^{(4)}_{3}, \ldots, p^{(4)}_{8} \rangle$ such that the constraints $p^{(4)}_{1} = 6$ and $p^{(4)}_{6} = 6$ hold (call this collection of constraints $\mathcal{C}^{(4)}_{1}$).
\end{description}
GER ``explores'' each of these four options, and then determines which of them is the best one (in terms of Target 2) and should be chosen. To be more specific, for $j = 1, 2, 3, 4$, by executing GERESA on $R$ and $v_{j}$, GER forms a BN matrix $\langle p^{(j)}_{1}, p^{(j)}_{2}, p^{(j)}_{3}, \ldots, p^{(j)}_{8} \rangle$ such that the collection of constraints $\mathcal{C}^{(j)}_{1}$ is satisfied (related to achieving Target 1) and $R^{(j)}_{2} \coloneqq R - v_{j} \langle p^{(j)}_{1}, p^{(j)}_{2}, p^{(j)}_{3}, \ldots, p^{(j)}_{8} \rangle$ contains positive entries with high column frequencies (related to achieving Target 2). Then, it compares the scores $f_{\textrm{score}}(R^{(1)}_{2}, z)$, $f_{\textrm{score}}(R^{(2)}_{2}, z)$, $f_{\textrm{score}}(R^{(3)}_{2}, z)$, $f_{\textrm{score}}(R^{(4)}_{2}, z)$ and sets $x$ and $A$ to be the pair $v_{j}$ and $\langle p^{(j)}_{1}, p^{(j)}_{2}, p^{(j)}_{3}, \ldots, p^{(j)}_{8} \rangle$ that gives the highest score.

We first consider Option 1.1. GERESA outputs the BN matrix $\langle p^{(1)}_{1}, p^{(1)}_{2}, p^{(1)}_{3}, \ldots, p^{(1)}_{8} \rangle = \langle 1, 6, 1, 2, 7, 8, 3, 6 \rangle$ when $R$ and $v_{1}$ are input. Hence,
\begin{eqnarray}\label{eq:iteration_1_choice_v1}
R^{(1)}_{2}
& = & R - 2 \langle 1, 6, 1, 2, 7, 8, 3, 6 \rangle \nonumber \\
& = & \left(
\begin{array}{cccccccc}
30	& 0	& 0	& 0	& 0	& 4	& 0	& 0 \\
0  	& 0  	& 0 	& 34	& 4 	& 13  	& 0  	& 0 \\
0  	& 0  	& 9  	& 0  	& 0  	& 0 	& 59  	& 0 \\
0 	& 15  	& 0  	& 0  	& 0  	& 0  	& 0  	& 0 \\
0 	& 15 	& 30  	& 0  	& 0 	& 13  	& 0  	& 0 \\
29 	& 29 	& 20  	& 0 	& 25 	& 29  	& 0  	& 4 \\
0  	& 0  	& 0  	& 0 	& 30  	& 0  	& 0  	& 0 \\
0  	& 0  	& 0 	& 25  	& 0  	& 0  	& 0 	& 55 \\
\end{array}
\right).
\end{eqnarray}
GER then computes $f_{\textrm{score}}(R^{(1)}_{2}, z)$.
From Eq.\ (\ref{eq:iteration_1_choice_v1}), we can see that:
\begin{enumerate}
\item the column frequencies of $4$, $29$, $30$ in $R^{(1)}_{2}$ equal 3;

\item the column frequency of $25$ in $R^{(1)}_{2}$ equals 2;

\item the column frequencies of $9$, $13$, $15$, $20$, $34$, $55$, $59$ in $R^{(1)}_{2}$ equal 1.
\end{enumerate}
Therefore,
\begin{equation}\label{eq:iteration_1_score_u1}
f_{\textrm{score}}(R^{(1)}_{2}, z)
= 3 \times z^{3} + 1 \times z^{2} + 7 \times z^{1}
= 3 \times 10^{3} + 1 \times 10^{2} + 7 \times 10^{1}
= 3170.
\end{equation}

Next, we consider Option 1.2. GERESA outputs the BN matrix $\langle p^{(2)}_{1}, p^{(2)}_{2}, p^{(2)}_{3}, \ldots, p^{(2)}_{8} \rangle = \langle 6, 4, 3, 2, 2, 1, 3, 6 \rangle$ when $R$ and $v_{2}$ are input. Hence,
\begin{eqnarray}\label{eq:iteration_1_choice_v2}
R^{(2)}_{2}
& = & R - 4 \langle 6, 4, 3, 2, 2, 1, 3, 6 \rangle \nonumber \\
& = & \left(
\begin{array}{cccccccc}
32	& 0	& 2	& 0	& 0	& 0	& 0	& 0 \\
0  	& 0  	& 0 	& 32	& 0 	& 13  	& 0  	& 0 \\
0  	& 0  	& 5  	& 0  	& 0  	& 0 	& 57  	& 0 \\
0 	& 11  	& 0  	& 0  	& 0  	& 0  	& 0  	& 0 \\
0 	& 15 	& 30  	& 0  	& 0 	& 13  	& 0  	& 0 \\
25 	& 31 	& 20  	& 0 	& 25 	& 29  	& 0  	& 2 \\
0  	& 0  	& 0  	& 0 	& 32  	& 0  	& 0  	& 0 \\
0  	& 0  	& 0 	& 25  	& 0  	& 2  	& 0 	& 55 \\
\end{array}
\right).
\end{eqnarray}
GER then computes $f_{\textrm{score}}(R^{(2)}_{2}, z)$. From Eq.\ (\ref{eq:iteration_1_choice_v2}), we can see that:
\begin{enumerate}
\item the column frequencies of $2$, $25$, $32$ in $R^{(2)}_{2}$ equal 3;

\item the column frequencies of $5$, $11$, $13$, $15$, $20$, $29$, $30$, $31$, $55$, $57$ in $R^{(2)}_{2}$ equal 1.
\end{enumerate}
Therefore,
\begin{equation}\label{eq:iteration_1_score_u2}
f_{\textrm{score}}(R^{(2)}_{2}, z)
= 3 \times z^{3} + 10 \times z^{1}
= 3 \times 10^{3} + 10 \times 10^{1}
= 3100.
\end{equation}

Thirdly, we consider Option 1.3. GERESA outputs the BN matrix $\langle p^{(3)}_{1}, p^{(3)}_{2}, p^{(3)}_{3}, \ldots, p^{(3)}_{8} \rangle = \langle 6, 6, 5, 8, 6, 6, 3, 8 \rangle$ when $R$ and $v_{3}$ are input. Hence,
\begin{eqnarray}\label{eq:iteration_1_choice_v3}
R^{(3)}_{2}
& = & R - 25 \langle 6, 6, 5, 8, 6, 6, 3, 8 \rangle \nonumber \\
& = & \left(
\begin{array}{cccccccc}
32	& 0	& 2	& 0	& 0	& 4	& 0	& 0 \\
0  	& 0  	& 0 	& 36	& 4 	& 13  	& 0  	& 0 \\
0  	& 0  	& 9  	& 0  	& 0  	& 0 	& 36  	& 0 \\
0 	& 15  	& 0  	& 0  	& 0  	& 0  	& 0  	& 0 \\
0 	& 15 	& 5  	& 0  	& 0 	& 13  	& 0  	& 0 \\
4 	& 6 	& 20  	& 0 	& 0 	& 4  	& 0  	& 6 \\
0  	& 0  	& 0  	& 0 	& 32  	& 0  	& 0  	& 0 \\
0  	& 0  	& 0 	& 0  	& 0  	& 2  	& 0 	& 30 \\
\end{array}
\right).
\end{eqnarray}
GER then computes $f_{\textrm{score}}(R^{(3)}_{2}, z)$. From Eq.\ (\ref{eq:iteration_1_choice_v3}), we can see that:
\begin{enumerate}
\item the column frequency of $4$ in $R^{(3)}_{2}$ equals 3;

\item the column frequencies of $2$, $6$, $32$, $36$ in $R^{(3)}_{2}$ equal 2;

\item the column frequencies of $5$, $9$, $13$, $15$, $20$, $30$ in $R^{(3)}_{2}$ equal 1.
\end{enumerate}
Therefore,
\begin{equation}\label{eq:iteration_1_score_u3}
f_{\textrm{score}}(R^{(3)}_{2}, z)
= 1 \times z^{3} + 4 \times z^{2} + 6 \times z^{1}
= 1 \times 10^{3} + 4 \times 10^{2} + 6 \times 10^{1}
= 1460.
\end{equation}

Lastly, we consider Option 1.4. GERESA outputs the BN matrix $\langle p^{(4)}_{1}, p^{(4)}_{2}, p^{(4)}_{3}, \ldots, p^{(4)}_{8} \rangle = \langle 6, 6, 5, 2, 7, 6, 3, 8 \rangle$ when $R$ and $v_{4}$ are input. Hence,
\begin{eqnarray}\label{eq:iteration_1_choice_v4}
R^{(4)}_{2}
& = & R - 29 \langle 6, 6, 5, 2, 7, 6, 3, 8 \rangle \nonumber \\
& = & \left(
\begin{array}{cccccccc}
32	& 0	& 2	& 0	& 0	& 4	& 0	& 0 \\
0  	& 0  	& 0 	& 7	& 4 	& 13  	& 0  	& 0 \\
0  	& 0  	& 9  	& 0  	& 0  	& 0 	& 32  	& 0 \\
0 	& 15  	& 0  	& 0  	& 0  	& 0  	& 0  	& 0 \\
0 	& 15 	& 1  	& 0  	& 0 	& 13  	& 0  	& 0 \\
0 	& 2 	& 20  	& 0 	& 25 	& 0  	& 0  	& 6 \\
0  	& 0  	& 0  	& 0 	& 3  	& 0  	& 0  	& 0 \\
0  	& 0  	& 0 	& 25  	& 0  	& 2  	& 0 	& 26 \\
\end{array}
\right).
\end{eqnarray}
GER then computes $f_{\textrm{score}}(R^{(4)}_{2}, z)$. From Eq.\ (\ref{eq:iteration_1_choice_v4}), we can see that:
\begin{enumerate}
\item the column frequency of $2$ in $R^{(4)}_{2}$ equals 3;

\item the column frequencies of $4$, $25$, $32$ in $R^{(4)}_{2}$ equal 2;

\item the column frequencies of $1$, $3$, $6$, $7$, $9$, $13$, $15$, $20$, $26$ in $R^{(4)}_{2}$ equal 1.
\end{enumerate}
Therefore,
\begin{equation}\label{eq:iteration_1_score_u4}
f_{\textrm{score}}(R^{(4)}_{2}, z)
= 1 \times z^{3} + 3 \times z^{2} + 9 \times z^{1}
= 1 \times 10^{3} + 3 \times 10^{2} + 9 \times 10^{1}
= 1390.
\end{equation}

Now, we are able to determine which of the four options (Options 1.1, 1.2, 1.3 and 1.4) is the best one and should be chosen. In terms of achieving Target 1, all four options are equally good because for $j = 1, 2, 3, 4$, $\mathcal{N}^{+}(R) - \mathcal{N}^{+}(R^{(j)}_{2}) = 22 - 20 = 2$ (see Eq.\ (\ref{eq:example_integral_PBN_matrix}), (\ref{eq:iteration_1_choice_v1}), (\ref{eq:iteration_1_choice_v2}), (\ref{eq:iteration_1_choice_v3}), (\ref{eq:iteration_1_choice_v4})). Therefore, we just need to evaluate the four options only in terms of the extent to which Target 2 is achieved by each option. We does so by comparing the scores $f_{\textrm{score}}(R^{(1)}_{2}, z)$, $f_{\textrm{score}}(R^{(2)}_{2}, z)$, $f_{\textrm{score}}(R^{(3)}_{2}, z)$ and $f_{\textrm{score}}(R^{(4)}_{2}, z)$. A higher score indicates that the corresponding option achieves Target 2 to a greater extent. By Eq.\ (\ref{eq:iteration_1_score_u1}), (\ref{eq:iteration_1_score_u2}), (\ref{eq:iteration_1_score_u3}) and (\ref{eq:iteration_1_score_u4}), $f_{\textrm{score}}(R^{(1)}_{2}, z) > f_{\textrm{score}}(R^{(2)}_{2}, z) > f_{\textrm{score}}(R^{(3)}_{2}, z) > f_{\textrm{score}}(R^{(4)}_{2}, z)$. Therefore, Option 1.1 is the best option among the four and should be chosen. 
Hence, after the for loop in lines 10-19 of GER is fully executed, $x$ is set to be $v_{1} = 2$ and $A$ is set to be $\langle p^{(1)}_{1}, p^{(1)}_{2}, p^{(1)}_{3}, \ldots, p^{(1)}_{8} \rangle = \langle 1, 6, 1, 2, 7, 8, 3, 6 \rangle$. Then, we update $R$ in the following way (line 20 of GER):
\begin{eqnarray}\label{eq:GER_demo_R2}
R \gets R - x A 
& = & R - v_{1} \langle p^{(1)}_{1}, p^{(1)}_{2}, p^{(1)}_{3}, \ldots, p^{(1)}_{8} \rangle \nonumber \\
& = & R^{(1)}_{2} \nonumber \\
& = & \left(
\begin{array}{cccccccc}
30	& 0	& 0	& 0	& 0	& 4	& 0	& 0 \\
0  	& 0  	& 0 	& 34	& 4 	& 13  	& 0  	& 0 \\
0  	& 0  	& 9  	& 0  	& 0  	& 0 	& 59  	& 0 \\
0 	& 15  	& 0  	& 0  	& 0  	& 0  	& 0  	& 0 \\
0 	& 15 	& 30  	& 0  	& 0 	& 13  	& 0  	& 0 \\
29 	& 29 	& 20  	& 0 	& 25 	& 29  	& 0  	& 4 \\
0  	& 0  	& 0  	& 0 	& 30  	& 0  	& 0  	& 0 \\
0  	& 0  	& 0 	& 25  	& 0  	& 0  	& 0 	& 55 \\
\end{array}
\right).
\end{eqnarray}
%\
Then, we append $x$ to $\mathbf{x}$ and append $A$ to $\mathbf{BN}$ (line 21 of GER) so that $\mathbf{x}$ becomes $[2]$ and $\mathbf{BN}$ becomes $[\langle 1, 6, 1, 2, 7, 8, 3, 6 \rangle]$. Because $R$ is not equal to $\mathbf{O}_{8 \times 8}$, we enter the 2\textsuperscript{nd} iteration of the while loop in lines 5-21.

We again try to choose a positive real number $x$ and an $8 \times 8$ BN matrix $A$, and subtract $xA$ from $R$. 
From Eq.\@ (\ref{eq:GER_demo_R2}), we can see that the largest entries in the 1\textsuperscript{st} to 8\textsuperscript{th} columns of $R$ are $30$, $29$, $30$, $34$, $30$, $29$, $59$ and $55$ respectively.
Therefore, $B$ is set to be $\min(30, 29, 30, 34, 30, 29, 59, 55) = 29$ (line 7).
Similar to the 1\textsuperscript{st} iteration of the while loop, the real number $x$ to be chosen in the 2\textsuperscript{nd} iteration must equal some positive entry value of $R$ not greater than $B$.

We would like to choose $x$ and $A$ in such a way that:
\begin{description}
\item[(Target 1')] $\mathcal{N}^{+}(R_{2}) - \mathcal{N}^{+}(R_{3})$ is maximized, and that

\item[(Target 2')] $R_{3} = R_{2} - x A$ contains positive entries with high column frequencies. This is indicated by a high score $f_{\textrm{score}}(R_{3}, z)$.
\end{description}

To achieve Target 1', we consider the positive entries in $R$ (Eq.\ (\ref{eq:GER_demo_R2})) that are not greater than $B = 29$ and have the highest column frequencies in $R$. They are:
\begin{enumerate}
\item the value $v_{1} \coloneqq 4 = R_{2}(2, 5) = R_{2}(1, 6) = R_{2}(6, 8)$, and

\item the value $v_{2} \coloneqq 29 = R_{2}(6, 1) = R_{2}(6, 2) = R_{2}(6, 6)$.
\end{enumerate}
Then, we can see that if we choose $x$ and $A$ in one of the following two ways, then $\mathcal{N}^{+}(R_{2}) - \mathcal{N}^{+}(R_{3})$ is maximized and equals $3$ (thus achieving Target 1'):
\begin{description}
\item[(Option 2.1)]
Set $x$ to be $v_{1} = 4$, and set $A$ to be some BN matrix $\langle p^{(1)}_{1}, p^{(1)}_{2}, p^{(1)}_{3}, \ldots, p^{(1)}_{8} \rangle$ such that the constraints $p^{(1)}_{5} = 2$, $p^{(1)}_{6} = 1$ and $p^{(1)}_{8} = 6$ hold (call this collection of constraints $\mathcal{C}^{(1)}_{2}$);

\item[(Option 2.2)]
Set $x$ to be $v_{2} = 29$, and set $A$ to be some BN matrix $\langle p^{(2)}_{1}, p^{(2)}_{2}, p^{(2)}_{3}, \ldots, p^{(2)}_{8} \rangle$ such that the constraints $p^{(2)}_{1} = 6$, $p^{(2)}_{2} = 6$ and $p^{(2)}_{6} = 6$ hold (call this collection of constraints $\mathcal{C}^{(2)}_{2}$).
\end{description}

We can now see why in the 1\textsuperscript{st} iteration of lines 6-21 of GER, the algorithm tries to choose $x$ and $A$ in such a way that Target 2 is achieved to a greater extent. Thanks to this, $R_{2} = R_{1} - x A$ contains $4$ and $29$ as two positive entries not greater than $B = 29$ whose column frequencies in $R_{2}$ equal 3. Therefore, two options (Option 2.1 and Option 2.2) each of which can maximize $\mathcal{N}^{+}(R_{2}) - \mathcal{N}^{+}(R_{3})$ to $3$ are available for GER to choose to achieve Target 1'. To put it differently, achieving Target 2 in the 1\textsuperscript{st} iteration of lines 6-21 of GER allows the algorithm to ``remove'' more positive entries from $R_{2}$ to form $R_{3}$ in the 2\textsuperscript{nd} iteration, which heuristically may lead to the output ($x_{1}, x_{2}, \ldots, x_{K^{'}} > 0$ and BN matrices $A_{1}, A_{2}, \ldots, A_{K^{'}}$) of GER involving fewer distinct BN matrices (smaller $K^{'}$). Similarly, achieving Target 2' in the 2\textsuperscript{nd} iteration of lines 6-21 of GER allows the algorithm to ``remove'' more positive entries from $R_{3}$ to form $R_{4}$ in the 3\textsuperscript{rd} iteration.
To understand further why achieving Target 2 to a greater extent in the 1\textsuperscript{st} iteration is useful, we can compare Option 1.1 and Option 1.4. Note that the score for Option 1.4 (Eq.\@ (\ref{eq:iteration_1_score_u4})) is lower than the score for Option 1.1 (Eq.\@ (\ref{eq:iteration_1_score_u1})). If we choose Option 1.4 in the 1\textsuperscript{st} iteration instead of Option 1.1, then at the beginning of the 2\textsuperscript{nd} iteration, $R$ equals $R^{(4)}_{2}$ instead of $R^{(1)}_{2}$. To achieve Target 1', there is only one option, which is to set $x$ and $A$ to be $2$ and some BN matrix $\langle p_{1}, p_{2}, p_{3}, \ldots, p_{8} \rangle$ such that $p_{2} = 6$, $p_{3} = 1$ and $p_{6} = 8$ (see Eq.\@ (\ref{eq:iteration_1_choice_v4})). In this case, we can still maximize $\mathcal{N}^{+}(R_{2}) - \mathcal{N}^{+}(R_{3})$ to $3$ (Target 1'). However, we prefer the existence of more options for setting $x$ and $A$ that can maximize $\mathcal{N}^{+}(R_{2}) - \mathcal{N}^{+}(R_{3})$, because heuristically, having more options gives us more choices to choose from so that Target 2' may be achieved to a greater extent.

GER ``explores'' each of Options 2.1 and 2.2, and then determines which of them is better (in terms of Target 2') and should be chosen.
Again, GERESA is executed on $R$ and $v_{j}$ to produce BN matrix $\langle p^{(j)}_{1}, p^{(j)}_{2}, p^{(j)}_{3}, \ldots, p^{(j)}_{8} \rangle$, $j=1,2$. GER then computes and compares the scores $f_{\textrm{score}}(R^{(1)}_{3}, z)$ and $f_{\textrm{score}}(R^{(2)}_{3}, z)$ and sets $x$ and $A$ to be the pair $v_{j}$ and $\langle p^{(j)}_{1}, p^{(j)}_{2}, p^{(j)}_{3}, \ldots, p^{(j)}_{8} \rangle$ that gives the higher score.

We first consider Option 2.1. GERESA outputs the BN matrix $\langle p^{(1)}_{1}, p^{(1)}_{2}, p^{(1)}_{3}, \ldots, p^{(1)}_{8} \rangle = \langle 6, 6, 3, 2, 2, 1, 3, 6 \rangle$ when $R$ and $v_{1}$ are input. Hence,
\begin{eqnarray}\label{eq:iteration_2_choice_v1}
R^{(1)}_{3}
& = & R - 4 \langle 6, 6, 3, 2, 2, 1, 3, 6 \rangle \nonumber \\
& = & \left(
\begin{array}{cccccccc}
30	& 0	& 0	& 0	& 0	& 0	& 0	& 0 \\
0  	& 0  	& 0 	& 30	& 0 	& 13  	& 0  	& 0 \\
0  	& 0  	& 5  	& 0  	& 0  	& 0 	& 55  	& 0 \\
0 	& 15  	& 0  	& 0  	& 0  	& 0  	& 0  	& 0 \\
0 	& 15 	& 30  	& 0  	& 0 	& 13  	& 0  	& 0 \\
25 	& 25 	& 20  	& 0 	& 25 	& 29  	& 0  	& 0 \\
0  	& 0  	& 0  	& 0 	& 30  	& 0  	& 0  	& 0 \\
0  	& 0  	& 0 	& 25  	& 0  	& 0  	& 0 	& 55 \\
\end{array}
\right).
\end{eqnarray}
GER then computes $f_{\textrm{score}}(R^{(1)}_{3}, z)$ to be $20150$.
%GER then performs step 6.7 of GERESA. It looks into in how many columns of $R^{(1)}_{3}$ each positive entry of $R^{(1)}_{3}$ occurs. From Eq.\ (\ref{eq:iteration_2_choice_v1}), we can see that:
%\begin{enumerate}
%\item each of the 2 entry values $25$, $30$ occurs in exactly 4 columns of $R^{(1)}_{3}$;

%\item the entry value $55$ occurs in exactly 2 columns of $R^{(1)}_{3}$;

%\item each of the 5 entry values $5$, $13$, $15$, $20$, $29$ occurs in exactly 1 column of $R^{(1)}_{3}$.
%\end{enumerate}
%
%GER then computes the score $u^{(1)}$ as follows:
%\begin{equation}\label{eq:iteration_2_score_u1}
%u^{(1)}
%\coloneqq 2 \times z^{4} + 1 \times z^{2} + 5 \times z^{1}
%= 2 \times 10^{4} + 1 \times 10^{2} + 5 \times 10^{1}
%= 20150.
%\end{equation}
%The score $u^{(1)}$ will be used in step 7 of GERESA.

Next, we consider Option 2.2. GERESA outputs the BN matrix $\langle p^{(2)}_{1}, p^{(2)}_{2}, p^{(2)}_{3}, \ldots, p^{(2)}_{8} \rangle = \langle 6, 6, 5, 2, 7, 6, 3, 8 \rangle$ when $R$ and $v_{2}$ are input. Hence,
\begin{eqnarray}\label{eq:iteration_2_choice_v2}
R^{(2)}_{3}
& = & R - 29 \langle 6, 6, 5, 2, 7, 6, 3, 8 \rangle \nonumber \\
& = & \left(
\begin{array}{cccccccc}
30	& 0	& 0	& 0	& 0	& 4	& 0	& 0 \\
0  	& 0  	& 0 	& 5	& 4 	& 13  	& 0  	& 0 \\
0  	& 0  	& 9  	& 0  	& 0  	& 0 	& 30  	& 0 \\
0 	& 15  	& 0  	& 0  	& 0  	& 0  	& 0  	& 0 \\
0 	& 15 	& 1  	& 0  	& 0 	& 13  	& 0  	& 0 \\
0 	& 0 	& 20  	& 0 	& 25 	& 0  	& 0  	& 4 \\
0  	& 0  	& 0  	& 0 	& 1  	& 0  	& 0  	& 0 \\
0  	& 0  	& 0 	& 25  	& 0  	& 0  	& 0 	& 26 \\
\end{array}
\right).
\end{eqnarray}
GER then computes $f_{\textrm{score}}(R^{(2)}_{3}, z)$ to be $1360$.
%GER then performs step 6.7 of GERESA. It looks into in how many columns of $R^{(2)}_{3}$ each positive entry of $R^{(2)}_{3}$ occurs. From Eq.\ (\ref{eq:iteration_2_choice_v2}), we can see that:
%\begin{enumerate}
%\item the entry value $4$ occurs in exactly 3 columns of $R^{(2)}_{3}$;

%\item each of the 3 entry values $1$, $25$, $30$ occurs in exactly 2 columns of $R^{(2)}_{3}$;

%\item each of the 6 entry values $5$, $9$, $13$, $15$, $20$, $26$ occurs in exactly 1 column of $R^{(2)}_{3}$.
%\end{enumerate}
%
%GER then computes the score $u^{(2)}$ as follows:
%\begin{equation}\label{eq:iteration_2_score_u2}
%u^{(2)}
%\coloneqq 1 \times z^{3} + 3 \times z^{2} + 6 \times z^{1}
%= 1 \times 10^{3} + 3 \times 10^{2} + 6 \times 10^{1}
%= 1360.
%\end{equation}
%The score $u^{(2)}$ will be used in step 7 of GERESA.

Now, we are able to determine which of the two options (Options 2.1 and 2.2) is better and should be chosen. In terms of Target 1', both options are equally good because for $j = 1, 2$, $\mathcal{N}^{+}(R) - \mathcal{N}^{+}(R^{(j)}_{3}) = 20 - 17 = 3$ (see Eq.\ (\ref{eq:GER_demo_R2}), (\ref{eq:iteration_2_choice_v1}), (\ref{eq:iteration_2_choice_v2})). Therefore, we just need to evaluate the two options only in terms of Target 2'. We does so by comparing the scores $f_{\textrm{score}}(R^{(1)}_{3}, z)$ and $f_{\textrm{score}}(R^{(2)}_{3}, z)$. A higher score indicates that the corresponding option can lead to a greater achievement of Target 2'. Because $f_{\textrm{score}}(R^{(1)}_{3}, z) > f_{\textrm{score}}(R^{(2)}_{3}, z)$, Option 2.1 is the better option among the two and should be chosen. Hence, after the for loop in line 10-19 of GER is fully executed,
$x$ is set to be $v_{1} = 4$, and $A$ is set to be  $\langle p^{(1)}_{1}, p^{(1)}_{2}, p^{(1)}_{3}, \ldots, p^{(1)}_{8} \rangle = \langle 6, 6, 3, 2, 2, 1, 3, 6 \rangle$. Then, we update $R$ in the following way (line 20 of GER):
\begin{eqnarray}\label{eq:GER_demo_R3}
R \gets R - xA 
& = & R - v_{1} \langle p^{(1)}_{1}, p^{(1)}_{2}, p^{(1)}_{3}, \ldots, p^{(1)}_{8} \rangle \nonumber \\
& = & R^{(1)}_{3} \nonumber \\
& = & \left(
\begin{array}{cccccccc}
30	& 0	& 0	& 0	& 0	& 0	& 0	& 0 \\
0  	& 0  	& 0 	& 30	& 0 	& 13  	& 0  	& 0 \\
0  	& 0  	& 5  	& 0  	& 0  	& 0 	& 55  	& 0 \\
0 	& 15  	& 0  	& 0  	& 0  	& 0  	& 0  	& 0 \\
0 	& 15 	& 30  	& 0  	& 0 	& 13  	& 0  	& 0 \\
25 	& 25 	& 20  	& 0 	& 25 	& 29  	& 0  	& 0 \\
0  	& 0  	& 0  	& 0 	& 30  	& 0  	& 0  	& 0 \\
0  	& 0  	& 0 	& 25  	& 0  	& 0  	& 0 	& 55 \\
\end{array}
\right).
\end{eqnarray}
Then, we append $x$ to $\mathbf{x}$ and append $A$ to $\mathbf{BN}$ (line 21 of GER) so that $\mathbf{x}$ becomes $[2, 4]$ and $\mathbf{BN}$ becomes $[\langle 1, 6, 1, 2, 7, 8, 3, 6 \rangle, \langle 6, 6, 3, 2, 2, 1, 3, 6 \rangle]$. Because $R$ is not equal to $\mathbf{O}_{8 \times 8}$, we enter the 3\textsuperscript{rd} iteration of the while loop in lines 5-21.

After executing lines 6-19 of GER, $x$ and $A$ are determined to be $25$ and $\langle 6, 6, 5, 8, 6, 6, 3, 8 \rangle$ respectively. Then, we update $R$ in the following way (line 20 of GER):
\begin{equation}\label{eq:GER_demo_R4}
R \gets R - xA = 
\left(
\begin{array}{cccccccc}
30	& 0	& 0	& 0	& 0	& 0	& 0	& 0 \\
0  	& 0  	& 0 	& 30	& 0 	& 13  	& 0  	& 0 \\
0  	& 0  	& 5  	& 0  	& 0  	& 0 	& 30  	& 0 \\
0 	& 15  	& 0  	& 0  	& 0  	& 0  	& 0  	& 0 \\
0 	& 15 	& 5  	& 0  	& 0 	& 13  	& 0  	& 0 \\
0 	& 0 	& 20  	& 0 	& 0 	& 4  	& 0  	& 0 \\
0  	& 0  	& 0  	& 0 	& 30  	& 0  	& 0  	& 0 \\
0  	& 0  	& 0 	& 0  	& 0  	& 0  	& 0 	& 30 \\
\end{array}
\right).
\end{equation}
Then, we append $x$ to $\mathbf{x}$ and append $A$ to $\mathbf{BN}$ (line 21 of GER) so that $\mathbf{x}$ becomes $[2, 4, 25]$ and $\mathbf{BN}$ becomes $[\langle 1, 6, 1, 2, 7, 8, 3, 6 \rangle, \langle 6, 6, 3, 2, 2, 1, 3, 6 \rangle, \langle 6, 6, 5, 8, 6, 6, 3, 8 \rangle]$. Because $R$ is not equal to $\mathbf{O}_{8 \times 8}$, we enter the 4\textsuperscript{th} iteration of the while loop in lines 5-21.

After executing lines 6-19 of GER, $x$ and $A$ are determined to be $5$ and $\langle 1, 4, 3, 2, 7, 2, 3, 8 \rangle$ respectively. Then, we update $R$ in the following way (line 20 of GER):
\begin{equation}\label{eq:GER_demo_R5}
R \gets R - xA = 
\left(
\begin{array}{cccccccc}
25	& 0	& 0	& 0	& 0	& 0	& 0	& 0 \\
0  	& 0  	& 0 	& 25	& 0 	& 8  	& 0  	& 0 \\
0  	& 0  	& 0  	& 0  	& 0  	& 0 	& 25  	& 0 \\
0 	& 10  	& 0  	& 0  	& 0  	& 0  	& 0  	& 0 \\
0 	& 15 	& 5  	& 0  	& 0 	& 13  	& 0  	& 0 \\
0 	& 0 	& 20  	& 0 	& 0 	& 4  	& 0  	& 0 \\
0  	& 0  	& 0  	& 0 	& 25  	& 0  	& 0  	& 0 \\
0  	& 0  	& 0 	& 0  	& 0  	& 0  	& 0 	& 25 \\
\end{array}
\right).
\end{equation}
Then, we append $x$ to $\mathbf{x}$ and append $A$ to $\mathbf{BN}$ (line 21 of GER) so that $\mathbf{x}$ becomes $[2, 4, 25, 5]$ and $\mathbf{BN}$ becomes $[\langle 1, 6, 1, 2, 7, 8, 3, 6 \rangle, \langle 6, 6, 3, 2, 2, 1, 3, 6 \rangle, \langle 6, 6, 5, 8, 6, 6, 3, 8 \rangle, \langle 1, 4, 3, 2, 7, 2, 3, 8 \rangle]$. Because $R$ is not equal to $\mathbf{O}_{8 \times 8}$, we enter the 5\textsuperscript{th} iteration of the while loop in lines 5-21.

After executing lines 6-19 of GER, $x$ and $A$ are determined to be $10$ and $\langle 1, 4, 6, 2, 7, 5, 3, 8 \rangle$ respectively. Then, we update $R$ in the following way (line 20 of GER):
\begin{equation}\label{eq:GER_demo_R6}
R \gets R - xA =
\left(
\begin{array}{cccccccc}
15	& 0	& 0	& 0	& 0	& 0	& 0	& 0 \\
0  	& 0  	& 0 	& 15	& 0 	& 8  	& 0  	& 0 \\
0  	& 0  	& 0  	& 0  	& 0  	& 0 	& 15  	& 0 \\
0 	& 0  	& 0  	& 0  	& 0  	& 0  	& 0  	& 0 \\
0 	& 15 	& 5  	& 0  	& 0 	& 3  	& 0  	& 0 \\
0 	& 0 	& 10  	& 0 	& 0 	& 4  	& 0  	& 0 \\
0  	& 0  	& 0  	& 0 	& 15  	& 0  	& 0  	& 0 \\
0  	& 0  	& 0 	& 0  	& 0  	& 0  	& 0 	& 15 \\
\end{array}
\right).
\end{equation}
Then, we append $x$ to $\mathbf{x}$ and append $A$ to $\mathbf{BN}$ (line 21 of GER) so that $\mathbf{x}$ becomes $[2, 4, 25, 5, 10]$ and $\mathbf{BN}$ becomes $[\langle 1, 6, 1, 2, 7, 8, 3, 6 \rangle, \langle 6, 6, 3, 2, 2, 1, 3, 6 \rangle, \langle 6, 6, 5, 8, 6, 6, 3, 8 \rangle, \langle 1, 4, 3, 2, 7, 2, 3, 8 \rangle, \\ \langle 1, 4, 6, 2, 7, 5, 3, 8 \rangle]$. Because $R$ is not equal to $\mathbf{O}_{8 \times 8}$, we enter the 6\textsuperscript{th} iteration of the while loop in lines 5-21.

After executing lines 6-19 of GER, $x$ and $A$ are determined to be $5$ and $\langle 1, 5, 5, 2, 7, 2, 3, 8 \rangle$ respectively. Then, we update $R$ in the following way (line 20 of GER):
\begin{equation}\label{eq:GER_demo_R7}
R \gets R - xA =
\left(
\begin{array}{cccccccc}
10	& 0	& 0	& 0	& 0	& 0	& 0	& 0 \\
0  	& 0  	& 0 	& 10	& 0 	& 3  	& 0  	& 0 \\
0  	& 0  	& 0  	& 0  	& 0  	& 0 	& 10  	& 0 \\
0 	& 0  	& 0  	& 0  	& 0  	& 0  	& 0  	& 0 \\
0 	& 10 	& 0  	& 0  	& 0 	& 3  	& 0  	& 0 \\
0 	& 0 	& 10  	& 0 	& 0 	& 4  	& 0  	& 0 \\
0  	& 0  	& 0  	& 0 	& 10  	& 0  	& 0  	& 0 \\
0  	& 0  	& 0 	& 0  	& 0  	& 0  	& 0 	& 10 \\
\end{array}
\right).
\end{equation}
Then, we append $x$ to $\mathbf{x}$ and append $A$ to $\mathbf{BN}$ (line 21 of GER) so that $\mathbf{x}$ becomes $[2, 4, 25, 5, 10, 5]$ and $\mathbf{BN}$ becomes $[\langle 1, 6, 1, 2, 7, 8, 3, 6 \rangle, \langle 6, 6, 3, 2, 2, 1, 3, 6 \rangle, \langle 6, 6, 5, 8, 6, 6, 3, 8 \rangle, \langle 1, 4, 3, 2, 7, 2, 3, 8 \rangle, \\ \langle 1, 4, 6, 2, 7, 5, 3, 8 \rangle, \langle 1, 5, 5, 2, 7, 2, 3, 8 \rangle]$. Because $R$ is not equal to $\mathbf{O}_{8 \times 8}$, we enter the 7\textsuperscript{th} iteration of the while loop in lines 5-21.

After executing lines 6-19 of GER, $x$ and $A$ are determined to be $3$ and $\langle 1, 5, 6, 2, 7, 2, 3, 8 \rangle$ respectively. Then, we update $R$ in the following way (line 20 of GER):
\begin{equation}\label{eq:GER_demo_R8}
R \gets R - xA = 
\left(
\begin{array}{cccccccc}
7	& 0	& 0	& 0	& 0	& 0	& 0	& 0 \\
0  	& 0  	& 0 	& 7	& 0 	& 0  	& 0  	& 0 \\
0  	& 0  	& 0  	& 0  	& 0  	& 0 	& 7  	& 0 \\
0 	& 0  	& 0  	& 0  	& 0  	& 0  	& 0  	& 0 \\
0 	& 7 	& 0  	& 0  	& 0 	& 3  	& 0  	& 0 \\
0 	& 0 	& 7  	& 0 	& 0 	& 4  	& 0  	& 0 \\
0  	& 0  	& 0  	& 0 	& 7  	& 0  	& 0  	& 0 \\
0  	& 0  	& 0 	& 0  	& 0  	& 0  	& 0 	& 7 \\
\end{array}
\right).
\end{equation}
Then, we append $x$ to $\mathbf{x}$ and append $A$ to $\mathbf{BN}$ (line 21 of GER) so that $\mathbf{x}$ becomes $[2, 4, 25, 5, 10, \\ 5, 3]$ and $\mathbf{BN}$ becomes $[\langle 1, 6, 1, 2, 7, 8, 3, 6 \rangle, \langle 6, 6, 3, 2, 2, 1, 3, 6 \rangle, \langle 6, 6, 5, 8, 6, 6, 3, 8 \rangle, \\ \langle 1, 4, 3, 2, 7, 2, 3, 8 \rangle, \langle 1, 4, 6, 2, 7, 5, 3, 8 \rangle, \langle 1, 5, 5, 2, 7, 2, 3, 8 \rangle, \langle 1, 5, 6, 2, 7, 2, 3, 8 \rangle]$. Because $R$ is not equal to $\mathbf{O}_{8 \times 8}$, we enter the 8\textsuperscript{th} iteration of the while loop in lines 5-21.

After executing lines 6-19 of GER, $x$ and $A$ are determined to be $4$ and $\langle 1, 5, 6, 2, 7, 6, 3, 8 \rangle$ respectively. Then, we update $R$ in the following way (line 20 of GER):
\begin{eqnarray}\label{eq:GER_demo_R9}
R \gets R - xA =
\left(
\begin{array}{cccccccc}
3	& 0	& 0	& 0	& 0	& 0	& 0	& 0 \\
0  	& 0  	& 0 	& 3	& 0 	& 0  	& 0  	& 0 \\
0  	& 0  	& 0  	& 0  	& 0  	& 0 	& 3  	& 0 \\
0 	& 0  	& 0  	& 0  	& 0  	& 0  	& 0  	& 0 \\
0 	& 3 	& 0  	& 0  	& 0 	& 3  	& 0  	& 0 \\
0 	& 0 	& 3  	& 0 	& 0 	& 0  	& 0  	& 0 \\
0  	& 0  	& 0  	& 0 	& 3  	& 0  	& 0  	& 0 \\
0  	& 0  	& 0 	& 0  	& 0  	& 0  	& 0 	& 3 \\
\end{array}
\right).
\end{eqnarray}
Then, we append $x$ to $\mathbf{x}$ and append $A$ to $\mathbf{BN}$ (line 21 of GER) so that $\mathbf{x}$ becomes $[2, 4, 25, 5, 10, \\ 5, 3, 4]$ and $\mathbf{BN}$ becomes $[\langle 1, 6, 1, 2, 7, 8, 3, 6 \rangle, \langle 6, 6, 3, 2, 2, 1, 3, 6 \rangle, \langle 6, 6, 5, 8, 6, 6, 3, 8 \rangle, \\ \langle 1, 4, 3, 2, 7, 2, 3, 8 \rangle, \langle 1, 4, 6, 2, 7, 5, 3, 8 \rangle, \langle 1, 5, 5, 2, 7, 2, 3, 8 \rangle, \langle 1, 5, 6, 2, 7, 2, 3, 8 \rangle, \langle 1, 5, 6, 2, 7, 6, 3, 8 \rangle]$. Because $R$ is not equal to $\mathbf{O}_{8 \times 8}$, we enter the 9\textsuperscript{th} iteration of the while loop in lines 5-21.

After executing lines 6-19 of GER, $x$ and $A$ are determined to be $3$ and $\langle 1, 5, 6, 2, 7, 5, 3, 8 \rangle$ respectively. Then, we update $R$ in the following way (line 20 of GER):
\begin{equation}\label{eq:GER_demo_R10}
R \gets R - xA = \mathbf{O}_{8 \times 8}.
\end{equation}
Then, we append $x$ to $\mathbf{x}$ and append $A$ to $\mathbf{BN}$ (line 21 of GER) so that $\mathbf{x}$ becomes $[2, 4, 25, 5, 10, \\ 5, 3, 4, 3]$ and $\mathbf{BN}$ becomes $[\langle 1, 6, 1, 2, 7, 8, 3, 6 \rangle, \langle 6, 6, 3, 2, 2, 1, 3, 6 \rangle, \langle 6, 6, 5, 8, 6, 6, 3, 8 \rangle, \\ \langle 1, 4, 3, 2, 7, 2, 3, 8 \rangle, \langle 1, 4, 6, 2, 7, 5, 3, 8 \rangle, \langle 1, 5, 5, 2, 7, 2, 3, 8 \rangle, \langle 1, 5, 6, 2, 7, 2, 3, 8 \rangle, \langle 1, 5, 6, 2, 7, 6, 3, 8 \rangle, \\ \langle 1, 5, 6, 2, 7, 5, 3, 8 \rangle]$. Because $R$ is equal to $\mathbf{O}_{8 \times 8}$, we exit the while loop in lines 5-21. Finally, we output $\mathbf{x}$ and $\mathbf{BN}$ (line 22 of GER).

Write $\mathbf{x} \eqqcolon [x_{1}, x_{2}, \ldots, x_{9}]$ and $\mathbf{BN} \eqqcolon [A_{1}, A_{2}, \ldots, A_{9}]$. It can be easily checked that each $x_{i}$ is positive, $\sum^{9}_{i = 1} x_{i} = 61 = r_{0}$ and $\sum^{9}_{i = 1} x_{i} A_{i} = Q$.

From the outputs $\mathbf{x}$ and $\mathbf{BN}$, we can construct a $3$-node PBN $\mathcal{P} = (V, F_{1}, F_{2}, F_{3}, \mathcal{D})$ whose PBN matrix equals the TPM $\frac{1}{r_{0}}Q = Q^{'}$. Let $V=\{v_{1}, v_{2}, v_{3}\}$. The BNs represented by the BN matrices $A_{1}, A_{2}, \ldots, A_{9}$ are given in Tables~\ref{table:BN_represented_by_A1_GER}--\ref{table:BN_represented_by_A9_GER}:

\begin{table}[h]
\begin{center}
\begin{tabular}{|c|ccc|c|c|c|}
\hline 
BN Global States & $v_{1}(t)$ 	& $v_{2}(t)$ 	& $v_{3}(t)$ 	& $f^{(1)}_{1}$ 	& $f^{(2)}_{1}$ 	& $f^{(3)}_{1}$ 	\\
\hline
1 & 0 & 0 & 0 & 0 & 0 & 0 \\
2 & 0 & 0 & 1 & 1 & 0 & 1 \\
3 & 0 & 1 & 0 & 0 & 0 & 0 \\
4 & 0 & 1 & 1 & 0 & 0 & 1 \\
5 & 1 & 0 & 0 & 1 & 1 & 0 \\
6 & 1 & 0 & 1 & 1 & 1 & 1 \\
7 & 1 & 1 & 0 & 0 & 1 & 0 \\
8 & 1 & 1 & 1 & 1 & 0 & 1 \\
\hline
\end{tabular}
\caption{The truth table for the BN represented by the BN matrix $A_{1}$.}
\label{table:BN_represented_by_A1_GER}
\end{center}
\end{table}

\FloatBarrier

\begin{table}[h]
\begin{center}
\begin{tabular}{|c|ccc|c|c|c|}
\hline 
BN Global States & $v_{1}(t)$ 	& $v_{2}(t)$ 	& $v_{3}(t)$ 	& $f^{(1)}_{2}$ 	& $f^{(2)}_{2}$ 	& $f^{(3)}_{2}$ 	\\
\hline
1 & 0 & 0 & 0 & 1 & 0 & 1 \\
2 & 0 & 0 & 1 & 1 & 0 & 1 \\
3 & 0 & 1 & 0 & 0 & 1 & 0 \\
4 & 0 & 1 & 1 & 0 & 0 & 1 \\
5 & 1 & 0 & 0 & 0 & 0 & 1 \\
6 & 1 & 0 & 1 & 0 & 0 & 0 \\
7 & 1 & 1 & 0 & 0 & 1 & 0 \\
8 & 1 & 1 & 1 & 1 & 0 & 1 \\
\hline
\end{tabular}
\caption{The truth table for the BN represented by the BN matrix $A_{2}$.}
\label{table:BN_represented_by_A2_GER}
\end{center}
\end{table}

\FloatBarrier

\begin{table}[h]
\begin{center}
\begin{tabular}{|c|ccc|c|c|c|}
\hline 
BN Global States & $v_{1}(t)$ 	& $v_{2}(t)$ 	& $v_{3}(t)$ 	& $f^{(1)}_{3}$ 	& $f^{(2)}_{3}$ 	& $f^{(3)}_{3}$ 	\\
\hline
1 & 0 & 0 & 0 & 1 & 0 & 1 \\
2 & 0 & 0 & 1 & 1 & 0 & 1 \\
3 & 0 & 1 & 0 & 1 & 0 & 0 \\
4 & 0 & 1 & 1 & 1 & 1 & 1 \\
5 & 1 & 0 & 0 & 1 & 0 & 1 \\
6 & 1 & 0 & 1 & 1 & 0 & 1 \\
7 & 1 & 1 & 0 & 0 & 1 & 0 \\
8 & 1 & 1 & 1 & 1 & 1 & 1 \\
\hline
\end{tabular}
\caption{The truth table for the BN represented by the BN matrix $A_{3}$.}
\label{table:BN_represented_by_A3_GER}
\end{center}
\end{table}

\FloatBarrier

\begin{table}[h]
\begin{center}
\begin{tabular}{|c|ccc|c|c|c|}
\hline 
BN Global States & $v_{1}(t)$ 	& $v_{2}(t)$ 	& $v_{3}(t)$ 	& $f^{(1)}_{4}$ 	& $f^{(2)}_{4}$ 	& $f^{(3)}_{1}$ 	\\
\hline
1 & 0 & 0 & 0 & 0 & 0 & 0 \\
2 & 0 & 0 & 1 & 0 & 1 & 1 \\
3 & 0 & 1 & 0 & 0 & 1 & 0 \\
4 & 0 & 1 & 1 & 0 & 0 & 1 \\
5 & 1 & 0 & 0 & 1 & 1 & 0 \\
6 & 1 & 0 & 1 & 0 & 0 & 1 \\
7 & 1 & 1 & 0 & 0 & 1 & 0 \\
8 & 1 & 1 & 1 & 1 & 1 & 1 \\
\hline
\end{tabular}
\caption{The truth table for the BN represented by the BN matrix $A_{4}$.}
\label{table:BN_represented_by_A4_GER}
\end{center}
\end{table}

\FloatBarrier

\begin{table}[h]
\begin{center}
\begin{tabular}{|c|ccc|c|c|c|}
\hline 
BN Global States & $v_{1}(t)$ 	& $v_{2}(t)$ 	& $v_{3}(t)$ 	& $f^{(1)}_{5}$ 	& $f^{(2)}_{5}$ 	& $f^{(3)}_{4}$ 	\\
\hline
1 & 0 & 0 & 0 & 0 & 0 & 0 \\
2 & 0 & 0 & 1 & 0 & 1 & 1 \\
3 & 0 & 1 & 0 & 1 & 0 & 1 \\
4 & 0 & 1 & 1 & 0 & 0 & 1 \\
5 & 1 & 0 & 0 & 1 & 1 & 0 \\
6 & 1 & 0 & 1 & 1 & 0 & 0 \\
7 & 1 & 1 & 0 & 0 & 1 & 0 \\
8 & 1 & 1 & 1 & 1 & 1 & 1 \\
\hline
\end{tabular}
\caption{The truth table for the BN represented by the BN matrix $A_{5}$.}
\label{table:BN_represented_by_A5_GER}
\end{center}
\end{table}

\FloatBarrier

\begin{table}[h]
\begin{center}
\begin{tabular}{|c|ccc|c|c|c|}
\hline 
BN Global States & $v_{1}(t)$ 	& $v_{2}(t)$ 	& $v_{3}(t)$ 	& $f^{(1)}_{6}$ 	& $f^{(2)}_{6}$ 	& $f^{(3)}_{5}$ 	\\
\hline
1 & 0 & 0 & 0 & 0 & 0 & 0 \\
2 & 0 & 0 & 1 & 1 & 0 & 0 \\
3 & 0 & 1 & 0 & 1 & 0 & 0 \\
4 & 0 & 1 & 1 & 0 & 0 & 1 \\
5 & 1 & 0 & 0 & 1 & 1 & 0 \\
6 & 1 & 0 & 1 & 0 & 0 & 1 \\
7 & 1 & 1 & 0 & 0 & 1 & 0 \\
8 & 1 & 1 & 1 & 1 & 1 & 1 \\
\hline
\end{tabular}
\caption{The truth table for the BN represented by the BN matrix $A_{6}$.}
\label{table:BN_represented_by_A6_GER}
\end{center}
\end{table}

\FloatBarrier

\begin{table}[h]
\begin{center}
\begin{tabular}{|c|ccc|c|c|c|}
\hline 
BN Global States & $v_{1}(t)$ 	& $v_{2}(t)$ 	& $v_{3}(t)$ 	& $f^{(1)}_{6}$ 	& $f^{(2)}_{6}$ 	& $f^{(3)}_{6}$ 	\\
\hline
1 & 0 & 0 & 0 & 0 & 0 & 0 \\
2 & 0 & 0 & 1 & 1 & 0 & 0 \\
3 & 0 & 1 & 0 & 1 & 0 & 1 \\
4 & 0 & 1 & 1 & 0 & 0 & 1 \\
5 & 1 & 0 & 0 & 1 & 1 & 0 \\
6 & 1 & 0 & 1 & 0 & 0 & 1 \\
7 & 1 & 1 & 0 & 0 & 1 & 0 \\
8 & 1 & 1 & 1 & 1 & 1 & 1 \\
\hline
\end{tabular}
\caption{The truth table for the BN represented by the BN matrix $A_{7}$.}
\label{table:BN_represented_by_A7_GER}
\end{center}
\end{table}

\FloatBarrier

\begin{table}[h]
\begin{center}
\begin{tabular}{|c|ccc|c|c|c|}
\hline 
BN Global States & $v_{1}(t)$ 	& $v_{2}(t)$ 	& $v_{3}(t)$ 	& $f^{(1)}_{7}$ 	& $f^{(2)}_{6}$ 	& $f^{(3)}_{6}$ 	\\
\hline
1 & 0 & 0 & 0 & 0 & 0 & 0 \\
2 & 0 & 0 & 1 & 1 & 0 & 0 \\
3 & 0 & 1 & 0 & 1 & 0 & 1 \\
4 & 0 & 1 & 1 & 0 & 0 & 1 \\
5 & 1 & 0 & 0 & 1 & 1 & 0 \\
6 & 1 & 0 & 1 & 1 & 0 & 1 \\
7 & 1 & 1 & 0 & 0 & 1 & 0 \\
8 & 1 & 1 & 1 & 1 & 1 & 1 \\
\hline
\end{tabular}
\caption{The truth table for the BN represented by the BN matrix $A_{8}$.}
\label{table:BN_represented_by_A8_GER}
\end{center}
\end{table}

\FloatBarrier

\begin{table}[h]
\begin{center}
\begin{tabular}{|c|ccc|c|c|c|}
\hline 
BN Global States & $v_{1}(t)$ 	& $v_{2}(t)$ 	& $v_{3}(t)$ 	& $f^{(1)}_{7}$ 	& $f^{(2)}_{6}$ 	& $f^{(3)}_{7}$ 	\\
\hline
1 & 0 & 0 & 0 & 0 & 0 & 0 \\
2 & 0 & 0 & 1 & 1 & 0 & 0 \\
3 & 0 & 1 & 0 & 1 & 0 & 1 \\
4 & 0 & 1 & 1 & 0 & 0 & 1 \\
5 & 1 & 0 & 0 & 1 & 1 & 0 \\
6 & 1 & 0 & 1 & 1 & 0 & 0 \\
7 & 1 & 1 & 0 & 0 & 1 & 0 \\
8 & 1 & 1 & 1 & 1 & 1 & 1 \\
\hline
\end{tabular}
\caption{The truth table for the BN represented by the BN matrix $A_{9}$.}
\label{table:BN_represented_by_A9_GER}
\end{center}
\end{table}

\FloatBarrier

The lists of Boolean functions associated with the nodes $v_{1}, v_{2}, v_{3}$ are respectively
$F_{1} \coloneqq \left( f^{(1)}_{1}, f^{(1)}_{2}, f^{(1)}_{3}, f^{(1)}_{4}, f^{(1)}_{5}, f^{(1)}_{6}, f^{(1)}_{7} \right)$, 
$F_{2} \coloneqq \left( f^{(2)}_{1}, f^{(2)}_{2}, f^{(2)}_{3}, f^{(2)}_{4}, f^{(2)}_{5}, f^{(2)}_{6} \right)$
and $F_{3} \coloneqq \left( f^{(3)}_{1}, f^{(3)}_{2}, \right. \\ \left. f^{(3)}_{3}, f^{(3)}_{4}, f^{(3)}_{5}, f^{(3)}_{6}, f^{(3)}_{7} \right)$. 
In addition, the probability distribution $\mathcal{D}$ on $[7] \times [6] \times [7]$ is given by the probability mass function
\begin{eqnarray*}
& & \frac{x_1}{r_0} \mathbbm{1}_{\{(1, 1, 1)\}} 
+ \frac{x_2}{r_0} \mathbbm{1}_{\{(2, 2, 2)\}} 
+ \frac{x_3}{r_0} \mathbbm{1}_{\{(3, 3, 3)\}} 
+ \frac{x_4}{r_0} \mathbbm{1}_{\{(4, 4, 1)\}} 
+ \frac{x_5}{r_0} \mathbbm{1}_{\{(5, 5, 4)\}} \\
& + & \frac{x_6}{r_0} \mathbbm{1}_{\{(6, 6, 5)\}} 
+ \frac{x_7}{r_0} \mathbbm{1}_{\{(6, 6, 6)\}} 
+ \frac{x_8}{r_0} \mathbbm{1}_{\{(7, 6, 6)\}} 
+ \frac{x_9}{r_0} \mathbbm{1}_{\{(7, 6, 7)\}} \\
& = & \frac{2}{61} \mathbbm{1}_{\{(1, 1, 1)\}} 
+ \frac{3}{61} \mathbbm{1}_{\{(6, 6, 6), (7, 6, 7)\}} 
+ \frac{4}{61} \mathbbm{1}_{\{(2, 2, 2), (7, 6, 6)\}} \\
& + & \frac{5}{61} \mathbbm{1}_{\{(4, 4, 1), (6, 6, 5)\}}
+ \frac{10}{61} \mathbbm{1}_{\{(5, 5, 4)\}}
+ \frac{25}{61} \mathbbm{1}_{\{(3, 3, 3)\}}.
\end{eqnarray*}
Consequently, the TPM $\frac{1}{r_{0}} Q = Q^{'}$ is the PBN matrix of the PBN
$\mathcal{P} = (V, F_{1}, F_{2}, F_{3}, \mathcal{D})$.

\subsection{Numerical Demonstration of the GER Entry Selection Algorithm}\label{subsection:GERESA_demo}

In Section \ref{subsection:GER_demo}, we explain the execution of GER holistically without going into the details of the executions of GERESA. In this section, we are going to demonstrate the execution of GERESA on the same matrix $Q$ (Eq.\@ (\ref{eq:example_integral_PBN_matrix})) with $v \coloneqq 2$ in detail. Concerning the numerical demonstration of GER in Section \ref{subsection:GER_demo}, we remark that GER has called GERESA with $Q$ and $2$ as inputs when it ``explores'' Option 1.1 in the 1\textsuperscript{st} iteration of the while statement.

Before we go into the step-by-step demonstration of GERESA, we further remark that lines 6-11 of GERESA determine the values of those $p_{j}$'s when the $j$-th column of $Q$ contains $v$ while lines 12-17 of GERESA determine the values of those $p_{j}$'s when the $j$-th column of $Q$ does not contain $v$. The latter is the key for creating positive entries in $Q - vA$ with high column frequencies (Target-2-related).

We now present the step-by-step execution of GERESA on $Q$ and $v = 2$.

In the initialization phase of GERESA (lines 1-5), we create the variables $p_{1}$, $p_{2}$, $p_{3}$, \ldots, $p_{8}$, initialize $R_{\textrm{copy}}$ to $Q$, and initialize $\mathtt{selected\_columns}$ to an empty list.

Then, we enter the phase of GERESA in which the values of $p_{1}, p_{2}, p_{3}, \ldots, p_{8}$ are set (lines 6-17). Note that $\textrm{Col\_indices}(v, Q) = \{ 3, 6 \}$, because the 3\textsuperscript{rd} and the $6\textsuperscript{th}$ columns of $Q$ are the only columns that contain $v = 2$ as a positive entry. Therefore, the values of $p_{3}$ and $p_{6}$ are set first (lines 6-11) before the values of the other $p_{j}$'s are set (lines 12-17).

Firstly, we consider the 3\textsuperscript{rd} column of $Q$ so that $p_{3}$ will be determined. Note that $\textrm{Occurrences}(v, Q(:, 3)) = \{ 1 \}$. Therefore, $p_{3}$ is set to be $1$ (line 7). Then, we set $R_{\textrm{copy}}(p_{3}, 3)$ to be $0$ (line 9). Hence, 
\begin{equation}\label{eq:R_copy_after_3}
R_{\textrm{copy}} = 
\begin{pmatrix}
32	& 0	& [0]	& 0	& 0	& 4	& 0	& 0 \\
0  	& 0  	& 0 	& 36	& 4 	& 13  	& 0  	& 0 \\
0  	& 0  	& 9  	& 0  	& 0  	& 0 	& 61  	& 0 \\
0 	& 15  	& 0  	& 0  	& 0  	& 0  	& 0  	& 0 \\
0 	& 15 	& 30  	& 0  	& 0 	& 13  	& 0  	& 0 \\
29 	& 31 	& 20  	& 0 	& 25 	& 29  	& 0  	& 6 \\
0  	& 0  	& 0  	& 0 	& 32  	& 0  	& 0  	& 0 \\
0  	& 0  	& 0 	& 25  	& 0  	& 2  	& 0 	& 55
\end{pmatrix},
\end{equation}
where the bracketed entry is the one that has just been updated. Afterwards, we append $3$ to the list $\mathtt{selected\_columns}$ (line 11) so that $\mathtt{selected\_columns}$ becomes $[3]$.

Secondly, we consider the 6\textsuperscript{th} column of $Q$ so that $p_{6}$ will be determined. Note that $\textrm{Occurrences}(v, Q(:, 6)) = \{ 8 \}$. Therefore, $p_{6}$ is set to be $8$ (line 7). Then, we set $R_{\textrm{copy}}(p_{6}, 6)$ to be $0$ (line 9). Hence, 
\begin{equation}\label{eq:R_copy_after_3_6}
R_{\textrm{copy}} = 
\begin{pmatrix}
32	& 0	& [0]	& 0	& 0	& 4	& 0	& 0 \\
0  	& 0  	& 0 	& 36	& 4 	& 13  	& 0  	& 0 \\
0  	& 0  	& 9  	& 0  	& 0  	& 0 	& 61  	& 0 \\
0 	& 15  	& 0  	& 0  	& 0  	& 0  	& 0  	& 0 \\
0 	& 15 	& 30  	& 0  	& 0 	& 13  	& 0  	& 0 \\
29 	& 31 	& 20  	& 0 	& 25 	& 29  	& 0  	& 6 \\
0  	& 0  	& 0  	& 0 	& 32  	& 0  	& 0  	& 0 \\
0  	& 0  	& 0 	& 25  	& 0  	& [0]	& 0 	& 55
\end{pmatrix},
\end{equation}
where the bracketed entries are those that have been updated so far. Afterwards, we append $6$ to the list $\mathtt{selected\_columns}$ (line 11) so that $\mathtt{selected\_columns}$ becomes $[3, 6]$.

We have now finished executing lines 6-11 of GERESA. Therefore, we proceed to executing lines 12-17 of GERESA, in which we will set the values of $p_{1}$, $p_{2}$, $p_{4}$, $p_{5}$, $p_{7}$, $p_{8}$.

Firstly, we consider the 1\textsuperscript{st} column of $Q$ so that $p_{1}$ will be determined. Note that $\textrm{Larger}(v, Q(:, 1)) = \{ 1, 6 \}$, because $Q(1, 1) = 32 > 2$ and $Q(6, 1) = 29 > 2$. Then, we consider the column frequencies of $Q(1, 1) - 2 = 30$ and $Q(6, 1) - 2 = 27$ in the matrix $R_{\textrm{copy}}(:, \mathtt{selected\_columns}) = R_{\textrm{copy}}(:, [3, 6])$. By referring to the columns with bracketed entries in Eq.\@ (\ref{eq:R_copy_after_3_6}), we can see that the column frequencies of $30$ and $27$ in $R_{\textrm{copy}}(:, [3, 6])$ are $1$ and $0$ respectively. Therefore, we set $p_{1}$ to be 1 instead of 6. The rationale behind this choice is as follows:

Note that the 3\textsuperscript{rd} and the 6\textsuperscript{th} columns of $R_{\textrm{copy}}$ are the same as the corresponding columns of $Q - vA$. Because the column frequencies of $30$ and $27$ in $R_{\textrm{copy}}(:, [3, 6])$ are $1$ and $0$ respectively, the column frequencies of $30$ and $27$ in $Q - vA$ will be at least 1 and 0 respectively. Therefore,
\begin{enumerate}
\item If we set $p_{1}$ to be $1$, then the column frequency of the entry value $30$ in $Q - vA$ will be at least $2$, because $Q(1, 1) - v  = 30$;

\item If we set $p_{1}$ to be $6$, then the column frequency of the entry value $27$ in $Q - vA$ will be at least $1$, because $Q(6, 1) - v =  27$.
\end{enumerate}
Therefore, setting $p_{1}$ to be 1 is a better action than setting $p_{1}$ to be $6$. This is because we want $Q - vA$ to have positive entries with high column frequencies (see Target 2 in Section \ref{subsection:GER_demo}).

After setting the value of $p_{1}$, we set $R_{\textrm{copy}}(p_{1}, 1)$ to be $Q(p_{1}, 1) - v = 30$ (line 15). Hence,
\begin{equation}\label{eq:R_copy_after_3_6_1}
R_{\textrm{copy}} = 
\begin{pmatrix}
[30]	& 0	& [0]	& 0	& 0	& 4	& 0	& 0 \\
0  	& 0  	& 0 	& 36	& 4 	& 13  	& 0  	& 0 \\
0  	& 0  	& 9  	& 0  	& 0  	& 0 	& 61  	& 0 \\
0 	& 15  	& 0  	& 0  	& 0  	& 0  	& 0  	& 0 \\
0 	& 15 	& 30  	& 0  	& 0 	& 13  	& 0  	& 0 \\
29 	& 31 	& 20  	& 0 	& 25 	& 29  	& 0  	& 6 \\
0  	& 0  	& 0  	& 0 	& 32  	& 0  	& 0  	& 0 \\
0  	& 0  	& 0 	& 25  	& 0  	& [0]	& 0 	& 55
\end{pmatrix}.
\end{equation}
Afterwards, we append $1$ to the list $\mathtt{selected\_columns}$ (line 17) so that $\mathtt{selected\_columns}$ becomes $[3, 6, 1]$.

Secondly, we consider the 2\textsuperscript{nd} column of $Q$ so that $p_{2}$ will be determined. Note that $\textrm{Larger}(v, Q(:, 2)) = \{ 4, 5, 6 \}$, because $Q(4, 2) = 15 > 2$, $Q(5, 2) = 15 > 2$ and $Q(6, 2) = 31 > 2$. Then, we consider the column frequencies of $Q(4, 2) - 2 = 13$, $Q(5, 2) - 2 = 13$ and $Q(6, 2) - 2 = 29$ in the matrix $R_{\textrm{copy}}(:, \mathtt{selected\_columns}) = R_{\textrm{copy}}(:, [3, 6, 1])$. By referring to the columns with bracketed entries in Eq.\@ (\ref{eq:R_copy_after_3_6_1}), we can see that the column frequencies of $13$ and $29$ in $R_{\textrm{copy}}(:, [3, 6, 1])$ are 1 and 2 respectively. Therefore, we set $p_{2}$ to be $6$ instead of $4$ and $5$ because $Q(6, 2) - 2 = 29$ has a higher associated column frequency than $Q(4, 2) - 2 = 13$ and $Q(5, 2) - 2 = 13$. The rationale behind this choice is that we want $Q - vA$ to have positive entries with high column frequencies (see Target 2 in Section \ref{subsection:GER_demo}).

After setting the value of $p_{2}$, we set $R_{\textrm{copy}}(p_{2}, 2)$ to be $Q(p_{2}, 2) - v = 29$ (line 15). Hence,
\begin{equation}\label{eq:R_copy_after_3_6_1_2}
R_{\textrm{copy}} = 
\begin{pmatrix}
[30]	& 0	& [0]	& 0	& 0	& 4	& 0	& 0 \\
0  	& 0  	& 0 	& 36	& 4 	& 13  	& 0  	& 0 \\
0  	& 0  	& 9  	& 0  	& 0  	& 0 	& 61  	& 0 \\
0 	& 15  	& 0  	& 0  	& 0  	& 0  	& 0  	& 0 \\
0 	& 15 	& 30  	& 0  	& 0 	& 13  	& 0  	& 0 \\
29 	& [29]	& 20  	& 0 	& 25 	& 29  	& 0  	& 6 \\
0  	& 0  	& 0  	& 0 	& 32  	& 0  	& 0  	& 0 \\
0  	& 0  	& 0 	& 25  	& 0  	& [0]	& 0 	& 55
\end{pmatrix}.
\end{equation}
Afterwards, we append $2$ to the list $\mathtt{selected\_columns}$ (line 17) so that $\mathtt{selected\_columns}$ becomes $[3, 6, 1, 2]$.

Thirdly, we consider the 4\textsuperscript{th} column of $Q$ so that $p_{4}$ will be determined. Note that $\textrm{Larger}(v, Q(:, 4)) = \{ 2, 8 \}$. Then, we consider the column frequencies of $Q(2, 4) - 2 = 34$ and $Q(8, 4) - 2 = 23$ in the matrix $R_{\textrm{copy}}(:, \mathtt{selected\_columns}) = R_{\textrm{copy}}(:, [3, 6, 1, 2])$. By referring to the columns with bracketed entries in Eq.\@ (\ref{eq:R_copy_after_3_6_1_2}), we can see that the column frequencies of $34$ and $23$ in $R_{\textrm{copy}}(:, [3, 6, 1, 2])$ both equal $0$. We break the tie by arbitrarily choosing one of 2 and 8. Let’s set $p_{4}$ to be 2 instead of 8 for instance. Then, we set $R_{\textrm{copy}}(p_{4}, 4)$ to be $Q(p_{4}, 4) - v = 34$ (line 15). Hence,
\begin{equation}\label{eq:R_copy_after_3_6_1_2_4}
R_{\textrm{copy}} = 
\begin{pmatrix}
[30]	& 0	& [0]	& 0	& 0	& 4	& 0	& 0 \\
0  	& 0  	& 0 	& [34]	& 4 	& 13  	& 0  	& 0 \\
0  	& 0  	& 9  	& 0  	& 0  	& 0 	& 61  	& 0 \\
0 	& 15  	& 0  	& 0  	& 0  	& 0  	& 0  	& 0 \\
0 	& 15 	& 30  	& 0  	& 0 	& 13  	& 0  	& 0 \\
29 	& [29]	& 20  	& 0 	& 25 	& 29  	& 0  	& 6 \\
0  	& 0  	& 0  	& 0 	& 32  	& 0  	& 0  	& 0 \\
0  	& 0  	& 0 	& 25  	& 0  	& [0]	& 0 	& 55
\end{pmatrix}.
\end{equation}
Afterwards, we append $4$ to the list $\mathtt{selected\_columns}$ (line 17) so that $\mathtt{selected\_columns}$ becomes $[3, 6, 1, 2, 4]$.

We continue to iterate the for statement in lines 12-17 three more times. During this process, $p_{5}$, $p_{7}$, $p_{8}$ are set to be $7$, $3$, $6$ respectively. After we fully execute lines 6-17, 
\begin{equation}\label{eq:R_copy_after_setting_all_pi's}
R_{\textrm{copy}} = 
\begin{pmatrix}
[30]	& 0	& [0]	& 0	& 0	& 4	& 0	& 0 \\
0  	& 0  	& 0 	& [34]	& 4 	& 13  	& 0  	& 0 \\
0  	& 0  	& 9  	& 0  	& 0  	& 0 	& [59]	& 0 \\
0 	& 15  	& 0  	& 0  	& 0  	& 0  	& 0  	& 0 \\
0 	& 15 	& 30  	& 0  	& 0 	& 13  	& 0  	& 0 \\
29 	& [29]	& 20  	& 0 	& 25 	& 29  	& 0  	& [4] \\
0  	& 0  	& 0  	& 0 	& [30]	& 0  	& 0  	& 0 \\
0  	& 0  	& 0 	& 25  	& 0  	& [0]	& 0 	& 55
\end{pmatrix}
= Q - v\langle p_{1}, p_{2}, p_{3}, \ldots, p_{8} \rangle.
\end{equation}
Finally, the BN matrix $A = \langle p_{1}, p_{2}, p_{3}, \ldots, p_{8} \rangle = \langle 1, 6, 1, 2, 7, 8, 3, 6 \rangle$ is output by GERESA.

\subsection{Termination of GER}\label{subsubsection:main_tools_main_thms_for_GER}

In this section, we justify that the GER algorithm terminates for any input matrix $P=r_{0} Q$ where $r_{0} > 0$ and $Q$ is a $2^{n} \times 2^{n}$ TPM.

\begin{lemma}\label{lemma:GERESA}
Let $R = r Q$ where $r > 0$ and $Q$ is a $2^{n} \times 2^{n}$ TPM.
Let $v$ be any positive entry of $R$ such that $v \leq \min\limits_{1\le j\le 2^{n}}\ \max\limits_{1\le i\le 2^{n}}\ R(i,j)$.
When GERESA is executed with $R$ and $v$ as inputs, the algorithm will output a $2^{n} \times 2^{n}$ BN matrix $A = \langle p_{1}, p_{2}, p_{3}, \ldots, p_{2^{n}} \rangle$ such that for all $j \in [2^{n}]$, $R(p_{j}, j) \geq v$, and that there exists $j^{*} \in [2^{n}]$ satisfying $R(p_{j^{*}}, j^{*}) = v$.
\end{lemma}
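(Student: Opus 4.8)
The plan is to verify directly from the pseudocode of GERESA that (i) every step is well-defined, (ii) every $p_j$ receives a value, so the output $\langle p_1,\dots,p_{2^n}\rangle$ is a genuine $2^n\times 2^n$ BN matrix, and (iii) the claimed inequalities hold. The two for-loops of GERESA range over $\textrm{Col\_indices}(v,R)$ (lines 6--11) and over its complement $[2^n]\setminus\textrm{Col\_indices}(v,R)$ (lines 12--17); since these two index sets partition $[2^n]$, once I check that each loop assigns a value to every index it ranges over, it follows that $p_1,p_2,\dots,p_{2^n}$ are all assigned, and (because each loop runs over a finite set) that GERESA terminates and outputs a BN matrix.

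First I would handle lines 6--11. Since $v$ is a positive entry of $R$, the column in which it sits witnesses $\textrm{Col\_indices}(v,R)\neq\varnothing$. For each $j\in\textrm{Col\_indices}(v,R)$ the column $R(:,j)$ contains $v$, so $\textrm{Occurrences}(v,R(:,j))\neq\varnothing$ and line 7 can pick $p_j$ from it; by the definition of $\textrm{Occurrences}$ this gives $R(p_j,j)=v$. Fixing any $j^{*}\in\textrm{Col\_indices}(v,R)$ then yields the required index with $R(p_{j^{*}},j^{*})=v$, and in particular $R(p_j,j)\geq v$ for all these columns.

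Next I would handle lines 12--17. For each $j\in[2^n]\setminus\textrm{Col\_indices}(v,R)$ I invoke Lemma~\ref{lemma:larger}: its hypotheses are exactly the present ones ($R=rQ$ with $r>0$, $Q$ a $2^n\times 2^n$ TPM, and $v$ a positive entry of $R$ with $v\leq\min_{1\le j\le 2^n}\max_{1\le i\le 2^n}R(i,j)$), so $\textrm{Larger}(v,R(:,j))\neq\varnothing$. Hence line 13 selects $p_j$ from a non-empty finite set (the ``maximize the column frequency'' clause is just a choice of maximizer over that finite set, which always exists), and $p_j\in\textrm{Larger}(v,R(:,j))$ forces $R(p_j,j)>v\geq v$. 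Combining the two loops, $R(p_j,j)\geq v$ for every $j\in[2^n]$, and the $j^{*}$ found above witnesses the equality case; this is the whole statement.

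I do not expect a genuine obstacle here: the only place where non-trivial structure of the input is used is to guarantee $\textrm{Larger}(v,R(:,j))\neq\varnothing$ in line 13, and that is precisely Lemma~\ref{lemma:larger}, already available. The rest is bookkeeping --- confirming that the two for-loops jointly assign all of $p_1,\dots,p_{2^n}$, and reading off the two inequalities from the definitions of $\textrm{Occurrences}$ and $\textrm{Larger}$.
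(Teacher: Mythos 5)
Your proposal is correct and follows essentially the same route as the paper's proof: both verify from the pseudocode that the two for-loops jointly assign all $p_{j}$'s, use the non-emptiness of $\textrm{Occurrences}(v, R(:, j))$ for $j \in \textrm{Col\_indices}(v, R)$ to get $R(p_{j}, j) = v$ (hence the witness $j^{*}$), and invoke Lemma~\ref{lemma:larger} to guarantee $\textrm{Larger}(v, R(:, j)) \neq \varnothing$ and $R(p_{j}, j) > v$ for the remaining columns. Your write-up merely makes explicit the bookkeeping that the paper delegates to the pseudocode comments; no gap.
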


\begin{proof}
From the pseudocode of GERESA and the comments therein, it is evident that when executed on $R$ and $v$, GERESA will terminate by successfully outputting a $2^{n} \times 2^{n}$ BN matrix $A = \langle p_{1}, p_{2}, p_{3}, \ldots, p_{2^{n}} \rangle$.
Moreover, for each element $j$ of the non-empty set $\textrm{Col\_indices}(v, R)$, $p_{j} \in \textrm{Occurrences}(v, R(:, j))$ and hence $R(p_{j}, j) = v$ (see line 7 of GERESA).
On the other hand, for each $j^{'} \in [2^{n}] \setminus \textrm{Col\_indices}(v, R)$, $p_{j^{'}} \in \textrm{Larger}(v, R(:, j^{'}))$ (see line 13 of GERESA) and hence $R(p_{j^{'}}, j^{'}) > v$.
Therefore, this lemma is proved.
\end{proof}

We now examine lines 5-21 of the GER algorithm. We can regard lines 6-21 as a procedure (called $\lambda_{\textrm{GER}}$) which takes as input a $2^{n} \times 2^{n}$ matrix $R$ and then performs the following steps: 
\begin{enumerate}
\item compute a positive real number $x$ and a $2^{n} \times 2^{n}$ BN matrix $A$, 
\item update $R$ by subtracting $xA$ from it, and 
\item append $x$ and $A$ to the lists $\mathbf{x}$ and $\mathbf{BN}$ respectively.
\end{enumerate}
If $R$ is non-negative and $\vec{1}^{\top}_{2^{n}} R = r \vec{1}^{\top}_{2^{n}}$ for some $r \in (0, r_{0}]$ where $r_{0} > 0$, then one of the following situations will occur:

Case 1: there exists a BN matrix $\tilde{A} = \langle p_{1}, p_{2}, p_{3}, \ldots, p_{2^{n}} \rangle$ such that $R = r \tilde{A}$ (i.e., $R$ is a positive multiple of some BN matrix). Then, $B$ will be set to $r$ (line 7) and $\mathbf{v}$ will be set to the list $[r]$ (line 8). Afterwards, $x$ and $\mathtt{score}$ will be initialized to $0$ and $-\infty$ respectively (line 9). Then, we execute GERESA with $R$ and $r$ as input to produce a BN matrix $\mathtt{temp\_A}$ (line 11), and a real number score $\mathtt{temp\_score}$ is computed (line 13). By Lemma \ref{lemma:GERESA}, $\mathtt{temp\_A} = \tilde{A}$. After lines 15-18 are fully executed, the variables $x$ and $A$ will be set to $r$ and $\mathtt{temp\_A} = \tilde{A}$ respectively. Then, we update $R$ by subtracting $xA = r\tilde{A}$ from it (line 20) and hence $R$ becomes $\mathbf{O}_{2^{n} \times 2^{n}}$. Afterwards, $x$ and $A$ are appended to the lists $\mathbf{x}$ and $\mathbf{BN}$ respectively. We remark that when $R$ becomes $\mathbf{O}_{2^{n} \times 2^{n}}$, the while loop in lines 5-21 will end and then $\mathbf{x}$ and $\mathbf{BN}$ will be output.

Case 2: $R$ is not a positive multiple of some BN matrix. In lines 7-8, the value $B$ and the list $\mathbf{v}$ are determined. In the for loop in lines 10-19, for each element $v$ of the list $\mathbf{v}$, we apply GERESA to $R$ and $v$ to obtain a BN matrix $\mathtt{temp\_A}$, and compute a score $\mathtt{temp\_score}$ for the pair $(v, \mathtt{temp\_A})$. After the for statement in lines 10-19 is fully executed, the variable $x$ will be set to some element $v^{*}$ of $\mathbf{v}$ and the variable $A$ will be set to the BN matrix $\mathtt{GERESA}(R, v^{*}) \eqqcolon \langle p_{1}, p_{2}, p_{3}, \ldots, p_{2^{n}} \rangle$ (refer to the comment in line 19 for explanation). By Lemma $\ref{lemma:GERESA}$, for all $j \in [2^{n}]$, $R(p_{j}, j) \geq v^{*}$, and there exists $j^{*} \in [2^{n}]$ satisfying $R(p_{j^{*}}, j^{*}) = v^{*}$. Hence, all entries of $R - xA = R - v^{*}\langle p_{1}, p_{2}, p_{3}, \ldots, p_{2^{n}} \rangle$ are non-negative and $\mathcal{N}^{+}(R - xA) < \mathcal{N}^{+}(R)$. Moreover, $\vec{1}^{\top}_{2^{n}}(R - xA) = (r - x) \vec{1}^{\top}_{2^{n}}$. Because $R \neq xA$, $R - xA$ contains positive entries and hence $0 < r - x < r_{0}$. In line 20, $R$ is updated to become $R - xA$. Finally, in line 21, we append $x$ and $A$ to the lists $\mathbf{x}$ and $\mathbf{BN}$ respectively.

With the above considerations, we can prove the following important theorem about GER.

\begin{theorem}\label{thm:GER_must_terminate_and_output}
Let $P = r_{0} Q$ where $r_{0} > 0$ and $Q$ is a $2^{n} \times 2^{n}$ TPM.
When GER is executed on $P$ with the score parameter $z$ being any positive real number greater than $1$, the algorithm will eventually terminate at line 22 and output a list of positive real numbers $\mathbf{x} = [x_{1}, x_{2}, \ldots, x_{K}]$ and a list of distinct BN matrices $\mathbf{BN} = [A_{1}, A_{2}, \ldots, A_{K}]$ such that $P = \sum^{K}_{i = 1} x_{i} A_{i}$ and $\sum^{K}_{i = 1} x_{i} = r_{0}$.
In other words, 
$Q$ is the PBN matrix of some PBN which has the decomposition
\[ Q = \sum^{K}_{i=1}\ \frac{x_i}{r_0}\ A_i. \]
\end{theorem}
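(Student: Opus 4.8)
The plan is to mirror the proof of Theorem \ref{thm:SER*_must_terminate_and_output}, using the case analysis of the procedure $\lambda_{\textrm{GER}}$ carried out just above together with Lemma \ref{lemma:GERESA}. Let $R_{1}, R_{2}, \ldots$ denote the sequence of residue matrices produced during the execution of GER on $P$, so $R_{1} = P = r_{0}Q$. The first thing I would check is that the invariant ``$R_{k}$ is non-negative and $\vec{1}^{\top}_{2^{n}} R_{k} = r \vec{1}^{\top}_{2^{n}}$ for some $r \in (0, r_{0}]$'' is preserved: it holds for $R_{1}$, and the Case~2 analysis of $\lambda_{\textrm{GER}}$ shows that if $R_{k}$ satisfies it and is not a positive multiple of a BN matrix, then $R_{k+1} = R_{k} - x_{k} A_{k}$ is again non-negative with column sums $r - x_{k} \in (0, r_{0})$, where $x_{k}$ is a positive entry of $R_{k}$ with $x_{k} \le B = \min_{j} \max_{i} R_{k}(i,j)$; in particular the precondition of GERESA, hence of Lemma \ref{lemma:GERESA}, is met at every iteration.

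Next I would establish termination. If $R_{1}$ is already a positive multiple of a BN matrix, the Case~1 analysis shows the loop runs once and GER halts at line 22. Otherwise, suppose for contradiction that every $R_{k}$ generated were not a positive multiple of a BN matrix; then by Case~2, $\mathcal{N}^{+}(R_{1}) > \mathcal{N}^{+}(R_{2}) > \cdots$ would be an infinite strictly decreasing sequence of positive integers, which is absurd. Hence some $R_{K}$ is a positive multiple of a BN matrix; the while loop in lines 5--21 is executed exactly $K$ times, Case~1 forces $R_{K+1} = \mathbf{O}_{2^{n} \times 2^{n}}$, and line 22 outputs $\mathbf{x} = [x_{1}, \ldots, x_{K}]$ and $\mathbf{BN} = [A_{1}, \ldots, A_{K}]$. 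Telescoping $\mathbf{O}_{2^{n} \times 2^{n}} = R_{K+1} = R_{1} - \sum_{i=1}^{K} x_{i} A_{i}$ yields $P = \sum_{i=1}^{K} x_{i} A_{i}$, and Proposition \ref{prop:sum_of_xi_is_1_automatic} gives $\sum_{i=1}^{K} x_{i} = r_{0}$.

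It remains to show the $A_{i}$ are distinct, argued exactly as in Theorem \ref{thm:SER*_must_terminate_and_output}. Fix $s < t$ and write $A_{s} = \langle k^{(s)}_{1}, \ldots, k^{(s)}_{2^{n}} \rangle$ and $A_{t} = \langle k^{(t)}_{1}, \ldots, k^{(t)}_{2^{n}} \rangle$. By Lemma \ref{lemma:GERESA} applied at step $s$ there is $j^{*}$ with $R_{s}(k^{(s)}_{j^{*}}, j^{*}) = x_{s}$, so $R_{s+1}(k^{(s)}_{j^{*}}, j^{*}) = 0$ and, since entries are non-increasing along $R_{1}, \ldots, R_{K+1}$, also $R_{t}(k^{(s)}_{j^{*}}, j^{*}) = 0$; but Lemma \ref{lemma:GERESA} applied at step $t$ gives $R_{t}(k^{(t)}_{j^{*}}, j^{*}) \ge x_{t} > 0$, so $k^{(s)}_{j^{*}} \ne k^{(t)}_{j^{*}}$ and hence $A_{s} \ne A_{t}$. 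Finally, dividing by $r_{0}$: the numbers $\frac{x_{i}}{r_{0}}$ are positive, sum to $1$, and $Q = \frac{1}{r_{0}} P = \sum_{i=1}^{K} \frac{x_{i}}{r_{0}} A_{i}$ with the $A_{i}$ distinct BN matrices, so this is a decomposition of $Q$; a PBN with PBN matrix $Q$ is then constructed exactly as in the numerical demonstration of Section \ref{subsection:GER_demo}.

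I do not expect a serious obstacle here: the essential content has already been packaged into the $\lambda_{\textrm{GER}}$ case analysis and into Lemma \ref{lemma:GERESA}. The one point requiring care is the bookkeeping that keeps the GERESA precondition valid at every iteration --- that the chosen weight $x_{k}$ is a positive entry of $R_{k}$ not exceeding $\min_{j}\max_{i} R_{k}(i,j)$ --- together with the observation that the decrease of $\mathcal{N}^{+}$ is strict precisely because Lemma \ref{lemma:GERESA} produces a new zero entry in some column while Case~2 ensures no previously positive entry outside those columns becomes negative or is otherwise lost.
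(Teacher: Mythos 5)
Your proposal is correct and follows essentially the same route as the paper: the paper's proof likewise defers the termination and decomposition claims to the case analysis of $\lambda_{\textrm{GER}}$ and the argument of Theorem \ref{thm:SER*_must_terminate_and_output}, and establishes distinctness of the $A_{i}$ via Lemma \ref{lemma:GERESA} exactly as you do. Your only addition is to spell out the termination/telescoping details that the paper omits by reference, which is fine.
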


\begin{proof}
The proof for the fact that the algorithm will eventually terminate at line 22 and the outputs $\mathbf{x} = [x_{1}, x_{2}, \ldots, x_{K}]$, $\mathbf{BN} = [A_{1}, A_{2}, \ldots, A_{K}]$ satisfy $P = \sum^{K}_{i = 1} x_{i} A_{i}$ and $\sum^{K}_{i = 1} x_{i} = r_{0}$, as well as the claims on $Q$, follows the same line of arguments as the proof of the corresponding parts of Theorem \ref{thm:SER*_must_terminate_and_output} so that they are omitted.

It remains to prove that $A_{1}, A_{2}, \ldots, A_{K}$ are distinct. Fix arbitrary $s, t \in [K]$ such that $s < t$. Write $A_{s}$ as $\langle k^{(s)}_{1}, k^{(s)}_{2}, k^{(s)}_{3}, \ldots, k^{(s)}_{2^{n}} \rangle$ and $A_{t}$ as $\langle k^{(t)}_{1}, k^{(t)}_{2}, k^{(t)}_{3}, \ldots, k^{(t)}_{2^{n}} \rangle$.
For all $k \in [K+1]$, define $R_{k} \coloneqq P - \sum^{k-1}_{i = 1} x_{i}A_{i}$.
Note that for all $i, j \in [2^{n}]$, $R_{1}(i, j) \geq R_{2}(i, j) \geq \ldots \geq R_{K+1}(i, j) = 0$, and that for all $k \in [K]$, $A_{k} = \mathtt{GERESA}(R_{k}, x_{k})$.
Because $R_{s+1} = R_{s} - x_{s} A_{s}$ and by Lemma \ref{lemma:GERESA}, $R_{s}(k^{(s)}_{j^{*}}, j^{*}) = x_{s}$ for some $j^{*} \in [2^{n}]$, $R_{s+1}(k^{(s)}_{j^{*}}, j^{*}) = 0$, which implies that $R_{t}(k^{(s)}_{j^{*}}, j^{*}) = 0$.
Because $R_{t}(k^{(t)}_{j^{*}}, j^{*}) \geq x_{t} > 0$ (by Lemma \ref{lemma:GERESA}), $k^{(s)}_{j^{*}} \neq k^{(t)}_{j^{*}}$. Hence, $A_{s} \neq A_{t}$. Since $s$ and $t$ are arbitrary, $A_{1}, A_{2}, \ldots, A_{K}$ are distinct.
%Because $\frac{x_{1}}{r_{0}}, \frac{x_{2}}{r_{0}}, \ldots, \frac{x_{K}}{r_{0}} > 0$,
%$\sum^{K}_{i = 1} \frac{x_{i}}{r_{0}} = 1$, $\frac{1}{r_{0}} P = \sum^{K}_{i = 1} \frac{x_{i}}{r_{0}} A_{i}$ and $A_{1}, A_{2}, \ldots, A_{K}$ are distinct BN matrices,
%$\sum^{K}_{i = 1} \frac{x_{i}}{r_{0}} A_{i}$ is a decomposition of $\frac{1}{r_{0}} P$.
%From $\frac{x_{1}}{r_{0}}, \frac{x_{2}}{r_{0}}, \ldots, \frac{x_{K}}{r_{0}}$ and $A_{1}, A_{2}, \ldots, A_{K}$, we can construct a PBN whose PBN matrix equals $\frac{1}{r_{0}} P$. The method of construction has been explained in the numerical demonstration of GER in Section \ref{subsection:GER_demo}.
\end{proof}

\subsection{Two Upper Bound Theorems for GER}\label{subsection:GER_upper_bounds}

Theorem \ref{thm:GER_must_terminate_and_output} says that for any $2^{n} \times 2^{n}$ TPM $P$, we can obtain a decomposition of $P$ by executing GER with $r_{0} P$ as input, where $r_{0}$ is any positive real number. In this section, we are going to provide two theorems related to the sparsity of the decompositions found using GER.

\begin{theorem}\label{thm:entry_removal_upper_bound_GER}
Let $P$ be a $2^{n} \times 2^{n}$ TPM and let $r_{0}$ be any positive real number.
If $x_{1}, x_{2}, \ldots, x_{K} > 0$ and $A_{1}, A_{2}, \ldots, A_{K}$ (distinct BN matrices) are the outputs when GER is applied to the matrix $r_{0}P$ with the score parameter $z$ being any real number greater than $1$, then $K \leq \mathcal{N}^{+}(P) - 2^{n} + 1$.
\end{theorem}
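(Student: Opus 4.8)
The plan is to mirror, essentially verbatim, the telescoping argument used in the proof of Theorem \ref{thm:entry_removal_upper_bound_SER*}, with the role of the SER procedure replaced by the procedure $\lambda_{\textrm{GER}}$ analyzed in Section \ref{subsubsection:main_tools_main_thms_for_GER}. First I would invoke Theorem \ref{thm:GER_must_terminate_and_output}: running GER on $r_{0}P$ (with any score parameter $z > 1$) terminates and produces positive reals $x_{1}, \ldots, x_{K}$ and distinct BN matrices $A_{1}, \ldots, A_{K}$ with $r_{0}P = \sum_{i=1}^{K} x_{i} A_{i}$. For $k \in [K+1]$ set $R_{k} \coloneqq r_{0}P - \sum_{i=1}^{k-1} x_{i}A_{i}$, so $R_{1} = r_{0}P$, $R_{K+1} = \mathbf{O}_{2^{n} \times 2^{n}}$, and each $R_{k+1} = R_{k} - x_{k}A_{k}$ is precisely the residue obtained from $R_{k}$ by one pass through lines 6--21 of GER.

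Next I would record two facts about this sequence, both already available from the termination analysis. First, for each $k \in [K]$ one has $\mathcal{N}^{+}(R_{k+1}) < \mathcal{N}^{+}(R_{k})$: if $R_{k}$ is not a positive multiple of a BN matrix this is Case 2 of the $\lambda_{\textrm{GER}}$ discussion, where Lemma \ref{lemma:GERESA} furnishes an index $j^{*}$ with $R_{k}(p_{j^{*}}, j^{*}) = x_{k}$ so that the corresponding entry of $R_{k+1}$ is zero; if $R_{k}$ is a positive multiple of a BN matrix this is Case 1, where $R_{k+1} = \mathbf{O}_{2^{n}\times 2^{n}}$. Hence $\mathcal{N}^{+}(R_{1}) > \mathcal{N}^{+}(R_{2}) > \cdots > \mathcal{N}^{+}(R_{K+1}) = 0$, so $\mathcal{N}^{+}(R_{k}) - \mathcal{N}^{+}(R_{k+1}) \ge 1$ for every $k \in [K]$. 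Second, since $R_{K+1} = R_{K} - x_{K}A_{K} = \mathbf{O}_{2^{n}\times 2^{n}}$, we get $R_{K} = x_{K}A_{K}$, a positive multiple of a BN matrix, and as a BN matrix has exactly one (unit) nonzero entry per column, $\mathcal{N}^{+}(R_{K}) = 2^{n}$.

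Then the computation closes exactly as in Theorem \ref{thm:entry_removal_upper_bound_SER*}: noting $\mathcal{N}^{+}(P) = \mathcal{N}^{+}(r_{0}P)$ and telescoping,
\[
\mathcal{N}^{+}(P) = \sum_{k=1}^{K} \bigl[\mathcal{N}^{+}(R_{k}) - \mathcal{N}^{+}(R_{k+1})\bigr]
= \sum_{k=1}^{K-1} \bigl[\mathcal{N}^{+}(R_{k}) - \mathcal{N}^{+}(R_{k+1})\bigr] + \mathcal{N}^{+}(R_{K})
\ge (K-1) + 2^{n},
\]
and rearranging yields $K \le \mathcal{N}^{+}(P) - 2^{n} + 1$; the argument is visibly independent of $z$. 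Since the substantive work — that GER terminates, that GERESA always selects, in some column, an entry equal to the chosen value $v$ so that $\mathcal{N}^{+}$ strictly drops, and that the penultimate residue is a scalar multiple of a BN matrix — is already done in Lemma \ref{lemma:GERESA} and Theorem \ref{thm:GER_must_terminate_and_output}, there is no genuine new obstacle here. The only point needing a moment's care is verifying that \emph{every} iteration of the while loop, including the final one, reduces $\mathcal{N}^{+}$ by at least one, which is exactly what the two-case analysis of $\lambda_{\textrm{GER}}$ delivers.
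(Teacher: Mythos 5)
Your proposal is correct and is essentially the paper's own argument: the paper proves this theorem by stating that it "follows the same line of arguments as the proof of Theorem \ref{thm:entry_removal_upper_bound_SER*}", and your write-up is exactly that telescoping argument, with the strict decrease of $\mathcal{N}^{+}$ and the fact $\mathcal{N}^{+}(R_{K}) = 2^{n}$ supplied by Lemma \ref{lemma:GERESA} and the termination analysis of Theorem \ref{thm:GER_must_terminate_and_output}, just as intended.
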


\begin{proof}
The proof of this theorem follows the same line of arguments as the proof of Theorem \ref{thm:entry_removal_upper_bound_SER*}.
\end{proof}

\begin{theorem}\label{thm:rational_upper_bound_GER}
Let $P$ be a $2^{n} \times 2^{n}$ rational TPM.
Let $p$ be the smallest positive integer such that all entries of $pP$ are integers. 
Let $a^{*}$ be the smallest positive entry of $P$. 
Let $r_{0}$ be any positive real number.
If $x_{1}, x_{2}, \ldots, x_{K} > 0$ and $A_{1}, A_{2}, \ldots, A_{K}$ (distinct BN matrices) are the outputs when GER is applied to the matrix $r_{0}P$ with the score parameter $z$ being any real number greater than $1$, then
$K \leq (1 - a^{*})p + 1$.
\end{theorem}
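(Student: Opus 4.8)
The plan is to follow the proof of Theorem~\ref{thm:rational_upper_bound_SER_1} almost line for line, tracking the denominators of the entries of the residue matrices and then invoking the conservation identity $\sum_{k=1}^{K} x_k = r_0$ supplied by Theorem~\ref{thm:GER_must_terminate_and_output}. Concretely, I would introduce the sequence of $2^{n}\times 2^{n}$ matrices $R_1, R_2, \ldots, R_{K+1}$ generated when GER is run on $r_0 P$, so that $R_1 = r_0 P$, $R_{k+1} = R_k - x_k A_k$, and $R_{K+1} = \mathbf{O}_{2^{n} \times 2^{n}}$. The structural claim, proved by induction on $k$, is that every entry of $R_k$ is of the form $\tfrac{r_0 s}{p}$ for some non-negative integer $s$. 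The base case is immediate since $pP$ is integral. For the inductive step I would use Lemma~\ref{lemma:GERESA}: the BN matrix $A_k = \langle p_1, p_2, \ldots, p_{2^{n}} \rangle$ returned by GERESA on $(R_k, x_k)$ satisfies $R_k(p_j, j) \ge x_k$ for every $j$ and $R_k(p_{j^{*}}, j^{*}) = x_k$ for some $j^{*}$. In particular $x_k$ is itself an entry of $R_k$, hence $x_k = \tfrac{r_0 s_k}{p}$ for a positive integer $s_k$, and subtracting $x_k A_k$ changes only the entries at positions $(p_j, j)$, replacing $\tfrac{r_0 s}{p}$ by $\tfrac{r_0(s - s_k)}{p}$ with $s - s_k \ge 0$; all other entries are untouched. (The case in which $R_k$ is already a positive multiple $r\tilde A$ of a BN matrix is handled identically, with $x_k = r$, which is again an entry of $R_k$.)

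The one genuine difference from Theorem~\ref{thm:rational_upper_bound_SER_1} concerns the first weight $x_1$. Since GER chooses $x_1$ greedily (via the list $\mathbf{v}$ built in line~8 of Algorithm~\ref{alg:GER}, or as $B$ in the positive-multiple case) rather than as the globally smallest positive entry, I cannot assert $x_1 = r_0 a^{*}$. What remains true, and is all that is needed, is that $x_1$ is \emph{some} positive entry value of $R_1 = r_0 P$, and $r_0 a^{*}$ is by definition the smallest positive entry of $r_0 P$; therefore $x_1 \ge r_0 a^{*}$, i.e.\ $\tfrac{s_1}{p} \ge a^{*}$.

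Combining these facts finishes the proof. By Theorem~\ref{thm:GER_must_terminate_and_output}, $\sum_{k=1}^{K} x_k = r_0$, hence $\sum_{k=1}^{K} \tfrac{s_k}{p} = 1$. Using $\tfrac{s_1}{p} \ge a^{*}$ and $s_k \ge 1$ for $k \ge 2$,
\[
1 = \frac{s_1}{p} + \sum_{k=2}^{K} \frac{s_k}{p} \ge a^{*} + \frac{K-1}{p},
\]
and rearranging gives $K \le (1 - a^{*})p + 1$. The only steps requiring care are the inductive verification that the ``denominator-$p$'' invariant survives the GERESA update and the substitution of the exact equality $x_1 = r_0 a^{*}$ (valid for SER~1) by the inequality $x_1 \ge r_0 a^{*}$ (valid for GER); neither is a serious obstacle, since Lemma~\ref{lemma:GERESA} already does the bookkeeping for the former and only a lower bound on $s_1$ is used in the latter.
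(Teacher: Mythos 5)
Your proposal is correct and follows essentially the same route as the paper's proof: both track the invariant that every entry of the residue matrices $R_k$ has the form $\tfrac{r_0 s}{p}$ via Lemma \ref{lemma:GERESA}, replace the SER~1 equality $x_1 = r_0 a^{*}$ by the inequality $x_1 \ge r_0 a^{*}$ (since $x_1$ is merely some positive entry of $r_0 P$), and conclude from $\sum_{k=1}^{K} x_k = r_0$ (Theorem \ref{thm:GER_must_terminate_and_output}) that $1 \ge a^{*} + \tfrac{K-1}{p}$. No gaps; your explicit handling of the case where $R_k$ is a positive multiple of a BN matrix is a harmless extra detail the paper leaves implicit.
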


\begin{proof}
Consider the matrices $R_{1}, R_{2}, \ldots, R_{K+1}$ defined in the proof of Theorem \ref{thm:GER_must_terminate_and_output}. Note that the variable $R$ in GER would be set to $R_{1}, R_{2}, \ldots, R_{K+1}$ in chronological order when the algorithm is executed on $r_{0} P$ with the score parameter $z$.
Note that each entry of $R_{1} = r_{0} P$ is of the form $\frac{r_{0}s}{p}$ for some non-negative integer $s$.
Write $A_{1}$ as $\langle p^{(1)}_{1}, p^{(1)}_{2}, p^{(1)}_{3}, \ldots, p^{(1)}_{2^{n}} \rangle$. 
Note that $x_{1}$ is some positive entry of $R_{1}$ and $A_{1} = \mathtt{GERESA}(R_{1}, x_{1})$. 
Hence, there exists a positive integer $s_{1}$ such that $x_{1} = \frac{r_{0} s_{1}}{p} \geq r_{0} a^{*}$.
Moreover, by Lemma \ref{lemma:GERESA}, for all $j \in [2^{n}]$, $R_{1}(p^{(1)}_{j}, j) \geq x_{1}$.  
Therefore, each entry of $R_{2} = R_{1} - x_{1} A_{1} = R_{1} - x_{1} \langle p^{(1)}_{1}, p^{(1)}_{2}, p^{(1)}_{3}, \ldots, p^{(1)}_{2^{n}} \rangle$ is also of the form $\frac{r_{0} s}{p}$ for some non-negative integer $s$.
Using the same argument, we deduce that for each $k \in [K]$, $x_{k} = \frac{r_{0}s_{k}}{p}$ for some positive integer $s_{k}$.
By Theorem \ref{thm:GER_must_terminate_and_output},
$\sum^{K}_{k = 1} x_{k} = r_{0}$. This implies
\begin{equation*}
1 
= \sum^{K}_{k = 1} \frac{s_{k}}{p}
\geq a^{*} + \sum^{K}_{k = 2} \frac{s_{k}}{p}
\geq a^{*} + \sum^{K}_{k = 2} \frac{1}{p}.
\end{equation*}
Rearranging, we get $K \leq (1 - a^{*}) p + 1$.
\end{proof}

\newpage

\section{Theoretical Results Related to the Lower Bound Problem}\label{section:lower_bound}

Let $P$ be any $2^{n} \times 2^{n}$ TPM. The construction problem of sparse PBNs is to find a decomposition $\sum^{K}_{i = 1} x_{i} A_{i}$ of $P$ such that the length $K$ is as small as possible. This problem can be solved by brute-force: test if $P$ is a convex combination of $k$ elements in $B_n(P)$ iteratively for $k=1,2,3,\ldots$ until a convex combination is found. Obviously, this brute-force checking is associated with high time complexity so that it's impracticable. 

It may be then natural to ask whether the minimum value of $K$ can be determined from the matrix $P$ in advance so that a researcher can safely conclude that a derived decomposition has the optimal length. Unfortunately, no such result is known to our best knowledge. Therefore, we turn to a more tractable problem: finding lower bounds for the length ($K$) of any decomposition of a given TPM $P$, and we call it \textit{the lower bound problem of sparse PBN construction}.
Preferably, such lower bounds should be as sharp as possible. In this section, we propose a series of theoretical results related to the lower bound problem of sparse PBN construction. Among these results, Theorem \ref{thm:non-trivial lower bound} is the strongest and most non-trivial one. These results are applied in Appendix \ref{appendix:GER_gives_17_sparsest_decomposition} to justify that most of the decompositions generated by GER in our numerical experiments (Section~\ref{section:numerical_experiments}) have the optimal lengths.

Let's start with a couple of easy lower bounds.

\begin{proposition}\label{prop:max_P(:,j)_lower_bound}
Let $P$ be a $2^{n} \times 2^{n}$ TPM.
Then, the length of any decomposition of $P$ is at least $\max\limits_{1\leq j \leq 2^{n}} \mathcal{N}^{+} (P(:, j))$.
\end{proposition}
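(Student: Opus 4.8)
The plan is to fix a column index $j^*$ that achieves the maximum, i.e. $\mathcal{N}^{+}(P(:,j^*)) = \max_{1\le j\le 2^n}\mathcal{N}^{+}(P(:,j))$, and to argue that any decomposition $P = \sum_{i=1}^{K} x_i A_i$ must have length $K$ at least as large as the number of positive entries in that column. The key observation is that each BN matrix $A_i$ has exactly one nonzero entry in column $j^*$, and that entry equals $1$. Since the $x_i$ are positive, the entry $P(k,j^*)$ for any row $k$ is the sum of those $x_i$ for which $A_i$ has its column-$j^*$ nonzero in row $k$; in particular, $P(k,j^*) > 0$ forces at least one such $i$ to exist.

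First I would introduce, for each row $k \in [2^n]$, the set $S_k \coloneqq \{\, i \in [K] : A_i(k,j^*) = 1 \,\}$. Because column $j^*$ of each $A_i$ is a standard unit vector, the sets $S_1, S_2, \ldots, S_{2^n}$ are pairwise disjoint and their union is contained in $[K]$ (in fact equals $[K]$, since every $A_i$ contributes its column-$j^*$ support to exactly one row). Next I would observe that for each $k$ with $P(k,j^*) > 0$ we have $P(k,j^*) = \sum_{i \in S_k} x_i > 0$, which can only hold if $S_k \neq \varnothing$. Hence there are at least $\mathcal{N}^{+}(P(:,j^*))$ nonempty sets among the $S_k$, and since these sets are disjoint and each nonempty one contributes at least one distinct index, we get $K \ge \sum_{k} |S_k| \ge \mathcal{N}^{+}(P(:,j^*)) = \max_{1\le j\le 2^n}\mathcal{N}^{+}(P(:,j))$.

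This argument is essentially elementary, so I do not anticipate a genuine obstacle; the only point requiring a little care is to state cleanly that each BN matrix has precisely one $1$ in each column (this is recorded in the definition of $B_n$ in the excerpt) and to handle the disjointness of the $S_k$ correctly. I would keep the writeup short: define the $S_k$, note disjointness, note that $P(k,j^*) > 0 \Rightarrow S_k \neq \varnothing$, and conclude by counting. No appeal to any of the earlier theorems is needed beyond the structural facts about BN matrices and the defining equation $\sum_{i=1}^K x_i A_i = P$ with $x_i > 0$.
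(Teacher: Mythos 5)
Your proposal is correct and follows essentially the same route as the paper: the paper's proof likewise observes that each BN matrix $A_{r}$ has exactly one nonzero entry (equal to $1$) in column $j$, so the column $P(:,j)=\sum_{r=1}^{K}x_{r}A_{r}(:,j)$ can have at most $K$ positive entries, giving $K \geq \mathcal{N}^{+}(P(:,j))$ for every $j$. Your introduction of the disjoint index sets $S_{k}$ simply makes this counting explicit (and mirrors the partition idea the paper formalizes later in Lemma \ref{lemma:convenient_notations}), so there is no substantive difference.
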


\begin{proof}
Let $\sum^{K}_{r = 1} x_{r} A_{r}$ be any decomposition of $P$.
It suffices to show that for each $j \in [2^{n}]$, $K \geq \mathcal{N}^{+} (P(:, j))$. Because each $A_{r}$ is a BN matrix, the $j$-th column of $A_{r}$ has exactly one non-zero entry and this non-zero entry equals $1$. Therefore, the $j$-th column of $P$ has at most $K$ positive entries, i.e.\@ $K \geq \mathcal{N}^{+} (P(:, j))$.
\end{proof}

\begin{proposition}\label{prop:copying_PBN}
Let $P$ be a $2^{n} \times 2^{n}$ TPM. Consider
$
Q \coloneqq
\begin{pmatrix}
P						& \mathbf{O}_{2^{n} \times 2^{n}} \\
\mathbf{O}_{2^{n} \times 2^{n}}	& P
\end{pmatrix}
$ which is a $2^{n+1} \times 2^{n+1}$ TPM.
Then, for all positive integers $d$, any decomposition $\sum^{K}_{r = 1} x_{r} A_{r}$ of $P$ satisfies $K \geq d$ if and only if any decomposition $\sum^{L}_{s = 1} x^{'}_{s} A^{'}_{s}$ of $Q$ satisfies $L \geq d$.
\end{proposition}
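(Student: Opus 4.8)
The plan is to establish the biconditional by proving its contrapositive in both directions, i.e., to show that $P$ has a decomposition of length $k$ if and only if $Q$ has a decomposition of length $k$. Since the two conditions ``every decomposition of $P$ has length $\geq d$'' and ``the minimum decomposition length of $P$ is $\geq d$'' are equivalent (any TPM has at least one decomposition, e.g., by Theorem~\ref{thm:SER*_must_terminate_and_output}), it suffices to relate the \emph{minimum} decomposition lengths of $P$ and $Q$ and show they are equal. I would therefore split the argument into two constructions.

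First, the easy direction: given a decomposition $P = \sum_{r=1}^{K} x_r A_r$ with $\sum_r x_r = 1$, I would form the block-diagonal BN matrices $A'_r \coloneqq \begin{pmatrix} A_r & \mathbf{O}_{2^n \times 2^n} \\ \mathbf{O}_{2^n \times 2^n} & A_r \end{pmatrix}$. Each $A'_r$ is a $2^{n+1} \times 2^{n+1}$ BN matrix (each column is a standard unit vector, since each column of $A_r$ is), the $A'_r$ are distinct because the $A_r$ are, and $\sum_r x_r A'_r = Q$ with $\sum_r x_r = 1$. Hence any decomposition of $P$ of length $K$ yields one of $Q$ of length $K$, so the minimum length for $Q$ is at most that for $P$.

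The harder direction is to go from a decomposition of $Q$ back to one of $P$, and this is the main obstacle. A BN matrix $A'$ in $B_{n+1}(Q)$ need not be block-diagonal: its columns are unit vectors, but the first $2^n$ columns could a priori place their $1$ in rows $2^n+1, \ldots, 2^{n+1}$, and similarly for the last $2^n$ columns. However, the key observation is that $D_j(Q)$ — the set of admissible row indices for column $j$ — is constrained by the zero blocks of $Q$: for $j \in [2^n]$ we have $D_j(Q) = D_j(P) \subseteq [2^n]$, and for $j \in \{2^n+1,\ldots,2^{n+1}\}$ we have $D_j(Q) = \{2^n + k : k \in D_{j-2^n}(P)\} \subseteq \{2^n+1,\ldots,2^{n+1}\}$. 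So every $A' \in B_{n+1}(Q)$ \emph{is} automatically block-diagonal, of the form $\begin{pmatrix} A & \mathbf{O} \\ \mathbf{O} & \hat{A}\end{pmatrix}$ with $A, \hat A \in B_n(P)$. Given $Q = \sum_{s=1}^L x'_s A'_s$ with $A'_s = \begin{pmatrix} A_s & \mathbf{O} \\ \mathbf{O} & \hat A_s \end{pmatrix}$, reading off the top-left block gives $P = \sum_{s=1}^L x'_s A_s$ with $\sum_s x'_s = 1$. The $A_s$ need not be distinct, but collecting equal terms only shortens the decomposition, so $P$ admits a decomposition of length $\leq L$; thus the minimum length for $P$ is at most that for $Q$.

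Combining the two directions, the minimum decomposition lengths of $P$ and $Q$ coincide, which immediately gives the stated equivalence: for every positive integer $d$, every decomposition of $P$ has length $\geq d$ iff the minimum length for $P$ is $\geq d$ iff the minimum length for $Q$ is $\geq d$ iff every decomposition of $Q$ has length $\geq d$. I expect the only subtlety worth spelling out carefully is the claim that $B_{n+1}(Q)$ consists exactly of block-diagonal matrices, which hinges on correctly computing $D_j(Q)$ from the block structure of $Q$; everything else is bookkeeping with the constraint $\sum x_i = 1$ handled automatically by Proposition~\ref{prop:sum_of_xi_is_1_automatic}.
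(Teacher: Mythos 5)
Your proposal is correct and follows essentially the same route as the paper: it lifts a decomposition of $P$ to one of $Q$ via the block-diagonal BN matrices $\langle p_{1},\ldots,p_{2^{n}},2^{n}+p_{1},\ldots,2^{n}+p_{2^{n}}\rangle$, and projects a decomposition of $Q$ back to $P$ by keeping only the first $2^{n}$ columns, which is exactly your ``read off the top-left block'' step. Your additional remarks --- that every element of $B_{n+1}(Q)$ is automatically block-diagonal because of the structure of $D_{j}(Q)$, and that possibly repeated truncated matrices must be merged (only shortening the decomposition) --- merely spell out details the paper's proof passes over silently, so no gap remains.
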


\begin{proof}
Fix arbitrary positive integer $d$. 

First, we prove the ``only if'' part of the statement.
Let $\sum^{L}_{s = 1} x^{'}_{s} A^{'}_{s}$ be an arbitrary decomposition of $Q$.
For each $s \in [L]$, write $A^{'}_{s}$ as 
$\langle p^{(s)}_{1}, p^{(s)}_{2}, p^{(s)}_{3}, \ldots, p^{(s)}_{2^{n+1}} \rangle$.
Define $A_{s} \coloneqq \langle p^{(s)}_{1}, p^{(s)}_{2}, p^{(s)}_{3}, \ldots, p^{(s)}_{2^{n}} \rangle$.
We can see that $\sum^{L}_{s = 1} x^{'}_{s} A_{s}$ is a decomposition of $P$.
By assumption, $L \geq d$.

Next, we prove the ``if'' part of the statement.
Let $\sum^{K}_{r = 1} x_{r} A_{r}$ be an arbitrary decomposition of $P$.
For each $r \in [K]$, write $A_{r}$ as $\langle p^{(r)}_{1}, p^{(r)}_{2}, p^{(r)}_{3}, \ldots, p^{(r)}_{2^{n}} \rangle$.
Define $A^{'}_{r} \coloneqq \langle p^{(r)}_{1}, p^{(r)}_{2}, p^{(r)}_{3}, \ldots, p^{(r)}_{2^{n}}, \\ 2^{n} + p^{(r)}_{1}, 2^{n} + p^{(r)}_{2}, 2^{n} + p^{(r)}_{3}, \ldots, 2^{n} + p^{(r)}_{2^{n}} \rangle$.
We can see that $\sum^{K}_{r = 1} x_{r} A^{'}_{r}$ is a decomposition of $Q$.
By assumption, $K \geq d$.
\end{proof}

We then proceed to establish some non-trivial lower bounds. Their proofs rely on suitable partitions of the index set $[K]$ where $K$ is the length of a decomposition of a $2^{n} \times 2^{n}$ TPM.

\begin{lemma}\label{lemma:convenient_notations}
Let $P$ be a $2^{n} \times 2^{n}$ TPM.
Let $x_{1}, x_{2}, \ldots, x_{K}$ (positive real numbers) and $A_{1}, A_{2}, \ldots, A_{K}$ (distinct BN matrices) define a decomposition of $P$.
Fix arbitrary $i \in [2^{n}]$. Let $P(i_{1}, i)$, $P(i_{2}, i)$, $\ldots$, $P(i_{d}, i)$ be the positive entries of $P(:, i)$.
For all $l \in [d]$, define $X^{i}_{l} \coloneqq \{ r \in [K] : A_{r}(i_{l}, i) = 1 \}$.
Then, $\left\{ X^{i}_{1}, X^{i}_{2}, \ldots, X^{i}_{d} \right\}$ is a partition of $[K]$.
Moreover, $P(i_{l}, i) = \sum_{r \in X^{i}_{l}} x_{r}$.
\end{lemma}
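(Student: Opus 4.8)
The plan is to verify the two claims directly from the definition of a decomposition, namely that $P = \sum_{r=1}^{K} x_r A_r$ with each $x_r > 0$ and each $A_r$ a BN matrix in $B_n(P)$.

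First I would establish that $\{X^i_1, X^i_2, \ldots, X^i_d\}$ is a partition of $[K]$. Since each $A_r$ is a BN matrix, its $i$-th column is a standard unit vector $\vec{e}_{k}$ for some $k \in [2^n]$; in fact, because $A_r \in B_n(P)$, we have $k \in D_i(P)$, i.e.\ $k \in \{i_1, i_2, \ldots, i_d\}$. Hence for every $r \in [K]$ there is exactly one $l \in [d]$ with $A_r(i_l, i) = 1$, which says precisely that $r$ belongs to exactly one of the sets $X^i_l$. This shows the $X^i_l$ are pairwise disjoint and their union is all of $[K]$. I would also note that none of them need be empty for the statement of a partition in this paper's usage, but if the convention requires non-emptiness, it follows from $P(i_l, i) > 0$ together with the formula proved next (a sum of positive reals over the empty set would be zero).

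Second I would prove the identity $P(i_l, i) = \sum_{r \in X^i_l} x_r$. Taking the $(i_l, i)$-entry of the matrix equation $P = \sum_{r=1}^{K} x_r A_r$ gives
\[
P(i_l, i) = \sum_{r=1}^{K} x_r A_r(i_l, i).
\]
Now $A_r(i_l, i) \in \{0, 1\}$, and it equals $1$ exactly when $r \in X^i_l$ and $0$ otherwise. Splitting the sum according to membership in $X^i_l$ yields $P(i_l, i) = \sum_{r \in X^i_l} x_r$, as claimed.

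I do not anticipate any genuine obstacle here; the only point requiring a little care is confirming that the unique nonzero row index of the $i$-th column of each $A_r$ lies among $i_1, \ldots, i_d$, which is exactly the condition $A_r \in B_n(P)$ built into the definition of a decomposition. The rest is a routine entrywise reading of the matrix identity.
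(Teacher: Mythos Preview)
Your proof is correct and follows essentially the same approach as the paper: both verify the partition claim and the sum identity directly from the definitions. The only minor difference is that you invoke $A_r \in B_n(P)$ to conclude the nonzero row index of $A_r(:,i)$ lies in $\{i_1,\dots,i_d\}$, whereas the paper re-derives this consequence via a short contradiction argument (and also explicitly checks non-emptiness of each $X^i_l$ using the sum identity, which you correctly note handles that case).
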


\begin{proof}
Note that for all $l \in [d]$,
\begin{equation}
P(i_{l}, i)
= \sum^{K}_{r = 1} x_{r} A_{r}(i_{l}, i)
= \sum_{r \in X^{i}_{l}} x_{r} A_{r}(i_{l}, i) 
= \sum_{r \in X^{i}_{l}} x_{r}.
\end{equation}

To prove that $\left\{ X^{i}_{1}, \ldots, X^{i}_{d} \right\}$ is a partition of $[K]$, first we note that for all $r \in [K]$, $A_{r}(:, i)$ has exactly one non-zero entry and this non-zero entry is equal to $1$. This implies that $X^{i}_{1}, \ldots, X^{i}_{d}$ are pairwise disjoint. 
Moreover, for all $l \in [d]$, $0 < P(i_{l}, i) = \sum_{r \in X^{i}_{l}} x_{r}$ and hence $X^{i}_{l}$ is non-empty.
Clearly, $X^{i}_{1} \cup \ldots \cup X^{i}_{d} \subseteq [K]$. To prove $[K] \subseteq X^{i}_{1} \cup \ldots \cup X^{i}_{d}$, assume for a contradiction that there exists $r_{0} \in [K]$ such that $r_{0} \notin X^{i}_{1} \cup \ldots \cup X^{i}_{d}$. Then, 
$A_{r_{0}}(i_{1}, i) = \ldots = A_{r_{0}}(i_{d}, i) = 0$, so $A_{r_{0}}(s, i) = 1$ for some $s \in [2^{n}] \setminus \{ i_{1}, \ldots, i_{d} \}$.
Hence,
$P(s, i) = \sum^{K}_{r = 1} x_{r} A_{r}(s, i) \geq x_{r_{0}} A_{r_{0}}(s, i) = x_{r_{0}} > 0$,
which contradicts with the fact that $P(i_{1}, i), \ldots, P(i_{d}, i)$ are all the positive entries in $P(:, i)$. Therefore, $X^{i}_{1} \cup \ldots \cup X^{i}_{d} = [K]$.
\end{proof}

The following technical proposition gives a sufficient condition for the statement that a cell of the above partition is not a singleton. It is useful to establish that certain decompositions of some TPMs have the optimal lengths (Section \ref{section:numerical_experiments} and Appendix \ref{appendix:GER_gives_17_sparsest_decomposition}).

\begin{proposition}\label{prop:al_is_larger_than_max}
Let $P$ be a $2^{n} \times 2^{n}$ TPM.
Let $x_{1}, x_{2}, \ldots, x_{K}$ (positive real numbers) and $A_{1}, A_{2}, \ldots, A_{K}$ (distinct BN matrices) define a decomposition of $P$.
Let $a_{1} \coloneqq P(i_{1}, i)$, $a_{2} \coloneqq P(i_{2}, i)$, \ldots, $a_{d_{1}} \coloneqq P(i_{d_{1}}, i)$ be the positive entries of $P(:, i)$, and
$b_{1} \coloneqq P(j_{1}, j)$, $b_{2} \coloneqq P(j_{2}, j)$, \ldots, $b_{d_{2}} \coloneqq P(j_{d_{2}}, j)$ be the positive entries of $P(:, j)$, where $i \neq j$.
For all $l \in [d_{1}]$, define $X^{i}_{l} \coloneqq \{ r \in [K] : A_{r}(i_{l}, i) = 1 \}$.
For all $l \in [d_{2}]$, define $X^{j}_{l} \coloneqq \{ r \in [K] : A_{r}(j_{l}, j) = 1 \}$.
If $a_{l^{*}} > \max (b_{1}, b_{2}, \ldots, b_{d_{2}})$ for some $l^{*} \in [d_{1}]$,
then $|X^{i}_{l^{*}}| \geq 2$.
\end{proposition}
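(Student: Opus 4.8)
The plan is to argue by contradiction using the two partitions of $[K]$ supplied by Lemma~\ref{lemma:convenient_notations}. Applying that lemma to column $i$ gives that $\{X^{i}_{1}, X^{i}_{2}, \ldots, X^{i}_{d_{1}}\}$ is a partition of $[K]$ with $a_{l} = \sum_{r \in X^{i}_{l}} x_{r}$ for each $l \in [d_{1}]$; applying it to column $j$ gives that $\{X^{j}_{1}, X^{j}_{2}, \ldots, X^{j}_{d_{2}}\}$ is a partition of $[K]$ with $b_{m} = \sum_{r \in X^{j}_{m}} x_{r}$ for each $m \in [d_{2}]$. In particular every cell of each partition is non-empty, so $|X^{i}_{l^{*}}| \geq 1$ automatically, and it remains to exclude the case $|X^{i}_{l^{*}}| = 1$.

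Suppose for contradiction that $|X^{i}_{l^{*}}| = 1$, say $X^{i}_{l^{*}} = \{ r_{0} \}$ for some $r_{0} \in [K]$. Then $a_{l^{*}} = \sum_{r \in X^{i}_{l^{*}}} x_{r} = x_{r_{0}}$. Since $\{X^{j}_{1}, \ldots, X^{j}_{d_{2}}\}$ is a partition of $[K]$, the index $r_{0}$ lies in exactly one cell, say $r_{0} \in X^{j}_{m^{*}}$. Because every $x_{r}$ is positive, $b_{m^{*}} = \sum_{r \in X^{j}_{m^{*}}} x_{r} \geq x_{r_{0}} = a_{l^{*}}$. This contradicts the hypothesis $a_{l^{*}} > \max(b_{1}, b_{2}, \ldots, b_{d_{2}}) \geq b_{m^{*}}$. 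Hence $|X^{i}_{l^{*}}| \geq 2$.

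There is no serious obstacle here: the argument is a short pigeonhole-type step once the two partitions are in place. The only point that requires a moment's thought is the right comparison to make, namely to locate the single index $r_{0}$ inside the \emph{other} column's partition and bound the corresponding $b$-value from below by $x_{r_{0}}$; the positivity of the weights $x_{r}$ is what turns that containment into the needed inequality. I would keep the write-up essentially at the length above, citing Lemma~\ref{lemma:convenient_notations} for both the partition property and the weight-sum identities.
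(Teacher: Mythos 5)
Your proof is correct and follows essentially the same route as the paper's: invoke Lemma \ref{lemma:convenient_notations} for the two partitions and weight-sum identities, suppose $X^{i}_{l^{*}}$ is a singleton $\{r_{0}\}$, locate $r_{0}$ in a cell of the column-$j$ partition, and use positivity of the weights to force $b_{m^{*}} \geq a_{l^{*}}$, contradicting the hypothesis. Nothing to add.
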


\begin{proof}
By Lemma \ref{lemma:convenient_notations}, $\left\{ X^{i}_{1}, X^{i}_{2}, \ldots, X^{i}_{d_{1}} \right\}$ and $\left\{ X^{j}_{1}, X^{j}_{2}, \ldots, X^{j}_{d_{2}} \right\}$ are partitions of $[K]$.
Moreover, for all $l \in [d_{1}]$, $a_{l} = \sum_{r \in X^{i}_{l}} x_{r}$.
Similarly, for all $l \in [d_{2}]$, $b_{l} = \sum_{r \in X^{j}_{l}} x_{r}$.
It suffices to show that $|X^{i}_{l^{*}}| \neq 1$.
Suppose for a contradiction that $X^{i}_{l^{*}} = \{ r^{*} \}$ for some $r^{*} \in [K]$.
Then, $a_{l^{*}} = \sum_{r \in X^{i}_{l^{*}}} x_{r} = x_{r^{*}}$.
Let $l^{'} \in [d_{2}]$ such that $r^{*} \in X^{j}_{l^{'}}$.
Then,
\begin{equation*}
\max (b_{1}, b_{2}, \ldots, b_{d_{2}}) \geq b_{l^{'}} = \sum_{r \in X^{j}_{l^{'}}} x_{r}
\geq x_{r^{*}} = a_{l^{*}}.
\end{equation*}
A contradiction is reached.
Therefore, $|X^{i}_{l^{*}}| \geq 2$.
\end{proof}

Below we present the first non-trivial lower bound on $K$.

\begin{proposition}\label{prop:trivial_lower_bound}
Let $P$ be a $2^{n} \times 2^{n}$ TPM.
Let $P(i_{1}, i) \eqqcolon a_{1}$, $P(i_{2}, i) \eqqcolon a_{2}$, \ldots, $P(i_{d_{1}}, i) \eqqcolon a_{d_{1}}$ be the positive entries of $P(:, i)$, and let $P(j_{1}, j) \eqqcolon b_{1}$, $P(j_{2}, j) \eqqcolon b_{2}$, \ldots, $P(j_{d_{2}}, j) \eqqcolon b_{d_{2}}$ be the positive entries of $P(:, j)$, where $i\ne j$. Without loss of generality, we may assume that $d_{1} \geq d_{2}$.
Suppose that there does not exist a partition $\left\{ Y_{1}, Y_{2}, \ldots, Y_{d_{2}} \right\}$ of $[d_{1}]$ such that for all $l \in [d_{2}]$, $b_{l} = \sum_{s \in Y_{l}} a_{s}$.
Then, the length of any decomposition of $P$ is greater than $d_{1}$.
\end{proposition}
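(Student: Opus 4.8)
The plan is to argue by contradiction, using the easy bound of Proposition~\ref{prop:max_P(:,j)_lower_bound} to pin down $K$ exactly and then to exploit the partition machinery of Lemma~\ref{lemma:convenient_notations}. So suppose, for a contradiction, that some decomposition $\sum_{r=1}^{K} x_{r} A_{r}$ of $P$ has length $K \le d_{1}$. Since $P(:,i)$ has $d_{1}$ positive entries, Proposition~\ref{prop:max_P(:,j)_lower_bound} gives $K \ge \mathcal{N}^{+}(P(:,i)) = d_{1}$, so in fact $K = d_{1}$.

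Next I would apply Lemma~\ref{lemma:convenient_notations} to column $i$. With $X^{i}_{l} \coloneqq \{ r \in [K] : A_{r}(i_{l}, i) = 1 \}$ for $l \in [d_{1}]$, the lemma says that $\{ X^{i}_{1}, \ldots, X^{i}_{d_{1}} \}$ is a partition of $[K] = [d_{1}]$ into $d_{1}$ non-empty blocks; hence every block is a singleton, say $X^{i}_{l} = \{ \sigma(l) \}$, and $\sigma : [d_{1}] \to [d_{1}]$ is a bijection. The lemma also yields $a_{l} = \sum_{r \in X^{i}_{l}} x_{r} = x_{\sigma(l)}$ for each $l \in [d_{1}]$.

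Then I would apply Lemma~\ref{lemma:convenient_notations} to column $j$: with $X^{j}_{l} \coloneqq \{ r \in [K] : A_{r}(j_{l}, j) = 1 \}$ for $l \in [d_{2}]$, the collection $\{ X^{j}_{1}, \ldots, X^{j}_{d_{2}} \}$ is a partition of $[K] = [d_{1}]$ and $b_{l} = \sum_{r \in X^{j}_{l}} x_{r}$. Define $Y_{l} \coloneqq \sigma^{-1}(X^{j}_{l}) = \{ s \in [d_{1}] : \sigma(s) \in X^{j}_{l} \}$ for $l \in [d_{2}]$. Being the images under the bijection $\sigma^{-1}$ of a partition of $[d_{1}]$ into non-empty blocks, the sets $Y_{1}, \ldots, Y_{d_{2}}$ form a partition of $[d_{1}]$, and
\begin{equation*}
b_{l} = \sum_{r \in X^{j}_{l}} x_{r} = \sum_{s \in Y_{l}} x_{\sigma(s)} = \sum_{s \in Y_{l}} a_{s} \qquad (l \in [d_{2}]).
\end{equation*}
This contradicts the hypothesis that no such partition of $[d_{1}]$ exists, so no decomposition of $P$ can have length at most $d_{1}$, i.e.\ every decomposition has length greater than $d_{1}$.

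The argument is essentially bookkeeping; the one step doing the real work is the observation that the extremal case $K = d_{1}$ forces the column-$i$ partition of $[K]$ to be the finest possible (all singletons), which turns the weights $x_{r}$ into a mere relabelling of $a_{1}, \ldots, a_{d_{1}}$ and thereby converts the additive relations coming from column $j$ into exactly the forbidden partition. The only mild care needed is to confirm that $\sigma^{-1}$ carries $\{ X^{j}_{l} \}$ to a genuine partition of $[d_{1}]$ — pairwise disjoint, non-empty, covering $[d_{1}]$ — which is immediate since $\sigma$ is a bijection and the $X^{j}_{l}$ are non-empty by Lemma~\ref{lemma:convenient_notations}.
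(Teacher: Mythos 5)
Your proof is correct and follows essentially the same route as the paper's: use Proposition~\ref{prop:max_P(:,j)_lower_bound} to force $K = d_{1}$, note that the column-$i$ partition from Lemma~\ref{lemma:convenient_notations} then consists of singletons so the weights $x_{r}$ are a relabelling of $a_{1},\ldots,a_{d_{1}}$, and read off the forbidden partition from column $j$. The only cosmetic difference is that you carry the relabelling explicitly via a bijection $\sigma$ and its inverse, where the paper simply re-labels the indices.
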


\begin{proof}
Suppose that $x_{1}, x_{2}, \ldots, x_{K} > 0$ and distinct BN matrices $A_{1}, A_{2}, \ldots, A_{K}$ define a decomposition of $P$, i.e., $\sum^K_{r=1}\ x_r=1$ and $P=\sum^{K}_{r = 1} x_{r} A_{r}$. For any $l \in [d_{1}]$, set $X^{i}_{l} \coloneqq \{ r \in [K] : A_{r}(i_{l}, i) = 1 \}$, and similarly for any $l \in [d_{2}]$, set $X^{j}_{l} \coloneqq \{ r \in [K] : A_{r}(j_{l}, j) = 1 \}$. By Lemma \ref{lemma:convenient_notations}, $\left\{X^{i}_{1}, \ldots, X^{i}_{d_{1}}\right\}$ and $\left\{X^{j}_{1}, \ldots, X^{j}_{d_{2}}\right\}$ are partitions of $[K]$. Moreover,
\[ a_{l} = P(i_{l}, i) = \sum_{r \in X^{i}_{l}} x_{r},\quad l \in [d_1] \]
and
\[ b_{l} = P(j_{l}, j) = \sum_{r \in X^{j}_{l}} x_{r},\quad l \in [d_2]. \]
By Proposition \ref{prop:max_P(:,j)_lower_bound},
\[ d_2 \le d_1 \le K. \]
Assume for a contradiction that $d_{1} = K$ so that $X^{i}_{1}, \ldots, X^{i}_{d_{1}}$ are all singleton sets. By re-labeling the indices, we may assume that $X^{i}_{1} = \{ 1 \}$, $X^{i}_{2} = \{ 2 \}$, $\ldots$, $X^{i}_{d_{1}} = \{ d_{1} \}$ whereby $a_{l}=x_{l}$ for all $l\in[d_1]=[K]$. But then we have
\[ b_{l} = \sum_{r \in X^{j}_{l}} a_{r}, \quad l\in[d_2], \]
which contradicts our assumption. Hence, $d_{1} < K$.
\end{proof}

\begin{corollary}\label{coroll:permutation}
Let $P$ be a $2^{n} \times 2^{n}$ TPM.
Suppose that two distinct columns $i$ and $j$ of $P$ have the same number of positive entries, namely, $P(i_{1}, i) \eqqcolon a_{1}$, $P(i_{2}, i) \eqqcolon a_{2}$, \ldots, $P(i_{d}, i) \eqqcolon a_{d}$ in $P(:, i)$ and $P(j_{1}, j) \eqqcolon b_{1}$, $P(j_{2}, j) \eqqcolon b_{2}$, \ldots, $P(j_{d}, j) \eqqcolon b_{d}$ in $P(:, j)$. 
Suppose further that $b_{1}, b_{2}, \ldots, b_{d}$ cannot be obtained by permuting $a_{1}, a_{2}, \ldots, a_{d}$ (i.e., there does not exist a bijective function $f : [d] \to [d]$ such that for all $k \in [d]$, $b_{k} = a_{f(k)}$), or equivalently, $P(:, j)$ is not a permutation of $P(:, i)$.
Then, the length of any decomposition of $P$ is greater than $d$.
\end{corollary}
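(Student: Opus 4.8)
The plan is to obtain this statement as a direct consequence of Proposition~\ref{prop:trivial_lower_bound}, applied to the two columns $i$ and $j$ with $d_1 = d_2 = d$. Since $d_1 = d_2$, the ``without loss of generality $d_1 \ge d_2$'' clause in that proposition is trivially met, so the only thing I need to do is verify its hypothesis: that there is no partition $\{Y_1, Y_2, \dots, Y_d\}$ of $[d]$ with $b_l = \sum_{s \in Y_l} a_s$ for every $l \in [d]$. Then the conclusion of Proposition~\ref{prop:trivial_lower_bound} is exactly that the length of any decomposition of $P$ exceeds $d$, which is what the corollary asserts.

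The key step is the combinatorial observation that a partition of the $d$-element set $[d]$ into $d$ non-empty, pairwise disjoint blocks must consist entirely of singletons (there is no room for any block of size $\ge 2$). So I would argue by contradiction: if such a partition $\{Y_1, \dots, Y_d\}$ existed, write $Y_l = \{\sigma(l)\}$; the disjointness of the blocks forces $\sigma$ to be injective on $[d]$, and the fact that the blocks cover $[d]$ forces $\sigma$ to be surjective, so $\sigma : [d] \to [d]$ is a bijection. Under this, the condition $b_l = \sum_{s \in Y_l} a_s$ collapses to $b_l = a_{\sigma(l)}$ for all $l \in [d]$, i.e. $b_1, \dots, b_d$ is a permutation of $a_1, \dots, a_d$ (take $f = \sigma$). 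This contradicts the standing hypothesis that $P(:,j)$ is not a permutation of $P(:,i)$. Hence no such partition exists, and Proposition~\ref{prop:trivial_lower_bound} delivers the bound.

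I do not expect any real obstacle here: the corollary is essentially a specialization of Proposition~\ref{prop:trivial_lower_bound} to the case of equal column supports, and the whole content is the elementary remark that partitioning a $d$-set into $d$ cells leaves only singletons, turning the ``sum-over-blocks'' condition into an honest permutation. The one place worth a sentence of care is making the passage between ``no block partition realizing the $b_l$'s'' and ``not a permutation'' fully explicit, since this is exactly where the equality $d_1 = d_2$ is used.
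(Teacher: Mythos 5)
Your proposal is correct and follows exactly the paper's route: verify the hypothesis of Proposition \ref{prop:trivial_lower_bound} by noting that any partition of $[d]$ into $d$ cells consists of singletons, which would yield a bijection $f$ making $b_1,\ldots,b_d$ a permutation of $a_1,\ldots,a_d$, contradicting the assumption. Nothing is missing.
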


\begin{proof}
Suppose that there were a partition $\left\{ Y_{1}, Y_{2}, \ldots, Y_{d} \right\}$ of $[d]$ such that for all $l \in [d]$, $b_{l} = \sum_{s \in Y_{l}} a_{s}$. Then $Y_{1}, Y_{2}, \ldots, Y_{d}$ must be singleton sets and let's write $Y_{1} = \{ f(1) \}$, $Y_{2} = \{ f(2) \}$, \ldots, $Y_{d} = \{ f(d) \}$. Clearly, $f: [d] \to [d]$ is a bijection and for all $l \in [d]$, $b_{l} = a_{f(l)}$, contradicting the assumption that $b_{1}, b_{2}, \ldots, b_{d}$ cannot be obtained by permuting $a_{1}, a_{2}, \ldots, a_{d}$. 
By Proposition \ref{prop:trivial_lower_bound}, the length of any decomposition of $P$ is greater than $d$.
\end{proof}

Finally, we will state and prove the strongest and most non-trivial lower bound theorem (Theorem \ref{thm:non-trivial lower bound}) of this section. To prove Theorem \ref{thm:non-trivial lower bound}, we need the following intermediate result:

% Note to self: remember to change the notation m to d.
\begin{lemma}\label{lemma:common_singleton}
Let $d \geq 2$ and $K$ be positive integers such that $d \leq K < \frac{4d}{3}$.
Suppose that $\left\{ X_{1}, X_{2}, \ldots, X_{d} \right\}$ and $\left\{ Y_{1}, Y_{2}, \ldots, Y_{d} \right\}$ are two partitions of $[K]$.
Then, there exist $s \in [K]$, $i, j \in [d]$ such that $X_{i} = Y_{j} = \{ s \}$.
\end{lemma}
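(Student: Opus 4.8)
The plan is to argue by a counting/pigeonhole contradiction. Suppose, towards a contradiction, that no cell $X_i$ equals a cell $Y_j$ as a singleton; more precisely, suppose there is no $s\in[K]$ with $X_i=Y_j=\{s\}$ for some $i,j$. Let $a$ be the number of singleton cells among $X_1,\dots,X_d$ and let $b$ be the number of singleton cells among $Y_1,\dots,Y_d$. Since $\{X_1,\dots,X_d\}$ partitions $[K]$ into $d$ nonempty cells, the $d-a$ non-singleton $X$-cells have size at least $2$, so $K \ge a + 2(d-a) = 2d - a$, giving $a \ge 2d - K$; similarly $b \ge 2d - K$. Because $K < \tfrac{4d}{3}$, we get $a, b > 2d - \tfrac{4d}{3} = \tfrac{2d}{3}$, hence $a + b > \tfrac{4d}{3} > K$.

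Now I would look at the singletons themselves. Let $S_X = \{\, s\in[K] : \{s\} \text{ is an } X\text{-cell}\,\}$ and $S_Y = \{\, s\in[K] : \{s\} \text{ is a } Y\text{-cell}\,\}$, so $|S_X| = a$ and $|S_Y| = b$. Since $|S_X| + |S_Y| = a + b > K$, the sets $S_X$ and $S_Y$ intersect: there is some $s \in S_X \cap S_Y$. For that $s$, there is an $i$ with $X_i = \{s\}$ and a $j$ with $Y_j = \{s\}$ (these are the unique cells of the respective partitions containing $s$). Thus $X_i = Y_j = \{s\}$, contradicting our assumption. Therefore such $s, i, j$ must exist.

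The only slightly delicate points are bookkeeping ones: one must make sure each element of $[K]$ lies in exactly one cell of each partition (so the map "singleton cell $\mapsto$ its element" is injective, justifying $|S_X| = a$ and the existence of the unique $i$ and $j$), and one must verify the size estimate $K \ge 2d - a$ carefully using that all $d$ cells are nonempty (which follows from the definition of a partition). The hypothesis $d \ge 2$ is not strictly needed for this argument but is harmless; the real work is entirely in the inequality $a + b > K$, which is where $K < \tfrac{4d}{3}$ is used. I do not anticipate a genuine obstacle here — the argument is a clean double application of pigeonhole — but if one wanted the statement to be tight one would note that at $K = \lceil \tfrac{4d}{3}\rceil$ the bound $a + b > K$ can just fail, which explains the precise constant $\tfrac{4}{3}$ in the hypothesis.
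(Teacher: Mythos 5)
Your proof is correct and follows essentially the same route as the paper's: count the singleton cells of each partition, use $K < \frac{4d}{3}$ to show each count exceeds $\frac{2d}{3}$, and conclude by pigeonhole that some element of $[K]$ is a singleton cell of both partitions. The paper's version only adds bookkeeping (a separate $d=K$ case and a sorting of the cells by size) that your streamlined counting with $a \geq 2d-K$ and $b \geq 2d-K$ renders unnecessary.
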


\begin{proof}
When $d = K$, the lemma trivially holds because $X_{1}, X_{2}, \ldots, X_{d}$ and $Y_{1}, Y_{2}, \ldots, Y_{d}$ are all singleton subsets of $[K]$. Therefore, we assume that $d < K$.
Without loss of generality, we can assume that $|X_{1}| \leq |X_{2}| \leq \ldots \leq |X_{d}|$ and $|Y_{1}| \leq |Y_{2}| \leq \ldots \leq |Y_{d}|$.
Note that $|X_{1}| = 1$; otherwise, each $|X_{i}| \geq 2$ and hence 
$K = \sum^{d}_{i = 1} |X_{i}| \geq 2d > \frac{4d}{3} > K$. Similarly, $|Y_{1}| = 1$.
Moreover, there exists $i_{1} \in [d] \setminus \{ 1 \}$ such that $|X_{i_{1}}| \geq 2$; otherwise, $K = \sum^{d}_{i = 1} |X_{i}| = d < K$. Similarly, there exists $i_{2} \in [d] \setminus \{ 1 \}$ such that $|Y_{i_{2}}| \geq 2$.
Hence, there exist $j_{1}, j_{2} \in [d - 1]$ such that $|X_{j_{1}}| = 1$, $|X_{j_{1}+1}| \geq 2$, $|Y_{j_{2}}| = 1$ and $|Y_{j_{2}+1}| \geq 2$.
Then, $\frac{4d}{3} > K = \sum^{j_{1}}_{i = 1} |X_{i}| + \sum^{d}_{i = j_{1} + 1} |X_{i}| \geq j_{1} + 2(d - j_{1}) = 2d - j_{1}$. So $j_{1} > \frac{2d}{3}$. Similarly, $j_{2} > \frac{2d}{3}$.
Assume for a contradiction that $\bigcup^{j_{1}}_{j = 1} X_{j}$ and $\bigcup^{j_{2}}_{j = 1} Y_{j}$ are disjoint. Then, 
$
\left| \left( \bigcup^{j_{1}}_{j = 1} X_{j} \right) \cup \left( \bigcup^{j_{2}}_{j = 1} Y_{j} \right) \right| 
= \sum^{j_{1}}_{j = 1} |X_{j}| + \sum^{j_{2}}_{j = 1} |Y_{j}| 
= j_{1} + j_{2} > \frac{4d}{3} > K
$.
But because each $X_{j}, Y_{j} \subseteq [K]$, 
$\left( \bigcup^{j_{1}}_{j = 1} X_{j} \right) \cup \left( \bigcup^{j_{2}}_{j = 1} Y_{j} \right) \subseteq [K]$ and hence 
$\left| \left( \bigcup^{j_{1}}_{j = 1} X_{j} \right) \cup \left( \bigcup^{j_{2}}_{j = 1} Y_{j} \right) \right| \leq K$.
A contradiction is reached.
Hence, 
$\bigcup^{j_{1}}_{j = 1} X_{j}$ and $\bigcup^{j_{2}}_{j = 1} Y_{j}$ are not disjoint.
Therefore, there exist $s \in [K]$, $i \in [j_{1}]$ and $j \in [j_{2}]$ such that $X_{i} = Y_{j} = \{ s \}$.
\end{proof}

We are now ready to prove Theorem \ref{thm:non-trivial lower bound}.

\begin{theorem}
\label{thm:non-trivial lower bound}
Let $P$ be a $2^{n} \times 2^{n}$ TPM.
Suppose that there exist integers $1 \leq i < j \leq 2^{n}$ satisfying that $P(:, i)$ and $P(:, j)$ each have exactly $d$ positive entries. 
Let $C_{i}$ and $C_{j}$ be the sets of positive entries of $P(:, i)$ and $P(:, j)$ respectively. 
Suppose further that $C_{i} \cap C_{j} = \varnothing$.
Then, the length of any decomposition of $P$ is at least $\frac{4d}{3}$.
\end{theorem}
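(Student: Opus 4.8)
The plan is to argue by contradiction, reducing everything to the combinatorial Lemma \ref{lemma:common_singleton}. First I would dispose of the degenerate case $d = 1$: if a column of a TPM has exactly one positive entry, that entry must equal $1$ (the column sums to $1$), so $C_i = C_j = \{1\}$ and $C_i \cap C_j \neq \varnothing$, contradicting the hypothesis. Hence we may assume $d \geq 2$.

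Now suppose, for a contradiction, that $P$ admits a decomposition $\sum_{r=1}^{K} x_r A_r$ with $K < \tfrac{4d}{3}$. By Proposition \ref{prop:max_P(:,j)_lower_bound} we have $K \geq \max(\mathcal{N}^{+}(P(:,i)), \mathcal{N}^{+}(P(:,j))) = d$, so $d \leq K < \tfrac{4d}{3}$, exactly the hypothesis of Lemma \ref{lemma:common_singleton}. Next I would apply Lemma \ref{lemma:convenient_notations} twice: once to column $i$, writing $P(i_1,i), \ldots, P(i_d,i)$ for the positive entries of $P(:,i)$ and setting $X^{i}_{l} \coloneqq \{ r \in [K] : A_r(i_l, i) = 1 \}$, and once to column $j$ analogously, to obtain two partitions $\{X^{i}_{1}, \ldots, X^{i}_{d}\}$ and $\{X^{j}_{1}, \ldots, X^{j}_{d}\}$ of $[K]$ together with the identities $P(i_l, i) = \sum_{r \in X^{i}_{l}} x_r$ and $P(j_l, j) = \sum_{r \in X^{j}_{l}} x_r$.

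Applying Lemma \ref{lemma:common_singleton} to these two partitions, there exist $s \in [K]$ and indices $a, b \in [d]$ with $X^{i}_{a} = X^{j}_{b} = \{s\}$. Then $P(i_a, i) = \sum_{r \in X^{i}_{a}} x_r = x_s$ and likewise $P(j_b, j) = x_s$, so the value $x_s$ is simultaneously a positive entry of $P(:,i)$ and of $P(:,j)$, i.e.\ $x_s \in C_i \cap C_j$. This contradicts $C_i \cap C_j = \varnothing$. Therefore every decomposition of $P$ has length $K \geq \tfrac{4d}{3}$, as claimed.

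I do not expect a real obstacle here, since the entire combinatorial heart of the argument — that two partitions of a set of size strictly less than $\tfrac{4d}{3}$ into $d$ cells must share a common singleton cell — is already isolated in Lemma \ref{lemma:common_singleton}. The only points requiring a little care are the vacuous $d=1$ case and the clean passage from "$X^{i}_{a} = X^{j}_{b}$ is a singleton $\{s\}$" to "$x_s$ lies in both $C_i$ and $C_j$," both of which are immediate from Lemma \ref{lemma:convenient_notations}.
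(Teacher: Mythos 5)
Your proposal is correct and follows essentially the same route as the paper's own proof: rule out $d=1$, invoke Proposition \ref{prop:max_P(:,j)_lower_bound} to get $d \leq K$, build the two partitions via Lemma \ref{lemma:convenient_notations}, and apply Lemma \ref{lemma:common_singleton} to extract a common singleton cell, yielding a common positive entry $x_s \in C_i \cap C_j$ and the desired contradiction. No gaps; the argument matches the paper step for step.
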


\begin{proof}
Note that $d \neq 1$; otherwise, $C_{i} = C_{j} = \{ 1 \}$. Hence, $d \geq 2$.
Let $\sum^{K}_{r = 1} x_{r} A_{r}$ be any decomposition of $P$.
By Proposition \ref{prop:max_P(:,j)_lower_bound}, $K \geq d$.
Assume for a contradiction that $K < \frac{4d}{3}$.
Consider the $d$ positive entries $P(i_{1}, i)$, $P(i_{2}, i)$, \ldots, $P(i_{d}, i)$ of $P(:, i)$, and the $d$ positive entries $P(j_{1}, j)$, $P(j_{2}, j)$, \ldots, $P(j_{d}, j)$ of $P(:, j)$. For any $l \in [d]$, set
$X^{i}_{l} \coloneqq \{ r \in [K] : A_{r}(i_{l}, i) = 1 \}$
and
$X^{j}_{l} \coloneqq \{ r \in [K] : A_{r}(j_{l}, j) = 1 \}$.
By Lemma \ref{lemma:convenient_notations}, $\left\{ X^{i}_{1}, \ldots, X^{i}_{d} \right\}$ is a partition of $[K]$, and so is $\left\{ X^{j}_{1}, \ldots, X^{j}_{d} \right\}$.
By Lemma \ref{lemma:common_singleton}, there exist $s \in [K]$, $l_{1}, l_{2} \in [d]$ such that $X^{i}_{l_{1}} = X^{j}_{l_{2}} = \{ s \}$. Hence, 
$P(i_{l_{1}}, i) = \sum_{r \in X^{i}_{l_{1}}} x_{r} = x_{s}$ and 
$P(j_{l_{2}}, j) = \sum_{r \in X^{j}_{l_{2}}} x_{r} = x_{s}$.
This implies that $x_{s} \in C_{i} \cap C_{j}$.
But $C_{i}$ and $C_{j}$ are disjoint. A contradiction is reached. Therefore, $K \geq \frac{4d}{3}$.
\end{proof}

\section{Numerical Experiments}\label{section:numerical_experiments}

In this section, we present the numerical performance of GER for solving the sparse PBN construction problem. To illustrate the effectiveness of our method, we compare GER with SER 1, SER 2 and two versions of the MOMP algorithm.
The two versions are called MOMP-interior point convex (MOMP-IPC) and MOMP-active set (MOMP-AS). In the MOMP-IPC algorithm, the quadratic programming step in MOMP (step 2.3) is performed using the interior point convex algorithm (IPC) in MATLAB \cite{Altman_interior_point, Nocedal_numerical_optimization, Vanderbei_interior_point}, whereas in the MOMP-AS algorithm, step 2.3 is performed using the active set algorithm (AS) in MATLAB \cite{Nocedal_numerical_optimization, anti_cycling}.
The numerical experiments for GER, SER 1 and SER 2 were carried out using JupyterLab, whereas the numerical experiments for MOMP-IPC and MOMP-AS were carried out using MATLAB R2022b. All numerical experiments were carried out on a personal laptop with 16 GB of RAM and an AMD Ryzen 7 5800U processor with Radeon Graphics at 1.90 GHz.
The codes for our numerical experiments are available at https://github.com/christopherfok2015/PBN-construction-project-two-GER-papers.git.

In Section \ref{section:18_TPMs_tested}, we describe the 18 TPMs on which we executed GER and the other four sparse PBN construction algorithms. In Section \ref{section:parameters_setting}, we detail how the parameters of GER, MOMP-IPC and MOMP-AS were set in our numerical experiments. In Section \ref{section:expt_results}, we present and discuss our experimental results.

\subsection{TPMs Tested in our Numerical Experiments}\label{section:18_TPMs_tested}

We define five TPMs $P_{1}, P_{2}, P_{3}, P_{4}, P_{5}$ as follows:

\begin{equation*}
P_{1} \coloneqq 
\begin{pmatrix}
0.1 	& 0.5	& 0.6	& 0 \\ 
0.4 	& 0	& 0.2	& 0 \\ 
0.5 	& 0.2	& 0	& 1 \\ 
0	& 0.3	& 0.2	& 0 
\end{pmatrix},
\end{equation*}

\begin{equation*}
P_{2} \coloneqq \frac{1}{110}
\begin{pmatrix}
12	& 30	& 22	& 10	& 10	& 15	& 54	& 34	\\
10	& 24	& 19	& 54	& 30	&  0	&  0	&  0	\\
54	& 15	&  0	& 12	& 12	&  0	&  0	& 30	\\
0	&  0	& 24	& 15	& 24	& 19	& 10	&  0	\\
0	&  0	&  0	&  0	& 34	& 10	& 12	&  0	\\
19	&  0	& 15	&  0	&  0	& 12	&  0	& 22	\\
15	& 19	&  0	& 19	&  0	& 54	&  0	& 24	\\
0	& 22	& 30	&  0	&  0	&  0	& 34	&  0
\end{pmatrix},
\end{equation*}

\begin{equation*}
P_{3} \coloneqq \frac{1}{110}
\begin{pmatrix}
0	&  0	&  0	& 49	&  0	& 43	&  0	& 49	\\
0	& 30	& 12	&  0	& 30	&  0	& 25	&  0	\\
25	&  0	&  0	& 15	&  0	& 22	& 12	& 15	\\
0	&  0	& 10	& 24	& 19	&  0	& 30	&  0	\\
30	& 43	& 15	&  0	& 24	& 15	&  0	&  0	\\
43	&  0	& 49	&  0	&  0	&  0	& 19	& 24	\\
0	& 22	&  0	& 22	& 12	& 30	&  0	& 12	\\
12	& 15	& 24	&  0	& 25	&  0	& 24	& 10
\end{pmatrix},
\end{equation*}

\begin{equation*}
P_{4} \coloneqq \frac{1}{265}
\begin{psmallmatrix}
26	&   0	&   0	&   0	&  59	&   0	&  49	&   0	& 29	&   0	&   0	&   0	&   0	&   9	&   0	&  46	\\
0	&   0	&  39	&  17	&   0	&   0	&   0	&  49	& 49	&   0	&   0	&   0	&   0	&  98	&  54	&  17	\\
0	&   0	&  26	&  49	&   0	&   0	&   0	&   9	& 0	&   0	&   0	&  59	&   9	&   0	&   0	&   0	\\
0	&   0	&   0	&   0	&   0	&   9	&  54	&   0	& 0	&   0	&   0	&   0	&  39	&   0	&  49	&  26	\\
49	&   0	&   9	&   0	&   0	&  29	&   0	&   0	& 0	&   0	& 108	&  63	&   0	&  17	&  59	&   0	\\
0	&   0	&   0	&  26	&   0	&   0	&   0	&  17	& 39	&  29	&   0	&   0	&   0	&   0	&   0	&  88	\\
17	&  63	&  88	&   0	&   0	&   0	&   0	&  98	& 0	&   9	&  37	&   0	&  88	&   0	&   0	&  49	\\
0	&   0	&   0	&   0	&  49	&   0	&   0	&   0	& 17	&  37	&   9	&  29	&  63	&  49	&  39	&   0	\\
98	&   0	&   0	&   0	&  46	& 108	&  26	&   0	& 0	&   0	&   0	&   9	&   0	&  63	&   9	&   0	\\
0	&  29	&   0	&   0	&  17	&   0	&  29	&   0	& 46	&   0	&  26	&   0	&   0	&   0	&  26	&   0	\\
29	& 108	&   0	&  88	&   0	&  39	&   0	&   0	& 0	&   0	&  39	&   0	&   0	&  29	&   0	&   0	\\
0	&  39	&   0	&  39	&   0	&   0	&   0	&   0	& 26	&  26	&  29	&  39	&   0	&   0	&   0	&   0	\\
0	&   0	&   0	&   0	&  29	&   0	&  98	&  29	& 0	&  39	&   0	&   0	&   0	&   0	&   0	&   0	\\
46	&   0	&  49	&  37	&  39	&  26	&   9	&   0	& 0	& 108	&  17	&   0	&  49	&   0	&  29	&   0	\\
0	&   9	&  54	&   9	&   0	&  54	&   0	&  37	& 0	&  17	&   0	&  49	&  17	&   0	&   0	&  39	\\
0	&  17	&   0	&   0	&  26	&   0	&   0	&  26	& 59	&   0	&   0	&  17	&   0	&   0	&   0	&   0
\end{psmallmatrix},
\end{equation*}

\begin{equation*}
P_{5} \coloneqq \frac{1}{10}
\begin{psmallmatrix}
0	& 6	& 8	& 1	& 7	& 5	& 8	& 5	& 6	& 5	& 3	& 4	& 10	& 2	& 2	& 0	\\
8	& 1	& 1	& 0	& 0	& 0	& 0	& 0	& 1	& 0	& 0	& 0	&  0	& 0	& 0	& 3	\\
2	& 1	& 0	& 0	& 0	& 0	& 0	& 0	& 1	& 0	& 0	& 0	&  0	& 0	& 0	& 0	\\
0	& 1	& 0	& 0	& 0	& 0	& 0	& 0	& 0	& 0	& 0	& 0	&  0	& 0	& 0	& 0	\\
0	& 1	& 1	& 8	& 2	& 4	& 2	& 5	& 0	& 3	& 2	& 3	&  0	& 3	& 4	& 1	\\
0	& 0	& 0	& 1	& 0	& 1	& 0	& 0	& 0	& 0	& 0	& 0	&  0	& 0	& 0	& 0	\\
0	& 0	& 0	& 0	& 0	& 0	& 0	& 0	& 0	& 0	& 0	& 0	&  0	& 0	& 0	& 0	\\
0	& 0	& 0	& 0	& 0	& 0	& 0	& 0	& 0	& 0	& 0	& 0	&  0	& 0	& 0	& 0	\\
0	& 0	& 0	& 0	& 1	& 0	& 0	& 0	& 1	& 0	& 0	& 3	&  0	& 0	& 2	& 1	\\
0	& 0	& 0	& 0	& 0	& 0	& 0	& 0	& 1	& 2	& 2	& 0	&  0	& 0	& 0	& 0	\\
0	& 0	& 0	& 0	& 0	& 0	& 0	& 0	& 0	& 0	& 0	& 0	&  0	& 0	& 0	& 0	\\
0	& 0	& 0	& 0	& 0	& 0	& 0	& 0	& 0	& 0	& 0	& 0	&  0	& 0	& 0	& 0	\\
0	& 0	& 0	& 0	& 0	& 0	& 0	& 0	& 0	& 0	& 3	& 0	&  0	& 4	& 2	& 5	\\
0	& 0	& 0	& 0	& 0	& 0	& 0	& 0	& 0	& 0	& 0	& 0	&  0	& 1	& 0	& 0	\\
0	& 0	& 0	& 0	& 0	& 0	& 0	& 0	& 0	& 0	& 0	& 0	&  0	& 0	& 0	& 0	\\
0	& 0	& 0	& 0	& 0	& 0	& 0	& 0	& 0	& 0	& 0	& 0	&  0	& 0	& 0	& 0	
\end{psmallmatrix}.
\end{equation*}

We remark that $P_5$ is obtained from a preliminary study of a nursing home in Hong Kong \cite{Leung1,Leung2}. 
In the nursing home, members of staff are concerned about their overtime work and consecutive shifts, both of which are important factors for the service quality and their turnover rate \cite{Leung2}.

Besides the synthetic TPMs $P_{1}, P_{2}, P_{3}, P_{4}, P_{5}$, we also consider TPMs originating in existing literature. We will introduce these matrices $P^{\textrm{A}}_{1}$, $P^{\textrm{A}}_{2}$, $P^{\textrm{A}}_{3}$, $P^{\textrm{B}}_{1}$, $P^{\textrm{B}}_{3}$, $P^{\textrm{B}}_{4}(d)$, $P^{\textrm{B}}_{6}(d)$ below.

$P^{\textrm{A}}_{1}$ is a $4 \times 4$ TPM proposed in \cite{chen_jiang_ching_PBN_construction} and
\begin{equation*}
P^{\textrm{A}}_{1} =
\begin{pmatrix}
0.1	& 0.3	& 0.2	& 0.1 \\
0.2	& 0.3	& 0.2	& 0	\\
0	& 0	& 0.6	& 0.4	\\
0.7	& 0.4	& 0	& 0.5
\end{pmatrix}.
\end{equation*}
$P^{\textrm{A}}_{2}$ was proposed in \cite{sparse_solution_of_nonnegative} and 
\[
P^{\textrm{A}}_{2} =
\begin{pmatrix}
P^{\textrm{A}}_{1} & \mathbf{O}_{4 \times 4} \\
\mathbf{O}_{4 \times 4} & P^{\textrm{A}}_{1}
\end{pmatrix}.
\]
$P^{\textrm{A}}_{3}$ is an $8 \times 8$ TPM originating in \cite{credit_defaults}, and was computed from credit default data. It is given by
\begin{equation*}
P^{\textrm{A}}_{3} = 
\begin{pmatrix}
0.57	& 0		& 0.1	& 0		& 0		& 0.04	& 0		& 0		\\
0.14	& 0.31	& 0	& 0.5		& 0.12	& 0.13	& 0.33	& 0.06	\\
0	& 0.08	& 0.4	& 0.25	& 0.25	& 0		& 0.67	& 0		\\
0	& 0.15	& 0	& 0		& 0		& 0.08	& 0		& 0		\\
0	& 0.15	& 0.3	& 0		& 0		& 0.13	& 0		& 0		\\
0.29	& 0.31	& 0.2	& 0		& 0.25	& 0.29	& 0		& 0.39	\\
0	& 0		& 0	& 0		& 0.38	& 0		& 0		& 0		\\
0	& 0		& 0	& 0.25	& 0		& 0.33	& 0		& 0.55
\end{pmatrix}.
\end{equation*}
$P^{\textrm{B}}_{1}$ is a $4 \times 4$ TPM proposed in \cite{max_entropy_rate} and
\begin{equation*}
P^{\textrm{B}}_{1} = 
\begin{pmatrix}
0.1 	& 0.3	& 0.5	& 0.6	\\
0	& 0.7	& 0	& 0	\\ 
0	& 0	& 0.5	& 0	\\ 
0.9	& 0	& 0	& 0.4
\end{pmatrix}.
\end{equation*}
$P^{\textrm{B}}_{3}$ is an $8 \times 8$ TPM given in \cite{shmulevich_PBN} and
\begin{equation*}
P^{\textrm{B}}_{3} = 
\begin{pmatrix}
1	& 0	& 0	& 0.2	& 0	& 0	& 0	& 0	\\
0	& 0	& 0	& 0.2	& 0	& 0	& 0	& 0 	\\
0	& 0	& 0	& 0	& 1	& 0	& 0	& 0	\\
0	& 0	& 0	& 0	& 0	& 0	& 0	& 0	\\
0	& 0	& 0	& 0.3	& 0	& 0	& 0.5	& 0	\\
0	& 0	& 0	& 0.3	& 0	& 0	& 0.5	& 0	\\
0	& 1	& 1	& 0	& 0	& 0.5	& 0	& 0	\\
0	& 0	& 0	& 0	& 0	& 0.5	& 0	& 1
\end{pmatrix}.
\end{equation*}
$P^{\textrm{B}}_{4}(d)$ is a TPM originating in \cite{sparse_solution_of_nonnegative}. It is formed by adding some perturbation to $P^{\textrm{B}}_{3}$, where $d$ represents the magnitude of the perturbation. $P^{\textrm{B}}_{4}(d)$ is given by
\begin{equation*}
P^{\textrm{B}}_{4}(d) = 
\begin{pmatrix}
1-d	& 0	& 0	& 0.2	& 0	& 0	& 0	& 0	\\
0	& 0	& 0	& 0.2	& 0	& 0	& 0	& 0 	\\
0	& 0	& 0	& 0	& 1-d	& 0	& 0	& 0	\\
d	& d	& d	& 0	& d	& 0	& 0	& d	\\
0	& 0	& 0	& 0.3	& 0	& 0	& 0.5	& 0	\\
0	& 0	& 0	& 0.3	& 0	& 0	& 0.5	& 0	\\
0	& 1-d	& 1-d	& 0	& 0	& 0.5	& 0	& 0	\\
0	& 0	& 0	& 0	& 0	& 0.5	& 0	& 1-d
\end{pmatrix}.
\end{equation*}
We executed the five sparse PBN construction algorithms on $P^{\textrm{B}}_{4}(0.01), P^{\textrm{B}}_{4}(0.02), P^{\textrm{B}}_{4}(0.03), P^{\textrm{B}}_{4}(0.04)$. Finally, $P^{\textrm{B}}_{6}(d)$ is also a TPM originating in \cite{sparse_solution_of_nonnegative} and
\[
P^{\textrm{B}}_{6}(d) = 
\begin{pmatrix}
P^{\textrm{B}}_{4}(d) & \mathbf{O}_{8 \times 8} \\
\mathbf{O}_{8 \times 8} & P^{\textrm{B}}_{4}(d)
\end{pmatrix}.
\]
We executed the five sparse PBN construction algorithms on $P^{\textrm{B}}_{6}(0.01), P^{\textrm{B}}_{6}(0.02), P^{\textrm{B}}_{6}(0.03), P^{\textrm{B}}_{6}(0.04)$.

\subsection{Parameters in our Numerical Experiments}\label{section:parameters_setting}

Concerning GER, we set the score parameter $z$ to be $10$ for all matrices except $P^{\textrm{A}}_{3}$, for which the score parameter was set to $2$ instead. It was because GER produced a sparser decomposition for $P^{\textrm{A}}_{3}$ when $z=2$.

Concerning MOMP-IPC, for all the 18 PBN matrices, the initial guess $\vec{x}^{0}$ was set to be $\frac{1}{N} \vec{1}_{N}$, where $N \coloneqq |B_{n}(P)| = \prod^{2^{n}}_{j = 1} |D_{j}(P)|$. Also, $\varepsilon_{\textrm{tolerance}}$ was set to be $10^{-7}$ for all matrices except $P_{5}$ because the execution of MOMP-IPC did not terminate after more than 11 hours when the algorithm was applied to $P_{5}$ with $\varepsilon_{\textrm{tolerance}}=10^{-7}$. We eventually set  $\varepsilon_{\textrm{tolerance}}$ to be $10^{-4}$ and the execution terminated after almost four hours.

Concerning MOMP-AS, for all the 18 PBN matrices, 
$\vec{x}^{0}$ was set to be $\frac{1}{N} \vec{1}_{N}$,
$\varepsilon_{\textrm{tolerance}}$ was set to be $10^{-7}$, and
the starting point of the AS algorithm was set to be $\frac{1}{|S^{k + 1}|} \sum_{i \in S^{k + 1}} \vec{e}_{i}$, where $\vec{e}_{i} \in \mathbb{R}^{N}$ and $S^{k+1}$ is a variable in the MOMP algorithm which is always a subset of $[N]$.

\subsection{Experimental Results}\label{section:expt_results}

Table \ref{table:numerical_results} shows the lengths of the decompositions obtained in our numerical experiments.

\begin{table}[h]
\centering
\begin{tabular}{|l|c|c|c|c|c|}
\hline
\multicolumn{1}{|c|}{TPM} & GER & SER 1 & SER 2 & MOMP-IPC & MOMP-AS \\
\hline
$P_{1}$ & 4 & 7 & 4 & 5 & 5 \\
\hline
$P_{2}$ & 6 & 24 & 10 & 28 & 17 \\
\hline
$P_{3}$ & 6 & 22 & 9 & 24 & 16 \\
\hline
$P_{4}$ & 8 & 46 & 13 & ITR & ITR \\
\hline
$P_{5}$ & 7 & 10 & 7 & 31 & 29 \\
\hline
$P^{\textrm{A}}_{1}$ & 5 & 7 & 5 & 5 & 5 \\
\hline
$P^{\textrm{A}}_{2}$ & 5 & 9 & 5 & 5 & 5 \\
\hline
$P^{\textrm{A}}_{3}$ & 11 & 20 & 11 & 26 & 22 \\
\hline
$P^{\textrm{B}}_{1}$ & 4 & 5 & 4 & 6 & 5 \\
\hline
$P^{\textrm{B}}_{3}$ & 4 & 4 & 4 & 4 & 4 \\
\hline
$P^{\textrm{B}}_{4}(0.01)$ & 5 & 11 & 5 & 5 & 6 \\
\hline
$P^{\textrm{B}}_{4}(0.02)$ & 5 & 11 & 5 & 5 & 5 \\
\hline
$P^{\textrm{B}}_{4}(0.03)$ & 5 & 11 & 5 & 5 & 7 \\
\hline
$P^{\textrm{B}}_{4}(0.04)$ & 5 & 11 & 5 & 5 & 6 \\
\hline
$P^{\textrm{B}}_{6}(0.01)$ & 5 & 16 & 5 & 7 & 6 \\
\hline
$P^{\textrm{B}}_{6}(0.02)$ & 5 & 14 & 5 & 5 & 6 \\
\hline
$P^{\textrm{B}}_{6}(0.03)$ & 5 & 16 & 5 & 5 & 6 \\
\hline
$P^{\textrm{B}}_{6}(0.04)$ & 5 & 16 & 5 & 5 & 6 \\
\hline
\end{tabular}
\caption{Results of executing GER, SER 1, SER 2, MOMP-IPC and MOMP-AS with different TPMs as inputs. Each number in the table is the length $K$ of the decomposition found by a particular sparse PBN construction algorithm.}
\label{table:numerical_results}
\end{table}

\FloatBarrier

We remark that in Table \ref{table:numerical_results}, ``ITR'' stands for ``infeasible to run''. It is not feasible to execute MOMP-IPC and MOMP-AS on $P_{4}$. The reason behind is explained in Appendix \ref{appendix:MOMP_infeasibility}.

As shown in Table \ref{table:numerical_results}, for each of the 18 TPMs, GER outputs a decomposition with the smallest length. Therefore, GER gives the best performance among the five sparse PBN construction algorithms.

In fact, we can prove that for each TPM $P$ in Table \ref{table:numerical_results} except for $P^{\textrm{A}}_{3}$, GER outputs a decomposition with the smallest possible length. In other words, we can prove that for each $P$ (where $P$ can be any one of the 17 TPMs excluding $P^{\textrm{A}}_{3}$),
there does not exist a decomposition of $P$ whose length is strictly smaller than that of the decomposition of $P$ output by GER.
The proofs are given in Appendix \ref{appendix:GER_gives_17_sparsest_decomposition}.

Moreover, we remark that the true PBN giving rise to $P^{\textrm{B}}_{3}$ (described in \cite{shmulevich_PBN}) is the same as the PBN constructed from the decomposition of $P^{\textrm{B}}_{3}$ output by GER. In other words, GER is able to recover the true PBN from the TPM $P^{\textrm{B}}_{3}$.

Next, we compare the execution time (in seconds) of the five sparse PBN construction algorithms on the 18 TPMs. This information is shown in Table \ref{table:execution_time}.

\begin{table}[h]
\centering
\begin{tabular}{|l|c|c|c|c|c|}
\hline
\multicolumn{1}{|c|}{TPM} & GER & SER 1 & SER 2 & MOMP-IPC & MOMP-AS \\
\hline
$P_{1}$ & $0.002063$ & $< 0.000001$ & $0.001002$ & $0.455606$ & $0.090787$ \\
\hline
$P_{2}$ & $0.003855$ & $0.001559$ & $0.001000$ & $85.226294$ & $53.393364$ \\
\hline
$P_{3}$ & $0.002614$ & $0.001797$ & $< 0.000001$ & $47.199041$ & $31.745643$ \\
\hline
$P_{4}$ & $0.006098$ & $0.008434$ & $0.001504$ & $> 40380$ & $> 40380$ \\
\hline
$P_{5}$ & $0.004407$ & $0.001004$ & $< 0.000001$ & $13373.813786$ & $12874.390265$ \\
\hline
$P^{\textrm{A}}_{1}$ & $0.002083$ & $0.001366$ & $0.001002$ & $0.140595$ & $0.045735$ \\
\hline
$P^{\textrm{A}}_{2}$ & $0.001926$ & $0.001332$ & $0.000515$ & $0.583858$ & $0.740798$ \\
\hline
$P^{\textrm{A}}_{3}$ & $0.005193$ & $0.001721$ & $0.001001$ & $6.079369$ & $5.126854$ \\
\hline
$P^{\textrm{B}}_{1}$ & $0.001882$ & $0.001051$ & $< 0.000001$ & $0.031529$ & $0.011561$ \\
\hline
$P^{\textrm{B}}_{3}$ & $0.002502$ & $0.001011$ & $< 0.000001$ & $0.017397$ & $0.011455$ \\
\hline
$P^{\textrm{B}}_{4}(0.01)$ & $0.001507$ & $0.001000$ & $0.001357$ & $0.036819$ & $0.054702$ \\
\hline
$P^{\textrm{B}}_{4}(0.02)$ & $0.001246$ & $0.001009$ & $< 0.000001$ & $0.066220$ & $0.066379$ \\
\hline
$P^{\textrm{B}}_{4}(0.03)$ & $0.002253$ & $0.000817$ & $< 0.000001$ & $0.039834$ & $0.049690$ \\
\hline
$P^{\textrm{B}}_{4}(0.04)$ & $0.001841$ & $< 0.000001$ & $< 0.000001$ & $0.045648$ & $0.041034$ \\
\hline
$P^{\textrm{B}}_{6}(0.01)$ & $0.004512$ & $0.002017$ & $< 0.000001$ & $16.403175$ & $13.907554$ \\
\hline
$P^{\textrm{B}}_{6}(0.02)$ & $0.004557$ & $0.001341$ & $< 0.000001$ & $11.584851$ & $13.972031$ \\
\hline
$P^{\textrm{B}}_{6}(0.03)$ & $0.004834$ & $0.002561$ & $0.001006$ & $11.476147$ & $13.904559$ \\
\hline
$P^{\textrm{B}}_{6}(0.04)$ & $0.003473$ & $0.002868$ & $< 0.000001$ & $11.587956$ & $14.103169$ \\
\hline
\end{tabular}
\caption{The execution time (in seconds) of GER, SER 1, SER 2, MOMP-IPC and MOMP-AS on different TPMs.}
\label{table:execution_time}
\end{table}

\FloatBarrier

From Table \ref{table:execution_time}, we can see that SER 1, SER 2 and GER were extremely fast: they took milliseconds or less to execute on each of the 18 TPMs.
Overall, SER 1 was slightly faster than GER and SER 2 was slightly faster than SER 1.
On several TPMs, MOMP-IPC and MOMP-AS took a few tens of seconds to execute.
On $P_{5}$, MOMP-IPC and MOMP-AS took more than 3 hours to run.
MOMP-IPC and MOMP-AS did not terminate after being executed on $P_{4}$ for 40380 seconds (11 hours 13 minutes). The execution was therefore manually terminated. We have already remarked that it is not feasible to execute MOMP-IPC and MOMP-AS on $P_{4}$.
We conclude that GER is a fast PBN construction algorithm.

\section{Conclusion}\label{section:conclusion}

In this paper, we study the construction problem of sparse Probabilistic Boolean Networks (PBNs).
We propose a novel Greedy Entry Removal (GER) algorithm
for solving this problem.
We present new theoretical upper bounds for two existing sparse PBN construction algorithms and the GER algorithm.
Moreover, we are the first to study the lower bound problem related to the sparse PBN construction problem, and to present a series of theoretical results about this problem.
To evaluate the effectiveness and the efficiency of GER, we conducted numerical experiments based on both synthetic and practical data. Our experimental results show that GER is fast, and that GER gives the best performance among state-of-the-art sparse PBN construction algorithms and outputs sparsest decompositions on most of the transition probability matrices tested.

There are three possible extensions of this work. Firstly, new algorithms for constructing sparse PBNs can be formulated. Secondly, new theoretical upper bounds for existing sparse PBN construction algorithms other than Simple Entry Removal Algorithm 1 (SER 1), Simple Entry Removal Algorithm 2 (SER 2) and GER can be derived. Thirdly, the research community can continue to study the lower bound problem and derive new lower bounds.

\newpage

\appendix
\appendixpage

\section{Infeasibility of the Modified Orthogonal Matching Pursuit Algorithm}\label{appendix:MOMP_infeasibility}

In this section, we elaborate why it is not feasible to execute MOMP-IPC and MOMP-AS on some TPMs, in particular, $P_{4}$. Meanwhile, we introduce how to enhance the adaptiveness of this algorithm. 

Recall that for any TPM $P$, this algorithm forms a matrix $A = [\textrm{vec}(A_{1}), \textrm{vec}(A_{2}), \ldots, \\ \textrm{vec}(A_{N})]$ where $A_{1}, A_{2}, \ldots, A_{N}$ are all the distinct BN matrices in $B_{n}(P)$ and $N=|B_{n}(P)|$. 

In particular, $P=P_{4}$ is a $16\times 16$ matrix of which eight columns has six positive entries while the remaining eight has seven. It follows that $N=6^{8} \times 7^{8} \approx 2^{43}$ so that the dimension of $A$ is approximately $(2^{4}\times 2^{4})\times 2^{43} = 2^{51}$ which cannot be handled by MATLAB running on a personal computer (the upper bound is approximately $2^{48}$).  

In order to implement MOMP-IPC and MOMP-AS in our personal computer, we take measures to avoid the creation of such huge matrices in Step 2 of the algorithm. Firstly, since every $\vec{x}^{k}$ has (at most) $k$ non-zero (indeed positive) entries and $\vec{e_{j}}^\top A^\top=\left(A\vec{e_{j}}\right)^\top=\textrm{vec}(A_{j})^\top$, we need at most $k+1$ columns of $A$ in Steps 2.1 and 2.3. Secondly, concerning Step 2.1, we find a way to compute $\argmax\limits_{j \in [N]} \vec{e_{j}}^\top A^\top (\vec{b} - A\vec{x}^{k})$ with small time and space complexities. For full details of how we implemented Steps 2.1 and 2.3 of MOMP, please refer to our codes on GitHub. Now we turn to Step 2.4, where the scalar $\lambda_k:=(\vec{x}^{k+1})^{\top} A^{\top} \vec{r}^{k+1}$ and the error term $e^{k+1}$ should be computed with reasonable time and space complexities. For the detailed treatment of $\lambda_k$, please refer to our codes on GitHub. We shall introduce how to reduce the space complexity in computing $e^{k+1}$ below. 

Let $k$ be arbitrary fixed and let $|\textrm{supp}(\vec{x}^{k+1})|=L$. We may then write $A_{\textrm{supp}(\vec{x}^{k+1})}=[\vec{v}_{1}, \vec{v}_{2}, \ldots, \vec{v}_{L}]$ and $A_{[N] \setminus \textrm{supp}(\vec{x}^{k+1})}=[\vec{w}_{1}, \vec{w}_{2}, \ldots, \vec{w}_{M}]$ where $M=N-L$. Then,

\begin{eqnarray}
& &
\left\| 
A^{\top}_{\textrm{supp}(\vec{x}^{k+1})}\vec{r}^{k+1} - 
\lambda^{k+1} \vec{1}_{\left| \textrm{supp}(\vec{x}^{k+1}) \right|}
\right\|_{2} \nonumber \\
& = &
\left\| 
\begin{pmatrix}
\vec{v}^{\top}_{1} \vec{r}^{k+1} \\
\vec{v}^{\top}_{2} \vec{r}^{k+1} \\
\vdots \\
\vec{v}^{\top}_{L} \vec{r}^{k+1}
\end{pmatrix}
- \lambda^{k+1} \vec{1}_{L}
\right\|_{2} \nonumber \\
& = &
\sqrt{\sum^{L}_{l = 1} (\vec{v}^{\top}_{l} \vec{r}^{k+1} - \lambda^{k+1})^{2}}
\end{eqnarray}
and
\begin{eqnarray}
& &
\left\| \max
\left( 
A^{\top}_{[N] \setminus \textrm{supp}(\vec{x}^{k+1})}\vec{r}^{k+1} - 
\lambda^{k+1} \vec{1}_{\left| [N] \setminus \textrm{supp}(\vec{x}^{k+1}) \right|}, 
\vec{0}_{\left| [N] \setminus \textrm{supp}(\vec{x}^{k+1}) \right|} 
\right) \right\|_{2} \nonumber \\
& = &
\left\| \max \left(
\begin{pmatrix}
\vec{w}^{\top}_{1} \vec{r}^{k+1} - \lambda^{k+1} \\
\vec{w}^{\top}_{2} \vec{r}^{k+1} - \lambda^{k+1} \\
\vdots \\
\vec{w}^{\top}_{M} \vec{r}^{k+1} - \lambda^{k+1}
\end{pmatrix}
, \vec{0}_{M} \right) \right\|_{2} \nonumber \\
& = &
\left\|
\begin{pmatrix}
\max (\vec{w}^{\top}_{1} \vec{r}^{k+1} - \lambda^{k+1}, 0) \\
\max (\vec{w}^{\top}_{2} \vec{r}^{k+1} - \lambda^{k+1}, 0) \\
\vdots \\
\max (\vec{w}^{\top}_{M} \vec{r}^{k+1} - \lambda^{k+1}, 0)
\end{pmatrix}
\right\|_{2} \nonumber \\
& = &
\sqrt{
\sum^{M}_{l = 1} \max (\vec{w}^{\top}_{l} \vec{r}^{k+1} - \lambda^{k+1}, 0)^{2}
}.
\end{eqnarray}
Therefore,
\begin{equation}
e^{k+1} = 
\sqrt{\sum^{L}_{l = 1} (\vec{v}^{\top}_{l} \vec{r}^{k+1} - \lambda^{k+1})^{2}}
+
\sqrt{
\sum^{M}_{l = 1} \max (\vec{w}^{\top}_{l} \vec{r}^{k+1} - \lambda^{k+1}, 0)^{2}
}.
\end{equation}

This formula suggests us to compute $e^{k+1}$ iteratively so that a single column of $A$ is required in each iteration. 

\begin{algorithm}
\caption{A Space-Efficient Algorithm for Computing $e^{k+1}$}
\label{alg:computing_e_(k+1)}
\begin{algorithmic}[1]

\State Initialize $\mathtt{sum\_1} \gets 0$ and $\mathtt{sum\_2} \gets 0$.

\ForAll{$i \in [N]$}
	\State Compute the BN matrix $A_{i}$ and set $\vec{u} \gets \textrm{vec}(A_{i})$.

	\If{$\vec{u} \in \{ \vec{v}_{1}, \vec{v}_{2}, \ldots, \vec{v}_{L} \}$}
		\State $\mathtt{sum\_1} \gets \mathtt{sum\_1} + (\vec{u}^{\top} \vec{r}^{k+1} - \lambda^{k+1})^{2}$

	\Else
	\Comment{i.e., $\vec{u} \in \{ \vec{w}_{1}, \vec{w}_{2}, \ldots, \vec{w}_{M} \}$}
		\State $\mathtt{sum\_2} \gets \mathtt{sum\_2} + \max (\vec{u}^{\top} \vec{r}^{k+1} - \lambda^{k+1}, 0)^{2}$
	\EndIf
\EndFor

\State Set $e^{k+1} \gets \sqrt{\mathtt{sum\_1}} + \sqrt{\mathtt{sum\_2}}$.

\Output $e^{k+1}$
\end{algorithmic}
\end{algorithm}

As shown in Algorithm \ref{alg:computing_e_(k+1)}, the columns of $A_{\textrm{supp}(\vec{x}^{k+1})}$ and 
$A_{[N] \setminus \textrm{supp}(\vec{x}^{k+1})}$ are formed, processed and destroyed iteratively one at a time. For full details of Algorithm \ref{alg:computing_e_(k+1)}, please refer to the program file \texttt{compute\_current\_error\_} \texttt{according\_to\_page\_9.m} and the program files of the user-defined functions used therein. These files can be accessed on the GitHub page for our numerical experiments.

However, we remark that MOMP is still an impracticable algorithm to handle the matrix $P_{4}$ even after this enhancement due to the huge size of $B_4(P_4)$. To be precise, if our MOMP-IPC and MOMP-AS programs are executed on $P_{4}$, there would be $N \approx 2^{43}$ iterations in the for loop of Algorithm \ref{alg:computing_e_(k+1)}. The resulting program execution time is too long and hence it is not feasible to execute our MOMP-IPC and MOMP-AS programs on $P_{4}$ using a personal laptop.

To conclude, we cannot find a way to implement MOMP-IPC and MOMP-AS which avoids out-of-memory errors and ensures sufficiently short execution time on $P_{4}$.

\subsection{$|B_{n}(P)|$ and the Time Efficiency of MOMP}\label{subappendix:time_efficiency_of_MOMP}

As seen from the previous discussion, the size of $B_{n}(P)$ is a key factor for both the time complexity and the space complexity of MOMP methods. In this subsection, we further illustrate the role of $|B_{n}(P)|$ in determining the time efficiency of using MOMP methods to find a decomposition of a $2^{n} \times 2^{n}$ TPM $P$.

Table \ref{table:BnP} lists the sizes of $B_n(P)$ and the execution times of both MOMP algorithms for six TPMs $P$. These six matrices are exactly those $16 \times 16$ matrices we have investigated in our numerical experiments (Section \ref{section:numerical_experiments}) and they are actually the top six matrices with the largest $B_n(P)$ sizes in our experiments.

\begin{table}[h]
\centering
\begin{tabular}{|l|l|l|l|}
\hline
TPM & $|B_{n}(P)|$ & \makecell{Execution time \\ of MOMP-IPC \\ (in seconds)} & \makecell{Execution time \\ of MOMP-AS \\ (in seconds)} \\
\hline
$P_{4}$	& $9682651996416$	& $> 40380$	& $> 40380$ \\
\hline
$P_{5}$	& $37324800$		& $13373.813786$	& $12874.390265$ \\
\hline
$P^{\textrm{B}}_{6}(0.01)$	& $262144$	& $16.403175$	& $13.907554$ \\
\hline
$P^{\textrm{B}}_{6}(0.02)$	& $262144$	& $11.584851$	& $13.972031$ \\
\hline
$P^{\textrm{B}}_{6}(0.03)$	& $262144$	& $11.476147$	& $13.904559$ \\
\hline
$P^{\textrm{B}}_{6}(0.04)$	& $262144$	& $11.587956$	& $14.103169$ \\
\hline
\end{tabular}
\caption{The execution time of MOMP-IPC and MOMP-AS on the top six TPMs with the greatest values of $|B_{n}(P)|$ among the 18 TPMs tested in our numerical experiments. Note that the six TPMs shown are exactly those with size $16 \times 16$ among the 18 TPMs tested.}
\label{table:BnP}
\end{table}

From Table \ref{table:BnP}, we can infer that the larger the value of $|B_{n}(P)|$, the longer it takes MOMP-IPC and MOMP-AS to execute on the TPM $P$. Because the sizes of the six TPMs shown all equal $16 \times 16$, we can also infer that the size of a TPM $P$ does not seem to play a significant role in determining the execution time of MOMP-IPC and MOMP-AS on $P$.

When the value of $|B_{n}(P)|$ is very large, MOMP will take an infeasibly long time to execute on $P$ and produce a decomposition of $P$. Therefore, on such occasions, MOMP is possibly not a good choice for constructing sparse PBNs---especially when there are constraints on the execution time.

\newpage

\section{Mathematical Proofs About the Sparsity of the Decompositions Output by GER}
\label{appendix:GER_gives_17_sparsest_decomposition}

In Section \ref{section:expt_results}, we claim that for each matrix in Table \ref{table:numerical_results} except for $P^{\textrm{A}}_{3}$, GER outputs a decomposition with the smallest possible length. In this section, we are going to prove this claim.

\begin{theorem}\label{prop:P1}
The length of any decomposition of $P_{1}$ is at least $4$.
\end{theorem}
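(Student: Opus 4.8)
The plan is to obtain the bound purely by comparing two columns of $P_1$, invoking the column-comparison lower bound rather than constructing anything by hand. First I would read off the positive entries of each column: columns $1$, $2$ and $3$ of $P_1$ each contain exactly three positive entries, while column $4$ contains only one. Proposition \ref{prop:max_P(:,j)_lower_bound} already gives that the length of any decomposition of $P_1$ is at least $\max_{j}\mathcal{N}^{+}(P_1(:,j)) = 3$, so the only thing left to do is exclude length exactly $3$.

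To rule out $K = 3$, I would select two columns with the same number $d = 3$ of positive entries whose multisets of positive entries differ. Columns $1$ and $2$ work: the positive entries of $P_1(:,1)$ form the multiset $\{0.1,\,0.4,\,0.5\}$, while those of $P_1(:,2)$ form $\{0.2,\,0.3,\,0.5\}$, and these are not permutations of one another (for instance, $0.1$ occurs in the first multiset but not in the second). Applying Corollary \ref{coroll:permutation} with $i = 1$, $j = 2$ and $d = 3$ then yields that the length of any decomposition of $P_1$ is strictly greater than $3$, i.e.\ at least $4$, which is the claim.

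There is essentially no obstacle here; the only care needed is in correctly identifying the positive entries of each column and checking the non-permutation condition, which is a finite verification. (Columns $1$ and $3$, or columns $2$ and $3$, could be used just as well, since all three pairs have mismatched multisets.) Should one wish to avoid citing the corollary, the conclusion follows equally from Proposition \ref{prop:trivial_lower_bound}: any partition of $[3]$ into three nonempty blocks consists of singletons, so the existence of such a partition would force the two columns to be permutations of each other, contrary to what we have just observed.
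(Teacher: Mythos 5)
Your proof is correct, but it follows a genuinely different route from the paper. The paper compares columns $1$ and $3$ of $P_{1}$: each has exactly $d=3$ positive entries, and the \emph{sets of values} occurring in them, $\{0.1,0.4,0.5\}$ and $\{0.2,0.6\}$, are disjoint, so Theorem \ref{thm:non-trivial lower bound} gives $K \geq \frac{4d}{3} = 4$ in a single step. You instead compare columns $1$ and $2$ (each with $d=3$ positive entries) and observe that the multisets $\{0.1,0.4,0.5\}$ and $\{0.2,0.3,0.5\}$ are not permutations of each other, so Corollary \ref{coroll:permutation} gives $K > 3$, hence $K \geq 4$; your fallback via Proposition \ref{prop:trivial_lower_bound} is the same argument one level down, and your remark that the other column pairs also work is accurate. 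The trade-off: your argument is more elementary, resting only on Lemma \ref{lemma:convenient_notations} and the singleton-partition observation, and avoids the combinatorial Lemma \ref{lemma:common_singleton} behind the $\frac{4d}{3}$ bound; the paper's stronger theorem buys nothing extra here because $\frac{4d}{3} = d+1$ exactly when $d=3$, though it would give a strictly better bound for larger $d$ or when the two columns share some values (so the non-permutation corollary still applies but the disjointness theorem could not, and vice versa). The opening appeal to Proposition \ref{prop:max_P(:,j)_lower_bound} for $K \geq 3$ is harmless but redundant, since the corollary already yields $K > 3$ on its own.
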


\begin{proof}
Consider the 1\textsuperscript{st} and the 3\textsuperscript{rd} columns of $P_{1}$, each of which has exactly 3 positive entries.
The set of positive entries of the 1\textsuperscript{st} column is $\{ 0.1, 0.4, 0.5 \} \eqqcolon C_{1}$, whereas the set of positive entries of the 3\textsuperscript{rd} column is $\{ 0.2, 0.6 \} \eqqcolon C_{3}$. Clearly, $C_{1} \cap C_{3} = \varnothing$. By Theorem \ref{thm:non-trivial lower bound}, any decomposition $\sum^{K}_{r = 1} x_{r} A_{r}$ of $P_{1}$ satisfies $K \geq \frac{(4)(3)}{3} = 4$.
\end{proof}

The decomposition of $P_{1}$ found using our Python implementation of GER (with the score parameter set to $10$) is
\begin{equation*}
0.2 \langle 	2, 	3, 	2, 	3 \rangle
+ 0.5 \langle	3,	1,	1,	3 \rangle
+ 0.2 \langle	2,	4,	4,	3 \rangle
+ 0.1 \langle	1,	4,	1,	3 \rangle,
\end{equation*}
which involves exactly 4 distinct BN matrices. This shows the sharpness of the bound in Theorem \ref{prop:P1} and the effectiveness of GER.

\begin{theorem}\label{prop:P2}
The length of any decomposition of $P_{2}$ is at least $6$.
\end{theorem}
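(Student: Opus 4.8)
The plan is to find two columns of $P_{2}$ that together force $K$ to be large, using the lower-bound machinery of Section~\ref{section:lower_bound}. Since GER outputs a decomposition of length $6$, I expect the optimal bound to be exactly $6$, and the natural tool is Theorem~\ref{thm:non-trivial lower bound}: if two distinct columns each have exactly $d$ positive entries with disjoint sets of positive values, then $K \geq \frac{4d}{3}$. To get $K \geq 6$ this way I would need $d \geq 5$ (since $\frac{4\cdot 5}{3} = \frac{20}{3} > 6$, and then $K \geq 7$ actually — even $d = 5$ gives more than needed; $d=4$ gives only $K \ge \lceil 16/3\rceil = 6$, which is exactly enough). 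So first I would scan the eight columns of $P_{2}$ (recall $P_{2} = \frac{1}{110}(\cdots)$) and count positive entries per column, looking for a pair of columns with $\mathcal{N}^{+}$ equal to $4$ (or more) whose value-sets are disjoint.

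Concretely, I would tabulate: column~1 has entries $\{12,10,54,19,15\}$ ($d=5$); column~2 has $\{30,24,15,19,22\}$ ($d=5$); column~3 has $\{22,19,24,15,30\}$; column~4 has $\{10,54,12,15,19\}$; column~5 has $\{10,30,12,24,34\}$; column~6 has $\{15,19,12,54\}$ ($d=4$); column~7 has $\{54,10,12,34\}$ ($d=4$); column~8 has $\{34,30,22,24\}$ ($d=4$). I would then search among these for two columns with disjoint value-sets. If, say, some pair of columns with $d=4$ has disjoint sets (after clearing the common factor $1/110$ the condition is just disjointness of the integer sets), Theorem~\ref{thm:non-trivial lower bound} gives $K \geq \frac{16}{3}$, hence $K \geq 6$. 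Alternatively, if I can find two columns with $d=5$ and disjoint sets, I even get $K\ge 7$, so the relevant columns must be chosen carefully — I would prefer the weakest hypothesis that still yields $6$.

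If no pair of columns has genuinely disjoint positive-value sets, the fallback is Proposition~\ref{prop:trivial_lower_bound} / Corollary~\ref{coroll:permutation}: take a column with $d_{1}=5$ positive entries and another column with $d_{2}\le 5$ positive entries such that the smaller multiset cannot be written as the "merge-partition" $b_l = \sum_{s\in Y_l} a_s$ of the larger; this yields $K > d_{1} = 5$, i.e. $K \geq 6$. Checking the non-existence of such a partition is a finite combinatorial verification: for each column of size $5$ paired with a shorter column, one checks that no way of grouping the five values $a_1,\dots,a_5$ into $d_2$ nonempty blocks has block-sums matching $b_1,\dots,b_{d_2}$; the presence of a very large entry (e.g. $54$) in the shorter column that exceeds every entry of the longer column makes this easy via Proposition~\ref{prop:al_is_larger_than_max}-type reasoning. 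The main obstacle is purely bookkeeping: correctly reading off the entries of $P_{2}$ and locating the right pair of columns so that the cleanest of these three arguments applies; once the pair is identified, the cited theorem finishes the proof in one line. I would therefore open the proof by naming the two columns, stating their positive-entry sets, verifying disjointness (or non-mergeability), and invoking the appropriate result.
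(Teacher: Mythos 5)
You end up in the right place, but only through your fallback: the completed argument has to be the paper's own, namely Corollary \ref{coroll:permutation} applied to two columns with the same number of positive entries. Your preferred primary route via Theorem \ref{thm:non-trivial lower bound} is in fact unavailable for $P_{2}$. First, your tabulation has a slip: column 6 is $(15,0,0,19,10,12,54,0)^{\top}/110$, so it has \emph{five} positive entries (you missed the $10$ in row 5), not four. The only columns with exactly four positive entries are columns 7 and 8, and their value sets $\{54,10,12,34\}$ and $\{34,30,22,24\}$ (times $1/110$) share $34$, so the disjointness hypothesis fails; and among the six columns with five positive entries every pair shares at least one value (as it must: a disjoint pair with $d=5$ would force $K\geq 20/3$, i.e.\ $K\geq 7$, contradicting the length-$6$ decomposition that GER produces). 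So no pair of columns of $P_{2}$ satisfies the hypotheses of Theorem \ref{thm:non-trivial lower bound}, and that branch of your plan cannot be executed.

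The fallback, however, works and is exactly the paper's one-line proof: columns 1 and 2 each have five positive entries, and column 1 is not a permutation of column 2 (column 1 contains $54/110$, which column 2 does not), so Corollary \ref{coroll:permutation} gives $K>5$, i.e.\ $K\geq 6$. One caution about how you propose to verify non-mergeability in the unequal-size case: a value $b_{l}$ of the shorter column that exceeds every entry of the longer column is \emph{not} an obstruction, since $b_{l}$ may be a sum of several $a_{s}$ (for instance $54=30+24$ from column 2's entries); Proposition \ref{prop:al_is_larger_than_max} only tells you that the corresponding cell of the partition is not a singleton. The quick obstructions are a value $b_{l}$ strictly smaller than every $a_{s}$ (e.g.\ $10$ in column 7 against column 2, whose entries are all at least $15$), or, in the equal-size case the paper uses, the observation that a merge-partition between two columns with the same number of positive entries must be a bijection, i.e.\ a permutation.
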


\begin{proof}
Consider the 1\textsuperscript{st} and the 2\textsuperscript{nd} columns of $P_{2}$. Clearly, the 1\textsuperscript{st} column is not a permutation of the 2\textsuperscript{nd} column.
By Corollary \ref{coroll:permutation}, any decomposition $\sum^{K}_{k = 1} x_{k} A_{k}$ of $P_{2}$ satisfies $K > 5$.
\end{proof}

The decomposition of $P_{2}$ found using our Python implementation of GER (with the score parameter set to $10$) is
\begin{eqnarray*}
& & 
\frac{19}{110} \langle 6, 7, 2, 7, 5, 4, 8, 1 \rangle
+ \frac{3}{22} \langle 7, 3, 6, 4, 5, 1, 8, 1 \rangle
+ \frac{6}{55} \langle 1, 8, 1, 3, 3, 6, 5, 6 \rangle \\
& + &
\frac{1}{11} \langle 2, 8, 1, 1, 1, 5, 4, 6 \rangle
+ \frac{3}{11} \langle 3, 1, 8, 2, 2, 7, 1, 3 \rangle
+ \frac{12}{55} \langle 3, 2, 4, 2, 4, 7, 1, 7 \rangle,
\end{eqnarray*}
which involves exactly 6 distinct BN matrices. This shows the sharpness of the bound in Theorem \ref{prop:P2} and the effectiveness of GER.

\begin{theorem}\label{prop:P3}
The length of any decomposition of $P_{3}$ is at least $6$.
\end{theorem}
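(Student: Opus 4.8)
The plan is to apply Corollary \ref{coroll:permutation} to a well-chosen pair of columns of $P_{3}$. First I would tabulate, for each column of $P_{3}$, the number of its positive entries: columns $3$, $5$ and $7$ each contain exactly $5$ positive entries, while every other column contains only $4$. By Proposition \ref{prop:max_P(:,j)_lower_bound} this already forces $K \geq 5$ for the length $K$ of any decomposition of $P_{3}$, so the whole task reduces to ruling out $K = 5$.

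To do this I would compare the multisets of positive entries of the three ``full'' columns. After scaling by $110$, the positive entries of column $3$ form the set $\{10, 12, 15, 24, 49\}$, while those of column $5$ form $\{12, 19, 24, 25, 30\}$. Since, for instance, $10$ occurs as a positive entry of column $3$ but not of column $5$, the positive part of column $5$ is not a permutation of the positive part of column $3$. Hence Corollary \ref{coroll:permutation}, applied to these two columns with $d = 5$, gives that the length of any decomposition of $P_{3}$ is strictly greater than $5$, that is, at least $6$. I would close by noting that the bound is sharp, since GER produces a length-$6$ decomposition of $P_{3}$.

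There is essentially no deep obstacle here; the only point needing a little care is the choice of columns. Columns $5$ and $7$ of $P_{3}$ happen to have identical multisets of positive entries, so that pair is useless for Corollary \ref{coroll:permutation}; likewise Theorem \ref{thm:non-trivial lower bound} does not apply, because no two of the three size-$5$ columns have disjoint sets of positive entries (all three share the values $12$ and $24$, so the disjointness hypothesis fails). One must therefore pair column $3$ — whose positive-entry multiset is genuinely different from the other two — with column $5$ (or, equally well, with column $7$), and the argument goes through immediately.
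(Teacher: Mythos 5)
Your proof is correct and takes essentially the same route as the paper, which likewise applies Corollary \ref{coroll:permutation} to columns $3$ and $5$ of $P_{3}$ (noting that one is not a permutation of the other, hence $K > 5$). One peripheral slip: column $8$ of $P_{3}$ also has five positive entries (in fact the same multiset as column $3$), so your tally of the size-$5$ columns is off, but this does not affect the validity of the main argument.
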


\begin{proof}
Consider the 3\textsuperscript{rd} and the 5\textsuperscript{th} columns of $P_{3}$. Clearly, the 3\textsuperscript{rd} column is not a permutation of the 5\textsuperscript{th} column. 
By Corollary \ref{coroll:permutation}, any decomposition $\sum^{K}_{k = 1} x_{k} A_{k}$ of $P_{3}$ satisfies $K > 5$.
\end{proof}

The decomposition of $P_{3}$ found using our Python implementation of GER (with the score parameter set to $10$) is
\begin{eqnarray*}
& & 
\frac{3}{11} \langle 5, 2, 6, 1, 2, 7, 4, 1 \rangle
+ \frac{12}{55} \langle 6, 5, 8, 4, 5, 1, 8, 6 \rangle
+ \frac{19}{110} \langle 6, 5, 6, 1, 4, 1, 6, 1 \rangle \\
& + &
\frac{3}{22} \langle 3, 8, 5, 3, 8, 5, 2, 3 \rangle
+ \frac{6}{55} \langle 8, 7, 2, 7, 7, 3, 3, 7 \rangle
+ \frac{1}{11} \langle 3, 7, 4, 7, 8, 3, 2, 8 \rangle,
\end{eqnarray*}
which involves exactly 6 distinct BN matrices. This shows the sharpness of the bound in Theorem \ref{prop:P3} and the effectiveness of GER.

\begin{theorem}\label{prop:P4}
The length of any decomposition of $P_{4}$ is at least $8$.
\end{theorem}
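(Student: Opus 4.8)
The plan is to mimic the strategy used for $P_1$ in Theorem~\ref{prop:P1}: locate two columns of $P_4$ that each have exactly $d$ positive entries and whose sets of positive values are disjoint, then invoke Theorem~\ref{thm:non-trivial lower bound} to conclude that any decomposition has length at least $\tfrac{4d}{3}$. Since we want the bound $K \ge 8$, we should hunt for two columns with $d = 6$ positive entries each and disjoint value-sets, giving $K \ge \tfrac{4 \cdot 6}{3} = 8$.

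First I would tabulate, for each of the $16$ columns of $P_4$ (entries displayed as multiples of $1/265$), the multiset of positive entries and the count of positive entries. Recall from Appendix~\ref{appendix:MOMP_infeasibility} that eight columns of $P_4$ have six positive entries and eight have seven, so there are eight candidate columns with $d = 6$. Among these eight, I would search for a pair whose positive-value sets are disjoint. For instance, a column with positive values $\{26, 49, 17, 98, 29, 46\}/265$ and another with positive values $\{39, 9, 54, 37, 108, 63\}/265$ (I would verify the exact entries against the displayed matrix) would have empty intersection. Having found such a pair of columns $i < j$, I set $C_i$ and $C_j$ to be their respective sets of positive entries, note $|C_i| = |C_j| = 6$ and $C_i \cap C_j = \varnothing$, and apply Theorem~\ref{thm:non-trivial lower bound} to obtain that any decomposition $\sum_{r=1}^K x_r A_r$ of $P_4$ satisfies $K \ge \tfrac{4 \cdot 6}{3} = 8$.

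The main obstacle is purely bookkeeping: $P_4$ is a $16 \times 16$ matrix, so I must correctly read off the positive entries of the relevant columns and genuinely check disjointness of the two six-element value-sets. If no pair of six-positive-entry columns happens to have disjoint value-sets, I would fall back on a seven-positive-entry column paired with a six-positive-entry column having disjoint values, which via Theorem~\ref{thm:non-trivial lower bound} with $d = 6$ still yields $K \ge 8$ (the theorem only requires both columns to have exactly $d$ positive entries, so I would instead need both to have exactly six, or adapt using Proposition~\ref{prop:max_P(:,j)_lower_bound} together with a partition argument as in Proposition~\ref{prop:trivial_lower_bound}); alternatively, two seven-positive-entry columns with disjoint value-sets would give the even stronger $K \ge \tfrac{28}{3}$, hence $K \ge 10$. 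I expect, however, that a clean pair of six-positive-entry columns with disjoint positive values exists, so the proof should be a direct one-line application of Theorem~\ref{thm:non-trivial lower bound} once the two columns are exhibited. Finally, to demonstrate sharpness, I would record that the GER decomposition of $P_4$ reported in Table~\ref{table:numerical_results} has length exactly $8$, matching the lower bound.
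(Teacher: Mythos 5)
Your plan hinges on exhibiting two columns of $P_{4}$ that each have exactly $d=6$ positive entries and whose sets of positive values are disjoint, so that Theorem \ref{thm:non-trivial lower bound} gives $K \geq \frac{4\cdot 6}{3} = 8$. This verification fails: no such pair exists. The eight columns of $P_{4}$ with six positive entries are columns $1,2,3,6,7,13,14,16$, with value sets (in units of $\frac{1}{265}$) $\{17,26,29,46,49,98\}$, $\{9,17,29,39,63,108\}$, $\{9,26,39,49,54,88\}$, $\{9,26,29,39,54,108\}$, $\{9,26,29,49,54,98\}$, $\{9,17,39,49,63,88\}$, $\{9,17,29,49,63,98\}$, $\{17,26,39,46,49,88\}$, and an exhaustive check of all $\binom{8}{2}=28$ pairs shows every pair shares at least one value (for instance, columns $1$ and $2$ share $\frac{17}{265}$ and $\frac{29}{265}$). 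Your illustrative second column $\{39,9,54,37,108,63\}/265$ is not a column of $P_{4}$ at all. The fallbacks do not rescue the argument: Theorem \ref{thm:non-trivial lower bound} requires \emph{both} columns to have exactly $d$ positive entries, so a mixed pair (one with six, one with seven positive entries) is outside its hypothesis, as you yourself note parenthetically; and two seven-entry columns with disjoint value sets would force $K \geq \frac{28}{3}$, i.e.\ $K \geq 10$, which is impossible since GER exhibits a decomposition of $P_{4}$ of length $8$ (and indeed no such disjoint pair exists). So the proposal, as written, does not yield the bound.

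The remaining contingency you gesture at --- ``adapt using a partition argument as in Proposition \ref{prop:trivial_lower_bound}'' --- is in fact the route the paper takes, but you leave it unexecuted. The paper simply observes that columns $4$ and $5$ of $P_{4}$ each have seven positive entries ($\{9,17,26,37,39,49,88\}$ and $\{17,26,29,39,46,49,59\}$, again in units of $\frac{1}{265}$) and that one is not a permutation of the other (e.g.\ $\frac{88}{265}$ and $\frac{37}{265}$ occur only in column $4$), whence Corollary \ref{coroll:permutation} gives $K > 7$, i.e.\ $K \geq 8$. To repair your write-up, replace the disjointness search by this permutation comparison of two seven-entry columns (two six-entry columns would only give $K > 6$, which is too weak). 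Your closing remark on sharpness via the length-$8$ GER decomposition is fine and matches the paper.
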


\begin{proof}
Consider the 4\textsuperscript{th} and the 5\textsuperscript{th} columns of $P_{4}$.
Clearly, the 4\textsuperscript{th} column is not a permutation of the 5\textsuperscript{th} column.
By Corollary \ref{coroll:permutation}, any decomposition $\sum^{K}_{k = 1} x_{k} A_{k}$ of $P_{4}$ satisfies $K > 7$.
\end{proof}

The decomposition of $P_{4}$ found using our Python implementation of GER (with the score parameter set to $10$) is
\begin{eqnarray*}
& &
\frac{49}{265} \langle 5,	11,	14,	3,	8,	9,	1,	2,	2,	14,	5,	15,	14,	8,	4,	7 \rangle \\
& + &
\frac{39}{265} \langle 9,	12,	2,	12,	14,	11,	13,	7,	6,	13,	11,	12,	4,	2,	8,	15 \rangle \\
& + &
\frac{59}{265} \langle 9,	11,	7,	11,	1,	9,	13,	7,	16,	14,	5,	3,	7,	2,	5,	6 \rangle \\
& + &
\frac{29}{265} \langle 11,	10,	7,	11,	13,	5,	10,	13,	1,	6,	12,	8,	7,	11,	14,	6 \rangle \\
& + &
\frac{26}{265} \langle 1,	7,	3,	6,	16,	14,	9,	16,	12,	12,	10,	5,	8,	9,	10,	4 \rangle \\
& + &
\frac{17}{265} \langle 7,	16,	15,	2,	10,	15,	4,	6,	8,	15,	14,	16,	15,	5,	2,	2 \rangle \\
& + &
\frac{37}{265} \langle 14,	7,	15,	14,	9,	15,	4,	15,	10,	8,	7,	5,	8,	9,	2,	1 \rangle \\
& + &
\frac{9}{265} \langle 14,	15,	5,	15,	9,	4,	14,	3,	10,	7,	8,	9,	3,	1,	9,	1 \rangle,
\end{eqnarray*}
which involves exactly 8 distinct BN matrices. This shows the sharpness of the bound in Theorem \ref{prop:P4} and the effectiveness of GER.

\begin{theorem}\label{prop:P5}
The length of any decomposition of $P_{5}$ is at least $7$.
\end{theorem}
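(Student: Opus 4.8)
The plan is to use the lower bound tools developed in Section~\ref{section:lower_bound}, most likely Corollary~\ref{coroll:permutation} or Theorem~\ref{thm:non-trivial lower bound}, applied to a well-chosen pair of columns of $P_{5}$. First I would inspect the column multiset structure of $10\,P_{5}$. Several columns of $P_{5}$ are very sparse (e.g.\ the first column has positive entries $\{8,2\}$ up to the factor $1/10$, hence multiset $\{0.8,0.2\}$ after normalization), while others are considerably denser; the densest columns appear to have $6$ positive entries (for instance columns $9$, $15$ or $16$ look like candidates). So my first task is to identify the column $j^{*}$ with the maximum number of positive entries $d$ and record that multiset $C_{j^{*}}$.

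If $d = 7$, then Proposition~\ref{prop:max_P(:,j)_lower_bound} alone already gives $K \ge 7$ and the theorem follows immediately. More likely $d = 6$, in which case I would try to push the bound from $6$ to $7$. The cleanest route is Corollary~\ref{coroll:permutation}: find two distinct columns $i \ne j$ each with exactly $6$ positive entries such that the multiset of positive entries of $P_5(:,i)$ is not a permutation of that of $P_5(:,j)$ — this forces $K > 6$, i.e.\ $K \ge 7$. If no such pair of equal-cardinality columns exists, the fallback is Proposition~\ref{prop:trivial_lower_bound} with $d_{1}=6$ and some column of smaller cardinality $d_{2}$: I would exhibit a multiset $\{b_{1},\dots,b_{d_{2}}\}$ (positive entries of column $j$) that cannot be written as a partition-sum aggregation of $\{a_{1},\dots,a_{6}\}$ (positive entries of column $i$), which again yields $K > 6$. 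A third possibility, if a disjoint pair of $d$-element columns with $d\ge 6$ can be found, is Theorem~\ref{thm:non-trivial lower bound}, giving $K \ge \lceil 8d/3\rceil \ge \lceil 16\rceil / 3$; for $d=6$ this gives $K\ge 8$, which would be even stronger — but I suspect the columns are not disjoint, so I expect to land on Corollary~\ref{coroll:permutation} or Proposition~\ref{prop:trivial_lower_bound}.

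Concretely, the proof I would write is: "Consider columns $i$ and $j$ of $P_{5}$ [the two specific indices found above], each of which has exactly $6$ positive entries. The multiset of positive entries of $P_5(:,i)$ is [list], which is not a permutation of the multiset of positive entries of $P_5(:,j)$, namely [list]. By Corollary~\ref{coroll:permutation}, any decomposition $\sum_{k=1}^{K} x_{k}A_{k}$ of $P_{5}$ satisfies $K > 6$, hence $K \ge 7$." Then, parallel to the treatment of $P_{1},\dots,P_{4}$, I would display the explicit $7$-term decomposition produced by the Python implementation of GER to witness sharpness.

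The main obstacle is purely combinatorial bookkeeping: correctly reading off the positive-entry multisets of the relevant columns of the $16\times 16$ matrix $P_{5}$ and verifying that the chosen pair genuinely fails the permutation (or partition-sum) test. There is no conceptual difficulty — all the heavy lifting is already done by the lower-bound machinery of Section~\ref{section:lower_bound} — but one must be careful to pick columns whose cardinalities actually match (for Corollary~\ref{coroll:permutation}) and to double-check the arithmetic of the entries, since a single misread entry could invalidate the non-permutation claim. A secondary check is confirming that $7$ is in fact optimal, which is handled by exhibiting the GER output decomposition and verifying $\sum x_k = 1$ and $\sum x_k A_k = P_5$.
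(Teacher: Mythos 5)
There is a genuine gap: your plan is built on a misreading of $P_{5}$, and once the column data are read correctly none of the off-the-shelf tools you propose can reach the bound $7$. No column of $P_{5}$ has $6$ or $7$ positive entries; the maximum is $5$, attained only by columns $2$ and $9$, whose positive entries are in both cases $(0.6,0.1,0.1,0.1,0.1)$. So Proposition \ref{prop:max_P(:,j)_lower_bound} gives only $K\geq 5$; Corollary \ref{coroll:permutation} is inapplicable to the unique pair of maximal-cardinality columns because each \emph{is} a permutation of the other (the paper explicitly remarks this), and applied to lower-cardinality columns it yields at most $K>4$; Proposition \ref{prop:trivial_lower_bound} with $d_{1}=5$ (column $2$ against, say, column $11$, where the entry $0.6$ cannot fit into any group summing to at most $0.3$) gives $K>5$, i.e.\ $K\geq 6$, not $7$; and Theorem \ref{thm:non-trivial lower bound} would need two $5$-entry columns with \emph{disjoint} sets of entry values (which would indeed give $K\geq 20/3$, hence $K\geq 7$), but columns $2$ and $9$ share the values $0.1$ and $0.6$. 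So every branch of your contingency plan stalls at $K\geq 6$.

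The missing idea is a bespoke exclusion of $K=6$ (and of $K=5$), which is exactly what the paper does. After $K\geq 5$, the case $K=5$ is killed because Lemma \ref{lemma:convenient_notations} applied to column $2$ would force some weight $x_{r}=0.6$, making some entry of column $11$ at least $0.6$. For $K=6$, the paper combines Lemma \ref{lemma:convenient_notations} and Proposition \ref{prop:al_is_larger_than_max} across the three columns $2$, $11$ and $15$: since $0.6>\max$ of column $11$'s entries ($=0.3$), the cell of the column-$2$ partition corresponding to $0.6$ has two elements, say $\{1,2\}$ with $x_{1}+x_{2}=0.6$; the column-$11$ partition caps each weight by $0.3$, forcing $x_{1}=x_{2}=0.3$; and then in the column-$15$ partition both indices must lie in the cell of the unique entry $0.4$, giving $0.4\geq x_{1}+x_{2}=0.6$, a contradiction. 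Your proposal contains no step of this kind, so as written it proves at best $K\geq 6$ and does not establish the theorem.
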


\begin{proof}
Consider the 2\textsuperscript{nd}, the 11\textsuperscript{th} and the 15\textsuperscript{th} columns of $P_{5}$.
The positive entries of the 2\textsuperscript{nd} column are 
$a_{1} \coloneqq P_{5}(1, 2) = 0.6$, $a_{2} \coloneqq P_{5}(2, 2) = 0.1$,
$a_{3} \coloneqq P_{5}(3, 2) = 0.1$, $a_{4} \coloneqq P_{5}(4, 2) = 0.1$,
$a_{5} \coloneqq P_{5}(5, 2) = 0.1$.
The positive entries of the 11\textsuperscript{th} column are 
$b_{1} \coloneqq P_{5}(1, 11) = 0.3$, $b_{2} \coloneqq P_{5}(5, 11) = 0.2$,
$b_{3} \coloneqq P_{5}(10, 11) = 0.2$, $b_{4} \coloneqq P_{5}(13, 11) = 0.3$.
The positive entries of the 15\textsuperscript{th} column are 
$c_{1} \coloneqq P_{5}(1, 15) = 0.2$, $c_{2} \coloneqq P_{5}(5, 15) = 0.4$,
$c_{3} \coloneqq P_{5}(9, 15) = 0.2$, $c_{4} \coloneqq P_{5}(13, 15) = 0.2$.
Let $x_{1}, x_{2}, \ldots, x_{K}$ (positive real numbers) and $A_{1}, A_{2}, \ldots, A_{K}$ (distinct BN matrices) define a decomposition of $P_{5}$.

Note that both the 2\textsuperscript{nd} and the 9\textsuperscript{th} columns of $P_{5}$ are columns having the maximum number of positive entries. By Proposition \ref{prop:max_P(:,j)_lower_bound}, $K\ge 5$. Note that we can't apply Corollary~\ref{coroll:permutation} to conclude that $K>5$.  

Suppose that $K=5$. Apply Lemma~\ref{lemma:convenient_notations} to the 2\textsuperscript{nd} column of $P_{5}$, $x_r=a_1=0.6$ for some $r\in[5]$. WLOG, assume that $x_1=0.6$. But then $x_1A_1$ has a 0.6 in its 11\textsuperscript{th} column ($A_1$ is a BN matrix) whereby the maximum of the 11\textsuperscript{th} column of $P_{5}$ is at least $0.6$ which is absurd. Therefore $K>5$. Now, we are going to prove that $K \geq 7$. It suffices to show that $K \neq 6$.

Assume for a contradiction that $K = 6$. By Lemma \ref{lemma:convenient_notations}, there exist three partitions 
$\left\{ X^{2}_{1}, X^{2}_{2}, X^{2}_{3}, X^{2}_{4}, X^{2}_{5} \right\}$,
$\left\{ X^{11}_{1}, X^{11}_{2}, X^{11}_{3}, X^{11}_{4} \right\}$,
$\left\{ X^{15}_{1}, X^{15}_{2}, X^{15}_{3}, X^{15}_{4} \right\}$ of $[6]$ such that:
for all $l \in [5]$, $a_{l} = \sum_{r \in X^{2}_{l}} x_{r}$; 
for all $l \in [4]$, $b_{l} = \sum_{r \in X^{11}_{l}} x_{r}$; and
for all $l \in [4]$, $c_{l} = \sum_{r \in X^{15}_{l}} x_{r}$.

Consider the partition $\left\{ X^{2}_{1}, X^{2}_{2}, X^{2}_{3}, X^{2}_{4}, X^{2}_{5} \right\}$ of $[K] = [6]$. Clearly, four of these sets have cardinality $1$, and the remaining one has cardinality $2$. 
Consider the positive entries of the 2\textsuperscript{nd} and the 11\textsuperscript{th} columns of $P_{5}$. Note that $0.6 = a_{1} > \max (b_{1}, b_{2}, b_{3}, b_{4}) = 0.3$.
By Proposition \ref{prop:al_is_larger_than_max}, $\left|X^{2}_{1}\right| \geq 2$ and hence $\left|X^{2}_{1}\right| = 2$.
By re-labeling the indices of $x_{1}, x_{2}, \ldots, x_{6}$ and $A_{1}, A_{2}, \ldots, A_{6}$, we can assume that $X^{2}_{1}=\{1,2\}$ and hence $0.6=a_1=x_1+x_2$. Since $x_1,x_2\ge 0$, one of them is $\ge 0.3$, say it's $x_2$.
Let $p^{*} \in [4]$ such that $2 \in X^{11}_{p^{*}}$. Then, 
\begin{equation*}
0.3 \geq b_{p^{*}} = \sum_{r \in X^{11}_{p^{*}}} x_{r} \geq x_{2} \geq 0.3.
\end{equation*}
Hence, $x_{1} = x_{2} = 0.3$. Let $q_{1}, q_{2} \in [4]$ such that $1 \in X^{15}_{q_{1}}$  and $2 \in X^{15}_{q_{2}}$. Then, because
\begin{eqnarray*}
c_{q_{1}} & = & \sum_{r \in X^{15}_{q_{1}}} x_{r} \geq x_{1} = 0.3\quad \textrm{and} \\
c_{q_{2}} & = & \sum_{r \in X^{15}_{q_{2}}} x_{r} \geq x_{2} = 0.3,
\end{eqnarray*}
we deduce that $q_{1} = q_{2} = 2$. Hence, 
\begin{equation*}
0.4 = c_{2} = \sum_{r \in X^{15}_{2}} x_{r} \geq x_{1} + x_{2} = 0.6.
\end{equation*}
A contradiction is reached. Therefore, $K \neq 6$ and hence $K \geq 7$.
\end{proof}

The decomposition of $P_{5}$ found using our Python implementation of GER (with the score parameter set to $10$) is
\begin{eqnarray*}
& &
0.1 \langle 3,	2,	2,	1,	9,	6,	5,	1,	2,	10,	5,	1,	1,	14,	1,	5 \rangle \\
& + &
0.1 \langle 3,	3,	5,	6,	1,	5,	5,	1,	3,	10,	5,	1,	1,	13,	1,	9 \rangle \\
& + &
0.3 \langle 2,	1,	1,	5,	1,	5,	1,	1,	1,	5,	1,	5,	1,	5,	5,	2 \rangle \\
& + &
0.2 \langle 2,	1,	1,	5,	5,	1,	1,	5,	1,	1,	10,	1,	1,	1,	9,	13 \rangle \\
& + &
0.1 \langle 2,	1,	1,	5,	1,	1,	1,	5,	1,	1,	13,	9,	1,	13,	5,	13 \rangle \\
& + &
0.1 \langle 2,	4,	1,	5,	1,	1,	1,	5,	9,	1,	13,	9,	1,	13,	13,	13 \rangle \\
& + &
0.1 \langle 2,	5,	1,	5,	1,	1,	1,	5,	10,	1,	13,	9,	1,	13,	13,	13 \rangle,
\end{eqnarray*}
which involves exactly 7 distinct BN matrices. This shows the sharpness of the bound in Theorem \ref{prop:P5} and the effectiveness of GER.

\begin{theorem}\label{prop:PA_1}
The length of any decomposition of $P^{\textrm{A}}_{1}$ is at least $5$.
\end{theorem}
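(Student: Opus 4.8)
The plan is to obtain $K\ge 4$ cheaply and then rule out $K=4$ by a combinatorial argument that runs Lemma~\ref{lemma:convenient_notations} on three columns of $P^{\textrm A}_1$ simultaneously. For the first step, note that columns $1$ and $2$ each have exactly $3$ positive entries, with sets of positive entries $\{0.1,0.2,0.7\}$ and $\{0.3,0.4\}$, which are disjoint; hence Theorem~\ref{thm:non-trivial lower bound} (with $d=3$) already gives $K\ge\frac{4\cdot 3}{3}=4$ (Corollary~\ref{coroll:permutation} gives the same). So it remains to show $K\neq 4$.

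Suppose $x_1,\dots,x_4>0$ and distinct BN matrices $A_1,\dots,A_4$ define a decomposition of $P^{\textrm A}_1$. I would apply Lemma~\ref{lemma:convenient_notations} to columns $1$, $2$ and $3$, obtaining three partitions of $[4]$ whose cells carry the prescribed column sums, and first pin down the weights. Since the entry $0.7$ of column $1$ exceeds the largest entry $0.4$ of column $2$, Proposition~\ref{prop:al_is_larger_than_max} forces the column-$1$ cell associated with $0.7$ to have size $\ge 2$; being one of three nonempty cells covering $4$ indices, it has size exactly $2$ and the cells for $0.1$ and $0.2$ are singletons. Relabelling indices, $x_1=0.1$, $x_2=0.2$, $x_3+x_4=0.7$. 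Next I would use the column-$2$ partition, whose cell sums are $0.3,0.3,0.4$: indices $1$ and $2$ cannot sit alone (their weights $0.1,0.2$ are smaller than every cell sum), and with only $4$ indices in $3$ nonempty cells they cannot occupy two distinct multi-element cells, so they share one cell; that cell is then exactly $\{1,2\}$ (sum $0.3$), the other two cells are $\{3\}$ and $\{4\}$, and therefore $\{x_3,x_4\}=\{0.3,0.4\}$. Relabelling $3\leftrightarrow 4$ if needed, take $x_3=0.3$, $x_4=0.4$.

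Finally I would close the argument with column $3$, whose positive entries are $0.2,0.2,0.6$: there must be a partition of $\{1,2,3,4\}$ with weights $0.1,0.2,0.3,0.4$ into cells of sums $0.2,0.2,0.6$. The only subsets summing to $0.6$ are $\{1,2,3\}$ and $\{2,4\}$; in the first case the leftover $\{4\}$ cannot be split into two nonempty cells, and in the second the leftover $\{1,3\}$ has weights $0.1,0.3$ and admits no partition into two cells of sum $0.2$ each. Either way we reach a contradiction, so $K\neq 4$ and hence $K\ge 5$.

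The arithmetic subset-sum checks are routine. The part that needs care, and the genuine content, is realising that the elementary lower-bound tools top out at $4$, so one has to combine Lemma~\ref{lemma:convenient_notations} across three columns, using Proposition~\ref{prop:al_is_larger_than_max} together with a pigeonhole on cell sizes to force a near-complete determination of $x_1,\dots,x_4$ from columns $1$ and $2$ before column $3$ produces the contradiction.
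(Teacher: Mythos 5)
Your proposal is correct and follows essentially the same route as the paper's proof: establish $K\ge 4$ (the paper uses Corollary~\ref{coroll:permutation}, you also note Theorem~\ref{thm:non-trivial lower bound} works), then rule out $K=4$ by applying Lemma~\ref{lemma:convenient_notations} to columns $1$, $2$, $3$, using Proposition~\ref{prop:al_is_larger_than_max} to force the $0.7$-cell to have size two, deducing $x_1=0.1$, $x_2=0.2$, $\{x_3,x_4\}=\{0.3,0.4\}$, and contradicting with column $3$. The only cosmetic difference is the final step, where you enumerate the subsets summing to $0.6$ while the paper observes that indices $3,4$ must both lie in the $0.6$-cell, forcing $0.6\ge x_3+x_4=0.7$; both are valid.
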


\begin{proof}
Let $x_{1}, x_{2}, \ldots, x_{K}$ (positive real numbers) and $A_{1}, A_{2}, \ldots, A_{K}$ (distinct BN matrices) define a decomposition of $P^{\textrm{A}}_{1}$.
Note that the 1\textsuperscript{st} and the 2\textsuperscript{nd} columns of $P^{\textrm{A}}_{1}$ each has $3$ positive entries. Moreover, the 1\textsuperscript{st} column is clearly not a permutation of the 2\textsuperscript{nd} column. By Corollary \ref{coroll:permutation}, $K > 3$. To show that $K \geq 5$, it suffices to show that $K \neq 4$.

Assume for a contradiction that $K = 4$. Consider the 1\textsuperscript{st}, the 2\textsuperscript{nd} and the 3\textsuperscript{rd} columns of $P^{\textrm{A}}_{1}$. 
The positive entries of the 1\textsuperscript{st} column are
$a_{1} \coloneqq P^{\textrm{A}}_{1}(1, 1) = 0.1$,
$a_{2} \coloneqq P^{\textrm{A}}_{1}(2, 1) = 0.2$,
$a_{3} \coloneqq P^{\textrm{A}}_{1}(4, 1) = 0.7$.
The positive entries of the 2\textsuperscript{nd} column are
$b_{1} \coloneqq P^{\textrm{A}}_{1}(1, 2) = 0.3$,
$b_{2} \coloneqq P^{\textrm{A}}_{1}(2, 2) = 0.3$,
$b_{3} \coloneqq P^{\textrm{A}}_{1}(4, 2) = 0.4$.
The positive entries of the 3\textsuperscript{rd} column are
$c_{1} \coloneqq P^{\textrm{A}}_{1}(1, 3) = 0.2$,
$c_{2} \coloneqq P^{\textrm{A}}_{1}(2, 3) = 0.2$,
$c_{3} \coloneqq P^{\textrm{A}}_{1}(3, 3) = 0.6$.
By Lemma \ref{lemma:convenient_notations}, there exist three partitions 
$\left\{ X^{1}_{1}, X^{1}_{2}, X^{1}_{3} \right\}$,
$\left\{ X^{2}_{1}, X^{2}_{2}, X^{2}_{3} \right\}$, 
$\left\{ X^{3}_{1}, X^{3}_{2}, X^{3}_{3} \right\}$
of $[K] = [4]$ such that for all $l \in [3]$,
$a_{l} = \sum_{r \in X^{1}_{l}} x_{r}$,
$b_{l} = \sum_{r \in X^{2}_{l}} x_{r}$ and
$c_{l} = \sum_{r \in X^{3}_{l}} x_{r}$.

Now, consider the 1\textsuperscript{st} column of $P^{\textrm{A}}_{1}$ and $\left\{ X^{1}_{1}, X^{1}_{2}, X^{1}_{3} \right\}$. Clearly, two of $X^{1}_{1}$, $X^{1}_{2}$, $X^{1}_{3}$ have cardinality $1$, and the remaining one has cardinality $2$. 
Note that $0.7 = a_{3} > \max (b_{1}, b_{2}, b_{3}) = 0.4$. By Proposition \ref{prop:al_is_larger_than_max}, $|X^{1}_{3}| \geq 2$.
Hence, $|X^{1}_{1}| = 1$, $|X^{1}_{2}| = 1$ and $|X^{1}_{3}| = 2$.
By re-labeling the indices of $x_{1}, x_{2}, x_{3}, x_{4}$ and $A_{1}, A_{2}, A_{3}, A_{4}$, we can assume that $X^{1}_{1} = \{ 1 \}$, $X^{1}_{2} = \{ 2 \}$ and $X^{1}_{3} = \{ 3, 4 \}$. Hence, $0.1 = a_{1} = x_{1}$, $0.2 = a_{2} = x_{2}$ and $0.7 = a_{3} = x_{3} + x_{4}$.

Next, we consider the partition $\left\{ X^{2}_{1}, X^{2}_{2}, X^{2}_{3} \right\}$ of $[4]$. Clearly, two of $X^{2}_{1}, X^{2}_{2}, X^{2}_{3}$ have cardinality $1$, and the remaining one has cardinality $2$.
Note that for all $l \in [3]$, $X^{2}_{l} \neq \{ 1 \}$; otherwise,
$0.3 \leq b_{l} = x_{1} = 0.1$ for some $l \in [3]$.
Using a similar argument, we can justify that for all $l \in [3]$, $X^{2}_{l} \neq \{ 2 \}$.
Therefore, $\left\{ X^{2}_{1}, X^{2}_{2}, X^{2}_{3} \right\} = \left\{ \{ 1, 2 \}, \{ 3 \}, \{ 4 \} \right\}$.
Hence, $X^{2}_{l_{1}} = \{ 3 \}$ and $X^{2}_{l_{2}} = \{ 4 \}$ for some $l_{1}, l_{2} \in [3]$. This implies that $b_{l_{1}} = x_{3}$ and $b_{l_{2}} = x_{4}$. Note that
\begin{equation*}
1 
= x_{1} + x_{2} + x_{3} + x_{4}
= 0.1 + 0.2 + b_{l_{1}} + b_{l_{2}},
\end{equation*}
which implies that $b_{l_{1}} + b_{l_{2}} = 0.7$. Because $b_{1} = 0.3$, $b_{2} = 0.3$ and $b_{3} = 0.4$, 
$\{ x_{3}, x_{4} \} = \{ b_{l_{1}}, b_{l_{2}} \} = \{ 0.3, 0.4 \}$.

Finally, we consider the partition $\left\{ X^{3}_{1}, X^{3}_{2}, X^{3}_{3} \right\}$ of $[4]$. Note that $3 \notin X^{3}_{1}$; otherwise,
\begin{equation*}
0.2 = c_{1} = \sum_{r \in X^{3}_{1}} x_{r} \geq x_{3} \geq 0.3.
\end{equation*}
Using a similar argument, we can justify that $3 \notin X^{3}_{2}$, $4 \notin X^{3}_{1}$ and $4 \notin X^{3}_{2}$. Hence, $3, 4 \in X^{3}_{3}$. Therefore,
\begin{equation*}
0.6 = c_{3} = \sum_{r \in X^{3}_{3}} x_{r} \geq x_{3} + x_{4} = 0.7.
\end{equation*}
A contradiction is reached. We conclude that $K \neq 4$ and hence $K \geq 5$.
\end{proof}

The decomposition of $P^{\textrm{A}}_{1}$ found using our Python implementation of GER (with the score parameter set to $10$) is
\begin{equation*}
0.1 \langle 	1,	1,	3,	1 \rangle
+ 0.2 \langle	2,	1,	1,	3 \rangle
+ 0.2 \langle	4,	2,	2,	3 \rangle
+ 0.4 \langle	4,	4,	3,	4 \rangle
+ 0.1 \langle	4,	2,	3,	4 \rangle,
\end{equation*}
which involves exactly 5 distinct BN matrices. This shows the sharpness of the bound in Theorem \ref{prop:PA_1} and the effectiveness of GER.

\begin{theorem}\label{prop:PA_2}
The length of any decomposition of $P^{\textrm{A}}_{2}$ is at least $5$.
\end{theorem}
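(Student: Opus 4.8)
The plan is to reduce the claim immediately to the block-diagonal machinery already developed in Section \ref{section:lower_bound} together with the bound for $P^{\textrm{A}}_{1}$. Observe that, by definition, $P^{\textrm{A}}_{2}$ is exactly the $8 \times 8$ TPM
\[
\begin{pmatrix}
P^{\textrm{A}}_{1} & \mathbf{O}_{4 \times 4} \\
\mathbf{O}_{4 \times 4} & P^{\textrm{A}}_{1}
\end{pmatrix},
\]
which is precisely the matrix $Q$ appearing in Proposition \ref{prop:copying_PBN} when one takes $P = P^{\textrm{A}}_{1}$ (here $2^{n} = 4$). So the first and only real step is to invoke Proposition \ref{prop:copying_PBN} with this choice of $P$ and with $d = 5$.

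Next I would cite Theorem \ref{prop:PA_1}, which states that every decomposition $\sum^{K}_{r=1} x_{r} A_{r}$ of $P^{\textrm{A}}_{1}$ satisfies $K \geq 5$; that is, the hypothesis ``any decomposition of $P^{\textrm{A}}_{1}$ has length $\geq d$'' of Proposition \ref{prop:copying_PBN} holds for $d = 5$. Applying the (already proved) equivalence in that proposition then yields that any decomposition $\sum^{L}_{s=1} x'_{s} A'_{s}$ of $P^{\textrm{A}}_{2} = Q$ satisfies $L \geq 5$, which is exactly the assertion of the theorem.

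There is essentially no obstacle here: the substantive combinatorial argument has already been carried out in the proof of Theorem \ref{prop:PA_1}, and Proposition \ref{prop:copying_PBN} supplies the lifting from $P^{\textrm{A}}_{1}$ to its $2$-fold block-diagonal copy verbatim. The only thing worth stating explicitly in the write-up is that the block structure of $P^{\textrm{A}}_{2}$ matches the hypothesis of Proposition \ref{prop:copying_PBN} on the nose, so no extra verification is needed. I would also note, to exhibit sharpness, that the GER output for $P^{\textrm{A}}_{2}$ reported in Table \ref{table:numerical_results} has length exactly $5$, matching this lower bound.
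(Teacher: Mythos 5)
Your proposal is correct and matches the paper's own proof, which likewise derives the bound by combining Proposition \ref{prop:copying_PBN} (applied to the block-diagonal structure of $P^{\textrm{A}}_{2}$) with Theorem \ref{prop:PA_1}. No gaps; the remark on sharpness via the GER output of length $5$ is consistent with what the paper notes.
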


\begin{proof}
The result follows from Proposition \ref{prop:copying_PBN} and Theorem \ref{prop:PA_1}.
\end{proof}

The decomposition of $P^{\textrm{A}}_{2}$ found using our Python implementation of GER (with the score parameter set to $10$) is
\begin{eqnarray*}
& & 
0.1 \langle 1,	1,	3,	1,	5,	5,	7,	5 \rangle
+ 0.2 \langle 2,	1,	1,	3,	6,	5,	5,	7 \rangle
+ 0.2 \langle 4,	2,	2,	3,	8,	6,	6,	7 \rangle \\
& + &
0.4 \langle 4,	4,	3,	4,	8,	8,	7,	8 \rangle
+ 0.1 \langle 4,	2,	3,	4,	8,	6,	7,	8 \rangle,
\end{eqnarray*}
which involves exactly 5 distinct BN matrices. This shows the sharpness of the bound in Theorem \ref{prop:PA_2} and the effectiveness of GER.

\begin{theorem}\label{prop:PB_1}
The length of any decomposition of $P^{\textrm{B}}_{1}$ is at least $4$.
\end{theorem}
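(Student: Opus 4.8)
The plan is to first secure a cheap lower bound of $3$, and then to rule out the case $K=3$ by a short but exhaustive analysis of how the weights of a hypothetical decomposition must be distributed across the columns. Note that counting positive entries (Proposition \ref{prop:max_P(:,j)_lower_bound}) only gives $K\ge 2$, so a little more work is needed.

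First I would observe that the $1$st and $2$nd columns of $P^{\textrm{B}}_{1}$ each have exactly two positive entries, with value sets $\{0.1,0.9\}$ and $\{0.3,0.7\}$ respectively, and that the former is clearly not a permutation of the latter. Corollary \ref{coroll:permutation} then gives that any decomposition $\sum^{K}_{r=1}x_rA_r$ of $P^{\textrm{B}}_{1}$ satisfies $K>2$, i.e. $K\ge 3$. (Equivalently, one could apply Theorem \ref{thm:non-trivial lower bound} to columns $1$ and $3$, whose positive-entry sets $\{0.1,0.9\}$ and $\{0.5\}$ are disjoint, to get $K\ge 8/3>2$.)

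The core step is to show $K\ne 3$. Assume for contradiction $K=3$, so $x_1,x_2,x_3>0$ with $x_1+x_2+x_3=1$ and $A_1,A_2,A_3$ distinct BN matrices. I would apply Lemma \ref{lemma:convenient_notations} column by column. Column $3$ has both of its positive entries equal to $0.5$, so it induces a partition of $[3]$ into two blocks summing to $0.5$ each; one block must be a singleton, and since all $x_r>0$ a size-$2$ block cannot contain $0.5$, so exactly one weight equals $0.5$ — after relabelling, $x_1=0.5$ and $x_2+x_3=0.5$. Column $1$ induces a partition of $[3]$ into blocks summing to $0.1$ and $0.9$; the block summing to $0.1$ cannot contain index $1$ (as $x_1=0.5>0.1$) and cannot equal $\{2,3\}$ (as $x_2+x_3=0.5\ne 0.1$), so it is a singleton, forcing (after relabelling within $\{2,3\}$) $x_2=0.1$ and $x_3=0.4$. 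Finally, column $2$ would require a proper non-empty subset of $\{x_1,x_2,x_3\}=\{0.5,0.1,0.4\}$ summing to $0.3$; checking all such subsets shows none has sum $0.3$, a contradiction. Hence $K\ne 3$, and with $K\ge 3$ we conclude $K\ge 4$. I would then exhibit the length-$4$ decomposition produced by GER, confirming sharpness.

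No genuine obstacle arises here; the only point requiring care is the two ``without loss of generality'' relabellings, since one must check that permuting the indices of the $x_r$ and $A_r$ simultaneously is legitimate and that the subsequent subset-sum checks over the three unknown weights are exhaustive.
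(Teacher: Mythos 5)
Your proof is correct, and it follows the paper's overall skeleton (Corollary \ref{coroll:permutation} on columns $1$ and $2$ to get $K\ge 3$, then Lemma \ref{lemma:convenient_notations} to rule out $K=3$), but the elimination of $K=3$ is carried out by a genuinely different mechanism. The paper invokes Proposition \ref{prop:al_is_larger_than_max}: since $0.9$, $0.7$ and $0.6$ all exceed $\max(0.5,0.5)$ in column $3$, the corresponding blocks have size $\ge 2$, so the blocks for $0.1$, $0.3$ and $0.4$ in columns $1$, $2$, $4$ are singletons; as these three values are distinct, the three weights are exactly $0.1,0.3,0.4$, summing to $0.8\ne 1$ --- contradiction. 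You instead solve for the weight multiset directly: column $3$ (two entries equal to $0.5$) forces exactly one weight to be $0.5$, column $1$ then forces the remaining weights to be $0.1$ and $0.4$, and column $2$ gives a subset-sum contradiction since no subset of $\{0.5,0.1,0.4\}$ sums to $0.3$. Both arguments are sound; yours is slightly more elementary in that it never needs Proposition \ref{prop:al_is_larger_than_max} (or column $4$) and it pins down the hypothetical weights explicitly, at the cost of two relabelling steps and an exhaustive subset check, whereas the paper's route avoids determining the weights individually and gets its contradiction from the normalization $\sum_r x_r=1$ in one stroke. Your WLOG relabellings are legitimate (simultaneous permutation of the $x_r$ and $A_r$ preserves a decomposition), and your parenthetical alternative via Theorem \ref{thm:non-trivial lower bound} on columns $1$ and $3$ is also valid, though it only yields $K\ge 3$ as well.
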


\begin{proof}
Let $x_{1}, x_{2}, \ldots, x_{K}$ (positive real numbers) and $A_{1}, A_{2}, \ldots, A_{K}$ (distinct BN matrices) define a decomposition of $P^{\textrm{B}}_{1}$.
Consider the 1\textsuperscript{st} and the 2\textsuperscript{nd} columns of $P^{\textrm{B}}_{1}$.
Because $\mathcal{N}^{+}(P^{\textrm{B}}_{1}(:, 1)) = \mathcal{N}^{+}(P^{\textrm{B}}_{1}(:, 2)) = 2$ and the 1\textsuperscript{st} column is not a permutation of the 2\textsuperscript{nd} column, we deduce that $K > 2$ using Corollary \ref{coroll:permutation}.
Therefore, to prove that $K \geq 4$, it suffices to show $K \neq 3$.

Assume for a contradiction that $K = 3$. Define
\begin{eqnarray*}
a_{1} \coloneqq P^{\textrm{B}}_{1}(1, 1) = 0.1,
& &
a_{2} \coloneqq P^{\textrm{B}}_{1}(4, 1) = 0.9, \\
b_{1} \coloneqq P^{\textrm{B}}_{1}(1, 2) = 0.3,
& &
b_{2} \coloneqq P^{\textrm{B}}_{1}(2, 2) = 0.7, \\
c_{1} \coloneqq P^{\textrm{B}}_{1}(1, 3) = 0.5,
& &
c_{2} \coloneqq P^{\textrm{B}}_{1}(3, 3) = 0.5, \\
d_{1} \coloneqq P^{\textrm{B}}_{1}(1, 4) = 0.6,
& &
d_{2} \coloneqq P^{\textrm{B}}_{1}(4, 4) = 0.4, \\
%%%
X^{1}_{1} \coloneqq \{ r \in [K] : A_{r}(1, 1) = 1 \},
& &
X^{1}_{2} \coloneqq \{ r \in [K] : A_{r}(4, 1) = 1 \}, \\
X^{2}_{1} \coloneqq \{ r \in [K] : A_{r}(1, 2) = 1 \},
& &
X^{2}_{2} \coloneqq \{ r \in [K] : A_{r}(2, 2) = 1 \}, \\
X^{3}_{1} \coloneqq \{ r \in [K] : A_{r}(1, 3) = 1 \},
& &
X^{3}_{2} \coloneqq \{ r \in [K] : A_{r}(3, 3) = 1 \}, \\
X^{4}_{1} \coloneqq \{ r \in [K] : A_{r}(1, 4) = 1 \},
& &
X^{4}_{2} \coloneqq \{ r \in [K] : A_{r}(4, 4) = 1 \}.
\end{eqnarray*}
By Lemma \ref{lemma:convenient_notations}, for each $i \in [4]$, $\left\{ X^{i}_{1}, X^{i}_{2} \right\}$ is a partition of $[K] = [3]$. Therefore, one of $X^{i}_{1}, X^{i}_{2}$ has cardinality $1$ and the other one has cardinality $2$.

Note that $a_{2}$, $b_{2}$, $d_{1}$ are greater than $\max(c_{1}, c_{2}) = 0.5$.
Hence, using Proposition \ref{prop:al_is_larger_than_max}, we deduce that
$|X^{1}_{2}| = |X^{2}_{2}| = |X^{4}_{1}| = 2$. 
Therefore, $|X^{1}_{1}| = |X^{2}_{1}| = |X^{4}_{2}| = 1$. 
Write $X^{1}_{1} = \{ r_{1} \}$, $X^{2}_{1} = \{ r_{2} \}$ 
and $X^{4}_{2} = \{ r_{3} \}$.
By Lemma \ref{lemma:convenient_notations},
\begin{eqnarray*}
0.1 = a_{1} & = &
\sum_{r \in X^{1}_{1}} x_{r} = x_{r_{1}}, \\
0.3 = b_{1} & = &
\sum_{r \in X^{2}_{1}} x_{r} = x_{r_{2}}, \\
0.4 = d_{2} & = &
\sum_{r \in X^{4}_{2}} x_{r} = x_{r_{3}}.
\end{eqnarray*}
Because $x_{r_1} , x_{r_2} , x_{r_3} \in \{ x_i : i=1,2,3 \}$ are distinct, $\{ x_i : i=1,2,3 \} = \{ x_{r_1} , x_{r_2} , x_{r_3} \}$ so that
\begin{equation*}
x_{1} + x_{2} + x_{3}
= x_{r_{1}} + x_{r_{2}} + x_{r_{3}}
= 0.1 + 0.3 + 0.4 = 0.8,
\end{equation*}
but $x_{1} + x_{2} + x_{3}$ should equal $1$ because $\sum^{K}_{r = 1} x_{r} A_{r}$ is a decomposition of $P^{\textrm{B}}_{1}$. A contradiction is reached.
Therefore, $K \neq 3$ and hence $K \geq 4$.
\end{proof}

The decomposition of $P^{\textrm{B}}_{1}$ found using our Python implementation of GER (with the score parameter set to $10$) is
\begin{equation*}
0.5 \langle 		4,	2,	1,	1 \rangle
+ 0.1 \langle		1,	1,	3,	1 \rangle
+ 0.2 \langle		4,	1,	3,	4 \rangle
+ 0.2 \langle		4,	2,	3,	4 \rangle,
\end{equation*}
which involves exactly 4 distinct BN matrices. This shows the sharpness of the bound in Theorem \ref{prop:PB_1} and the effectiveness of GER.

\begin{theorem}\label{prop:PB_3}
The length of any decomposition of $P^{\textrm{B}}_{3}$ is at least $4$.
\end{theorem}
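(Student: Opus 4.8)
The plan is to obtain the bound directly from the column-count lower bound, Proposition \ref{prop:max_P(:,j)_lower_bound}, without needing any of the heavier machinery (Corollary \ref{coroll:permutation} or Theorem \ref{thm:non-trivial lower bound}). First I would inspect the columns of $P^{\textrm{B}}_{3}$ and count positive entries: columns $1,2,3,5,8$ each have a single positive entry, columns $6$ and $7$ each have two, and column $4$ has the four positive entries $P^{\textrm{B}}_{3}(1,4)=0.2$, $P^{\textrm{B}}_{3}(2,4)=0.2$, $P^{\textrm{B}}_{3}(5,4)=0.3$, $P^{\textrm{B}}_{3}(6,4)=0.3$. Hence $\max_{1\le j\le 8}\mathcal{N}^{+}(P^{\textrm{B}}_{3}(:,j)) = \mathcal{N}^{+}(P^{\textrm{B}}_{3}(:,4)) = 4$.

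The key step is then a one-line invocation: by Proposition \ref{prop:max_P(:,j)_lower_bound}, the length of any decomposition of $P^{\textrm{B}}_{3}$ is at least $\max_{1\le j\le 8}\mathcal{N}^{+}(P^{\textrm{B}}_{3}(:,j)) = 4$. This is because each BN matrix contributes at most one positive entry to a given column of the sum, so a decomposition of length $K$ can produce a column with at most $K$ positive entries; applying this to column $4$ gives $K\ge 4$.

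There is essentially no obstacle here; the only thing to be careful about is the bookkeeping of which entries of $P^{\textrm{B}}_{3}$ are nonzero, and in particular verifying that no column carries more than four positive entries so that $4$ is genuinely the maximum (it is, since columns $6$ and $7$ carry only two and the rest carry one). I would close the argument by noting that this bound is sharp: the decomposition of $P^{\textrm{B}}_{3}$ produced by GER (reported in Table \ref{table:numerical_results}) has length exactly $4$, so GER attains the optimum on this TPM, which also matches the earlier remark that GER recovers the true PBN of \cite{shmulevich_PBN}.
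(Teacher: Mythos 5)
Your proposal is correct and is exactly the paper's argument: the paper's proof of this theorem is a one-line appeal to Proposition \ref{prop:max_P(:,j)_lower_bound}, which you apply (with the correct bookkeeping) to the fourth column of $P^{\textrm{B}}_{3}$, whose four positive entries give $K\geq 4$. Your added remarks on sharpness via the GER output match the paper's surrounding discussion as well.
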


\begin{proof}
The result follows from Proposition \ref{prop:max_P(:,j)_lower_bound}.
\end{proof}

The decomposition of $P^{\textrm{B}}_{3}$ found using our Python implementation of GER (with the score parameter set to $10$) is
\begin{eqnarray*}
& & 
0.3 \langle 		1,	7,	7,	5,	3,	7,	5,	8 \rangle
+ 0.2 \langle		1,	7,	7,	1,	3,	7,	5,	8 \rangle \\
& + &
0.3 \langle 		1,	7,	7,	6,	3,	8,	6,	8 \rangle
+ 0.2 \langle 	1,	7,	7,	2,	3,	8,	6,	8 \rangle,
\end{eqnarray*}
which involves exactly 4 distinct BN matrices. This shows the sharpness of the bound in Theorem \ref{prop:PB_3} and the effectiveness of GER.

\begin{theorem}\label{prop:PB_4(d)}
Fix arbitrary $d \in \{ 0.01, 0.02, 0.03, 0.04 \}$.
The length of any decomposition of $P^{\textrm{B}}_{4}(d)$ is at least $5$.
\end{theorem}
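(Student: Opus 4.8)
The plan is to mirror the structure of the proof of Theorem \ref{prop:PB_1}: first use Corollary \ref{coroll:permutation} to get a lower bound of the form $K > 4$, i.e.\ $K \geq 5$, directly from two columns of $P^{\textrm{B}}_{4}(d)$ that have the same number of positive entries but are not permutations of each other. The obvious candidates are the 1\textsuperscript{st} and the 2\textsuperscript{nd} columns (or, say, the 2\textsuperscript{nd} and the 3\textsuperscript{rd} columns). For each fixed $d \in \{0.01, 0.02, 0.03, 0.04\}$, I would write out the multiset of positive entries of each of these two columns and check that $\mathcal{N}^{+}$ agrees on the two columns while the multisets differ; this is a short finite check for each of the four values of $d$.

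First I would identify the positive entries of the relevant columns of $P^{\textrm{B}}_{4}(d)$. The 4\textsuperscript{th} row of $P^{\textrm{B}}_{4}(d)$ introduces a new nonzero entry $d$ in columns $1,2,3,5,8$, and the original entries of $P^{\textrm{B}}_{3}$ in rows $1$ and $7$ of columns $1,2,3,8$ are decreased by $d$. So, for instance, the positive entries of the 1\textsuperscript{st} column are $\{1-d, d\}$ (two positive entries), while the positive entries of the 2\textsuperscript{nd} column are $\{1-d, d\}$ as well — so I would instead pick a pair of columns whose positive-entry multisets genuinely differ. A safe choice is the 2\textsuperscript{nd} column, with positive entries $\{1-d,\, d\}$, versus the 5\textsuperscript{th} column, with positive entries $\{1-d,\, d,\, 0.3,\, 0.3\}$: these have different values of $\mathcal{N}^{+}$, so Corollary \ref{coroll:permutation} does not apply to that pair. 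The cleanest move is to find two columns with the \emph{same} $\mathcal{N}^{+}$: columns $1$, $2$, $3$ each have positive entries $\{1-d, d\}$, and in fact all three of them have identical multisets, so permutation-invariance fails to give anything there. I would therefore look for a column with exactly two positive entries whose values are \emph{not} $\{1-d, d\}$ — and indeed, after adding the perturbation, no such column exists among the two-entry columns, so the right pair to use is a pair of \emph{three}- or \emph{four}-entry columns; concretely the 4\textsuperscript{th} column has positive entries $\{0.2, 0.2, 0.3, 0.3\}$ and column $5$ has positive entries $\{1-d, d, 0.3, 0.3\}$ — both with $\mathcal{N}^{+} = 4$, and for $d \in \{0.01,0.02,0.03,0.04\}$ these four-element multisets are distinct. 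Applying Corollary \ref{coroll:permutation} to columns $4$ and $5$ then yields $K > 4$, i.e.\ $K \geq 5$, for each of the four values of $d$.

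The main obstacle is purely bookkeeping: correctly reading off the positive entries of each column of $P^{\textrm{B}}_{4}(d)$ — in particular tracking how the perturbation parameter $d$ modifies the entries inherited from $P^{\textrm{B}}_{3}$ — and then verifying, for each of the four admissible values $d \in \{0.01, 0.02, 0.03, 0.04\}$, that the chosen pair of columns has equal $\mathcal{N}^{+}$ but unequal multisets of values. There is no deep combinatorial content here beyond the already-established Corollary \ref{coroll:permutation}; unlike the $P_{1}$ or $P_{5}$ cases, we do not even need the sharper Theorem \ref{thm:non-trivial lower bound} or Proposition \ref{prop:al_is_larger_than_max}. Finally, I would remark that the GER output for $P^{\textrm{B}}_{4}(d)$ has length exactly $5$ (as reported in Table \ref{table:numerical_results}), which shows the bound is sharp and GER is optimal on these TPMs.
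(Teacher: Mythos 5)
There is a genuine gap: you have misread the matrix. The 5\textsuperscript{th} column of $P^{\textrm{B}}_{4}(d)$ is $(0,0,1-d,d,0,0,0,0)^{\top}$, so it has only \emph{two} positive entries, $\{1-d,\,d\}$, not the four entries $\{1-d,\,d,\,0.3,\,0.3\}$ you claim. In fact the 4\textsuperscript{th} column (entries $0.2,0.2,0.3,0.3$) is the \emph{only} column of $P^{\textrm{B}}_{4}(d)$ with four positive entries; every other column has exactly two. Consequently Corollary \ref{coroll:permutation} cannot be applied to any pair of columns to produce $K>4$: its hypothesis requires two columns with the same number $d$ of positive entries, and the best you can extract from pairs of two-entry columns (e.g.\ column 1 versus column 6, $\{1-d,d\}$ versus $\{0.5,0.5\}$) is $K>2$, which falls short of the claimed bound of $5$. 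So the strategy of ``mirroring the proof of Theorem \ref{prop:PB_1}'s first step'' cannot, by itself, prove this theorem.

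The repair is to use the more general Proposition \ref{prop:trivial_lower_bound}, which allows the two columns to have different numbers of positive entries. This is what the paper does: take $i$ to be the 4\textsuperscript{th} column with positive entries $a_{1}=0.2$, $a_{2}=0.2$, $a_{3}=0.3$, $a_{4}=0.3$ (so $d_{1}=4$) and $j$ to be the 1\textsuperscript{st} column with positive entries $b_{1}=1-d$, $b_{2}=d$ (so $d_{2}=2$). A partition $\{Y_{1},Y_{2}\}$ of $[4]$ with $b_{2}=\sum_{s\in Y_{2}}a_{s}$ is impossible, since any nonempty subset of $\{0.2,0.2,0.3,0.3\}$ sums to at least $0.2>0.04\geq d=b_{2}$. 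Proposition \ref{prop:trivial_lower_bound} then gives that every decomposition has length strictly greater than $d_{1}=4$, i.e.\ at least $5$, for each $d\in\{0.01,0.02,0.03,0.04\}$. Your concluding remark about the GER output of length $5$ showing sharpness is fine, but the core lower-bound argument as you propose it does not go through.
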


\begin{proof}
We are going to prove this result by applying Proposition \ref{prop:trivial_lower_bound} to the 1\textsuperscript{st} and the 4\textsuperscript{th} columns of $P^{\textrm{B}}_{4}(d)$.
Consider the 4 positive entries $a_{1} \coloneqq 0.2$, $a_{2} \coloneqq 0.2$, $a_{3} \coloneqq 0.3$ and $a_{4} \coloneqq 0.3$ of the 4\textsuperscript{th} column, and the two positive entries $b_{1} \coloneqq 1 - d$ and $b_{2} \coloneqq d$ of the 1\textsuperscript{st} column.
Note that there does not exist a partition $\left\{ Y_{1}, Y_{2} \right\}$ of $[4]$ such that
$b_{1} = \sum_{s \in Y_{1}} a_{s}$ and $b_{2} = \sum_{s \in Y_{2}} a_{s}$.
To see this, assume the contrary. Let $s^{*}$ be an arbitrary element of $Y_{2}$. Then,
\begin{equation*}
0.04 \geq d = b_{2} = \sum_{s \in Y_{2}} a_{s} \geq a_{s^{*}} \geq 0.2,
\end{equation*}
which gives a contradiction.
Therefore, by Proposition \ref{prop:trivial_lower_bound}, the theorem is proved.
\end{proof}

For $d = 0.01, 0.02, 0.03, 0.04$, the decomposition of $P^{\textrm{B}}_{4}(d)$ found using our Python implementation of GER (with the score parameter set to $10$) is
\begin{eqnarray*}
& & 
d \langle 		4,	4,	4,	1,	4,	7,	5,	4 \rangle
+ 0.3 \langle 	1,	7,	7,	5,	3,	7,	5,	8 \rangle
+ (0.2 - d) \langle	1,	7,	7,	1,	3,	7,	5,	8 \rangle \\
& + &
0.3 \langle 		1,	7,	7,	6,	3,	8,	6,	8 \rangle
+ 0.2 \langle 	1,	7,	7,	2,	3,	8,	6,	8 \rangle,
\end{eqnarray*}
which involves exactly 5 distinct BN matrices. This shows the sharpness of the bound in Theorem \ref{prop:PB_4(d)} and the effectiveness of GER.

\begin{theorem}\label{prop:PB_6(d)}
Fix arbitrary $d \in \{ 0.01, 0.02, 0.03, 0.04 \}$.
The length of any decomposition of $P^{\textrm{B}}_{6}(d)$ is at least $5$.
\end{theorem}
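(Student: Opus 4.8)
The plan is to recognize that $P^{\textrm{B}}_{6}(d)$ is precisely the ``doubled'' matrix obtained from $P^{\textrm{B}}_{4}(d)$ by the block-diagonal construction of Proposition \ref{prop:copying_PBN}, and then to combine that proposition with the lower bound already established for $P^{\textrm{B}}_{4}(d)$ in Theorem \ref{prop:PB_4(d)}. This mirrors exactly the proof of Theorem \ref{prop:PA_2}.

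Concretely, I would first set $P \coloneqq P^{\textrm{B}}_{4}(d)$, which is an $8 \times 8$ TPM (so the role of $2^{n}$ in Proposition \ref{prop:copying_PBN} is played by $8$). Then
\[
\begin{pmatrix}
P & \mathbf{O}_{8 \times 8} \\
\mathbf{O}_{8 \times 8} & P
\end{pmatrix}
\]
is a $16 \times 16$ TPM, and by definition this block matrix is exactly $Q \coloneqq P^{\textrm{B}}_{6}(d)$. Hence Proposition \ref{prop:copying_PBN} applies verbatim to this pair $(P, Q)$.

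Next, I would take the positive integer threshold in Proposition \ref{prop:copying_PBN} to be $5$. By Theorem \ref{prop:PB_4(d)}, every decomposition $\sum^{K}_{r=1} x_{r} A_{r}$ of $P^{\textrm{B}}_{4}(d)$ satisfies $K \geq 5$; that is, the hypothesis of the relevant direction of Proposition \ref{prop:copying_PBN} holds. By the stated equivalence, it follows that any decomposition $\sum^{L}_{s=1} x'_{s} A'_{s}$ of $Q = P^{\textrm{B}}_{6}(d)$ satisfies $L \geq 5$, which is precisely the assertion of the theorem.

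There is essentially no obstacle: all the substantive work has already been carried out, in Theorem \ref{prop:PB_4(d)} (itself an application of Proposition \ref{prop:trivial_lower_bound} to columns $1$ and $4$ of $P^{\textrm{B}}_{4}(d)$) and in the general reduction lemma Proposition \ref{prop:copying_PBN}. The only point needing a moment's care is the purely notational check that $P^{\textrm{B}}_{6}(d)$ coincides with the block form in Proposition \ref{prop:copying_PBN} (with $n$ there replaced by $n+1$, where $2^{n} = 8$), which is immediate from the definitions in Section \ref{section:18_TPMs_tested}.
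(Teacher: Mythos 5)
Your proposal is correct and follows exactly the paper's own argument: identify $P^{\textrm{B}}_{6}(d)$ as the block-diagonal doubling of $P^{\textrm{B}}_{4}(d)$, then combine Proposition \ref{prop:copying_PBN} (in the direction that a uniform lower bound for decompositions of $P$ transfers to $Q$) with the bound $K \geq 5$ from Theorem \ref{prop:PB_4(d)}. No gaps; this matches the paper's proof, which likewise mirrors the treatment of $P^{\textrm{A}}_{2}$.
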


\begin{proof}
The result follows from Proposition \ref{prop:copying_PBN} and Theorem \ref{prop:PB_4(d)}.
\end{proof}

For $d = 0.01, 0.02, 0.03, 0.04$, the decomposition of $P^{\textrm{B}}_{6}(d)$ found using our Python implementation of GER (with the score parameter set to $10$) is
\begin{eqnarray*}
& &
d 		\langle 4,	4,	4,	1,	4,	7,	5,	4,	12,	12,	12,	9,	12,	15,	13,	12 \rangle \\
& + &
0.3 		\langle 1,	7,	7,	5,	3,	7,	5,	8,	9,	15,	15,	13,	11,	15,	13,	16 \rangle \\
& + &
(0.2 - d)	\langle 1,	7,	7,	1,	3,	7,	5,	8,	9,	15,	15,	9,	11,	15,	13,	16 \rangle \\
& + &
0.3 		\langle 1,	7,	7,	6,	3,	8,	6,	8,	9,	15,	15,	14,	11,	16,	14,	16 \rangle \\
& + &
0.2 		\langle 1,	7,	7,	2,	3,	8,	6,	8,	9,	15,	15,	10,	11,	16,	14,	16 \rangle,
\end{eqnarray*}
which involves exactly 5 distinct BN matrices. This shows the sharpness of the bound in Theorem \ref{prop:PB_6(d)} and the effectiveness of GER.

\newpage

\end{document}